\titleformat{\section}{\normalfont\fontsize{12}{12}\bfseries}{\thesection}{1em}{}
\titleformat{\subsection}{\normalfont\normalsize\bfseries}{\thesubsection}{1em}{}
\titleformat{\subsubsection}{\normalfont\normalsize\bfseries}{\thesubsubsection}{1em}{}
\definecolor{mydarkblue}{rgb}{0, 0.08, 0.45}
\def\nomarkerfootnote{\xdef\@thefnmark{}\@footnotetext}
\newcommand{\plaw}{\dbP}
\newcommand{\bplaw}{\bar{\plaw}}
\DeclarePairedDelimiter{\ceil}{\lceil}{\rceil}
\newcommand\barbelow[1]{\stackunder[1.2pt]{$#1$}{\rule{.8ex}{.075ex}}}
\DeclareMathOperator*{\argmax}{arg\,max}
\DeclareMathOperator*{\argmin}{arg\,min}
\newcommand{\be}{{\boldsymbol e}}
\newcommand{\bx}{{\boldsymbol x}}
\newcommand{\bv}{{\boldsymbol v}}
\newcommand{\MFR}{{\mathfrak{R}}}
\newcommand{\ME}{{\mathcal{E}}}
\newcommand{\MH}{{\mathcal{H}}}
\newcommand{\MZ}{{\mathcal{Z}}}
\newcommand{\MG}{{\mathcal{G}}}
\newcommand{\MF}{{\mathcal{F}}}
\newcommand{\MW}{{\mathcal{W}}}
\newcommand{\MD}{{\mathcal{D}}}
\newcommand{\MX}{{\mathcal{X}}}
\newcommand{\MV}{{\mathcal{V}}}
\newcommand{\MU}{{\mathcal{U}}}
\newcommand{\var}{{\mbox{Var}}}
\newcommand{\tsig}{{ {\sigma^{2*}}}}
\DeclarePairedDelimiter\floor{\lfloor}{\rfloor}
\def\dbE{\mathbb{E}}
\def\dbP{\mathbb{P}}
\def\dbR{\mathbb{R}}
\def\bx{{\boldsymbol x}}
\def\bz{{\boldsymbol z}}
\newcommand{\basa}{\begin{assumption}}
\newcommand{\easa}{\end{assumption}}
\newcommand{\bas}{\begin{assum}}
\newcommand{\eas}{\end{assum}}
\def\limP2{\,\mathop{\buildrel \Pi_2\over\longrightarrow\,}}
\def\1{{\bf 1}}
\def\:{\!:\!}
\theoremstyle{plain}
\newtheorem{theorem}{Theorem}[section]
\newtheorem{proposition}{Proposition}
\newtheorem{lemma}{Lemma}
\newtheorem{corollary}{Corollary}
\theoremstyle{definition}
\newtheorem{definition}{Definition}
\newtheorem{assumption}{Assumption}
\theoremstyle{remark}
\newtheorem{remark}{Remark}
\title{\bf\fontsize{17}{14}\selectfont Reweighting Improves Conditional Risk Bounds}
\author{
\fontsize{11}{12}\selectfont Yikai Zhang$^*$
\qquad Jiahe Lin\footnotemark$^*$ 
\qquad Fengpei Li
\qquad Songzhu Zheng \\
\fontsize{11}{11}\selectfont Anant Raj\qquad Anderson Schneider\qquad Yuriy Nevmyvaka
\vspace*{0.3in}\\
\fontsize{11}{12}\selectfont Machine Learning Research, Morgan Stanley}
\date{}
\begin{document}
\maketitle

\nomarkerfootnote{Correspondence to: Yikai Zhang \textlangle\href{mailto:zhangyikai91@gmail.com}{\texttt{zhangyikai91@gmail.com}}\textrangle. $*$: Equal Contribution.}
\nomarkerfootnote{Accepted in Transactions on Machine Learning Research (TMLR)}

\begin{abstract}
\normalsize
    In this work, we study the weighted empirical risk minimization (weighted ERM) schema, in which an additional data-dependent weight function is incorporated when the empirical risk function is being minimized. We show that under a general ``balanceable" Bernstein condition, one can design a weighted ERM estimator to achieve superior performance in certain sub-regions over the one obtained from standard ERM, and the superiority manifests itself through a data-dependent constant term in the error bound. These sub-regions correspond to large-margin ones in classification settings and low-variance ones in heteroscedastic regression settings, respectively. Our findings are supported by evidence from synthetic data experiments.
\end{abstract}

\section{Introduction}\label{sec:intro}
The empirical risk minimization (ERM) schema plays an important role in tackling modern machine learning tasks. Given a set of samples $S_n\triangleq\{\boldsymbol{z}_i\triangleq(\boldsymbol{x}_i,y_i)\}_{i=1}^n$ that are often assumed i.i.d., ERM estimates the target hypothesis $f$ within some hypothesis class $\mathcal{F}$ by minimizing the \textit{empirical risk} based on $S_n$, that is, 
$\widehat f  \triangleq \argmin\nolimits_{f \in \mathcal{F}} \frac{1}{n} \sum_{\bz_i \in S_n} \ell (f, \bz_i)$, where $\ell$ is some loss function. The method is widely adopted due to its ease of use and generalization power.

Let $\bz \triangleq (\boldsymbol{x},y) $ and  $f^* \triangleq  \argmin\nolimits_{f \in \mathcal{F}} \dbE [\ell (f,\bz)]$, the \textit{excess risk} is defined as:
\begin{equation*}
\text{Excess Risk} \triangleq \dbE \ell (\widehat f, \bz) -  \dbE \ell (f^*,\bz).
\end{equation*}
Under suitable conditions, ERM achieves low excess risk with high probability. Specifically, it is known to achieve a minimax optimal rate of $O(n^{-1/2})$ for learning tasks under general settings \citep{vapnik1974theory,talagrand1994sharper,boucheron2005theory}, and an $O(n^{-1})$ fast rate in more restrictive ones \citep{koltchinskii2000rademacher, mendelson2002improving,bartlett2005local,boucheron2005theory},
where matching lower bounds have also been established \citep{massart2006risk,zhivotovskiy2018localization}. 

More recently, alternative risk minimization schema that outperforms ERM on certain sub-regions has been proposed in several works. Notably, \cite{namkoong2017variance} introduces a distributionally robust optimization schema that is provably superior to ERM in scenarios where ERM is susceptible to noise; e.g., when the loss is piecewise linear and the thus under/over-estimation of the slope has a sizable impact to the solution. In~\cite{xu2020towards}, a carefully designed moment penalization function is introduced and it achieves a superior rate than ERM within the large-variance hypothesis class. \cite{bousquet2021fast} introduces a rejection option to achieve fast rate of convergence in the absence of margin conditions; note that these conditions are typically required for establishing fast rate for ERM in settings absent of the rejection option. The aforementioned studies point to different directions of improvement over the standard ERM.

In this work, we leverage the problem-dependent structure to improve upon ERM, which provably outperforms in ``high confidence" regions. Specifically, instead of minimizing directly the empirical loss, a re-weighted version based on some weight function $\omega(\cdot)$ is considered, which leads to an estimator that minimizes the {\em weighted empirical risk} of the following form:
\begin{equation}\label{eqn:defnwERM}
    \widehat{f}_{\text{wERM}} \triangleq \argmin\nolimits_{f\in\mathcal{F}}\frac{1}{n}\sum\nolimits_{i=1}^{n} \omega(\bx_i) \ell(f;\bz_i).
\end{equation}
We show that such an estimator improves the {\em conditional excessive risk}, given by
\begin{equation*}
    \mathbb{E}_{\bz}\Big(\ell(\widehat{f};\bz)-\ell(f^*;\bz)| \{\omega(\bx) > c\}\Big),
\end{equation*}
with the conditioning region characterized by a level set associated with the data-dependent weight. Such an improvement can be of interest to practitioners in settings akin to those considered in the selective learning literature, where the focus is on model outcomes associated with selected sub-regions.

Our approach is inspired by an analysis of a Bernstein-type condition of the form $ \textbf{Var}[h(\bz)] \leq B \dbE [h(\bz)]$  for all $h $ in some function class $ \MH$ \citep[e.g.,][]{bartlett2006empirical}, which is often required for local Rademacher complexity-based analysis \citep{bartlett2005local}. In particular, one way to derive the $O(n^{-1})$ fast rate is to leverage some variance-aware inequalities such as the Bernstein inequality \citep{boucheron2005theory} and Talagrand's inequality \citep{talagrand1994sharper,bartlett2005local}, which gives rise to intermediate results of the form:
\begin{equation}\label{talagrand}
\begin{split}
\dbE_{\bz} [\ell (\widehat f, \bz)-  \ell (f^*,\bz)] \leq  \frac{1}{n} \sum_{\bz_i \in S_n} \{\ell (\widehat f, \bz_i) -\ell (f^*, \bz_i)\} + a \sqrt{\frac{\textbf{Var}[\ell (\widehat f, \bz)-  \ell (f^*,\bz)] }{n}} + \frac{b}{n},
\end{split}
\end{equation}
for some problem-dependent constant $a$ and $b$.  
Under a Bernstein-type condition, the variance term on the right-hand side of inequality~\eqref{talagrand} is replaced by $B \dbE_{\bz} [\ell (\widehat f, \bz)-  \ell (f^*,\bz)]$. 
The multiplier $B$ is usually chosen in a conservative way; for example, it is chosen as the inverse of the minimum margin present in the Massart noise condition \citep{massart2006risk} for classification problems or the uniform upper bound of regression loss in bounded regression settings \citep{boucheron2005theory,bartlett2005local}. 

On the other hand, a vanilla conservative choice of $B$ in $\textbf{Var}[h] \leq B \dbE [h]$ is not necessarily optimal. Under the weighted ERM schema where a carefully designed weighted empirical risk in~\eqref{eqn:defnwERM} is considered, the  Bernstein condition could be ``balanced" as: $\textbf{Var}[\omega(\bx) h(\bx) ] \leq \dbE [\omega(\bx) h(\bx)]$. Crucially, the constant $B$ can be eliminated, which subsequently leads to a tighter bound (up to some problem-dependent constant) in {\em excess risk} in certain sub-regions. Specifically, within the context of classification and heteroscedastic bounded regression settings, these regions can be characterized as the large-margin and the low-variance ones, respectively. More generally, the conclusion is applicable to learning tasks whose loss function satisfies a \textit{Balanceable Bernstein Condition} when the weight function can be designed accordingly.

\paragraph{Contribution} The major contribution of this paper can be summarized as follows. 
\begin{itemize}
\setlength\itemsep{0pt}
\item We consider the weighted ERM schema and investigate its theoretical properties; in particular, we show that it admits a tighter bound on excessive risk up to some problem-dependent constant, conditional on specific sub-regions. The enhanced conditional excessive risk bound also implies an improved unconditional excessive risk bound, provided that the noise satisfies certain conditions. See Table~\ref{tab1} for a comparison. 
\item Empirically, we demonstrate that with a properly designed weight function, one can achieve superior performance in selected regions, respectively for heteroscedastic regression and classification tasks.
\item The proofs of the theorems rely on an alternative version of Theorem 3.3 from \cite{bartlett2005local}, which is based on a relaxed Bernstein-type condition that allows for an $\varepsilon$ additive error. The exact arguments and strategies adopted are technical tools that are of independent interest to the community.
\item 
As a by-product, the $O(1/n)$ fast rate for learning the variance function $\sigma^2(\bx)$ derived in the regression example (Theorem~\ref{Risk_Bounds_Weight_reg}) improves over existing results in \cite{zhang2023risk} that has an $O(1/\sqrt{n})$ rate.
\end{itemize}
\begin{table*}[!bht]
\centering
\scriptsize
\captionsetup[figure]{font=scriptsize}
\renewcommand{\arraystretch}{1.2} 
\begin{tabular}{p{3.5cm}|lp{0.3cm}|lp{0.3cm}}
\toprule
 & \multicolumn{2}{c|}{\textbf{ERM}}  & \multicolumn{2}{c}{\textbf{Weighted-ERM}}  \\ 
\hline
\multicolumn{5}{l}{\textbf{Classification}}\\ 
\arrayrulecolor{lightgray}%
\hline
risk function  & $\ell(f;\bz) \triangleq \mathbbm{1} \{f(\bx) \neq y\}$  & & $\omega \cdot \ell(f;\bz) \triangleq \widehat \omega(\bx)\mathbbm{1} \{f(\bx) \neq y\}$ & \\
\arrayrulecolor{lightgray}%
\cline{1-5}%
general Setting & $\dbE_{\bz} [   \ell(\widehat f;\bz) -  \ell(f^*;\bz)] \leq \frac{ \varepsilon}{\gamma}$ & $\diamond$  &  $\dbE_{\bz} [  \ell(\widehat f;\bz) -  \ell(f^*;\bz)] \leq \frac{ \varepsilon}{\gamma}$ & $\circ$ \\ 
\cline{1-5} 
\begin{tabular}{@{}l@{}} large margin region \\ $\mathbf{R} \triangleq\{\omega^*(\bx) >c\}$\end{tabular} &  $\dbE_{\bz} [  \ell(\widehat f;\bz) -  \ell(f^*;\bz)| \mathbf{R}] \leq \frac{\varepsilon}{ \gamma  } \frac{1}{\plaw(\mathbf{R})}$ & $\diamond$& $\dbE_{\bz} [   \ell(\widehat f;\bz) -  \ell(f^*;\bz)| \mathbf{R}] \leq \frac{\varepsilon}{  c} \frac{1}{\plaw(\mathbf{R})}$ & $\circ$ \\  
\cline{1-5}%
\begin{tabular}{@{}l@{}} low-margin diminishing \\ $(1-\plaw(\mathbf{R}) ) c^2 \leq \varepsilon $\end{tabular} &  $\dbE_{\bz} [   \ell(\widehat f;\bz) -  \ell(f^*;\bz)] \leq \frac{ \varepsilon}{\gamma}$ & $\diamond$ & $\dbE_{\bz} [   \ell(\widehat f;\bz) -  \ell(f^*;\bz)] \leq \frac{ \varepsilon}{c}$ & $\circ$ \\ 
\arrayrulecolor{black}%
\hline
\multicolumn{5}{l}{\textbf{Regression}}\\ 
\arrayrulecolor{lightgray}%
\hline 
risk function & $\ell(f;\bz) \triangleq  (f(\bx)-y)^2$ & & $  \omega \cdot \ell(f;\bz) \triangleq (f(\bx)-y)^2/\widehat{\sigma^2}(\bx)$  & \\ 
\arrayrulecolor{lightgray}%
\cline{1-5}%
general Setting & $\dbE_{\bz} [  \ell(\widehat f;\bz) -  \ell(f^*;\bz)] \leq \varepsilon/\gamma^2$ & $\diamond$  & $\dbE_{\bz} [  \ell(\widehat f;\bz) -  \ell(f^*;\bz)] \leq  \varepsilon/\gamma^2$ & $\circ$  \\ \cline{1-5} 
 \begin{tabular}{@{}l@{}} low variance region \\ $ \mathbf{R}\triangleq\{\tsig(\bx) <1/c\}$  \end{tabular} &   $\dbE_{\bz}[ \ell(\widehat f;\bz) -  \ell(f^*;\bz)| \mathbf{R} ]
\leq\frac{ \varepsilon}{\gamma^2 }\frac{1}{\plaw( \mathbf{R})}$ & $*$ &  $\dbE_{\bz}[ \ell(\widehat f;\bz) -  \ell(f^*;\bz)| \mathbf{R} ] \leq\frac{ \varepsilon}{  \gamma  c}\frac{1}{\plaw(\mathbf{R})}$ & $\circ$\\ 
\cline{1-5}%
 \begin{tabular}{@{}l@{}} large-variance diminishing \\ $(1-\plaw(\mathbf{R}) ) c \leq \varepsilon $  \end{tabular} &   $\dbE_{\bz} [   \ell(\widehat f;\bz) -  \ell(f^*;\bz)] \leq \frac{ \varepsilon}{\gamma^2}$ & $*$ &  $\dbE_{\bz} [   \ell(\widehat f;\bz) -  \ell(f^*;\bz)] \leq \frac{ \varepsilon}{\gamma c}$ & $\circ$\\ 
\arrayrulecolor{black}%
\bottomrule
\end{tabular} 
\caption{\scriptsize Comparison between existing results and ours. Let $\bz \triangleq (\bx,y)$, $\diamond$ represents results that are implied by \citet{massart2006risk}, where the excessive risk bound is minimax optimal, $*$  represents results that are implied by  \citet{bartlett2005local}, $\circ$ represents results in this work.
Throughout the table, all results are derived given sample size $\tilde \Theta(\frac{1}{\varepsilon})$.
In classification task, $f^*(\bx)$ is the Bayes optimal classifier and $\omega^*(\bx)$ is the margin function calculated as $2 \mathbb{P} [y = f^*(\bx) | \bx] -1$. $\widehat\omega(\bx)$ is the approximated margin and $\gamma$ represents the minimum margin.  For the regression task, $f^*(\bx) = \dbE [y|\bx]$, $\tsig(\bx)= \textbf{Var}(y| \bx)$ and $\widehat{\sigma^2}(\bx)$ is an approximation of $\tsig(\bx)$. $1/\gamma$ is taken to be the maximum possible variance. We denote $\plaw(\cdot)$ as the probability mass of a set of events. For the general region, weighted ERM recovers at least the same rate as ERM. For the large-margin region or low-variance region, weighted ERM improves ERM by $(c/\gamma)>1$ on the conditional excessive risk bound. In the case of classification we provide a lower bound construction showing the tightness of the conditional risk (Theorem~\ref{optimal_lower_bound}). The improved conditional excessive risk of weighted-ERM also implies an improved original excessive risk
under low-margin or large-variance diminishing condition. To clarify, it is not necessary for the value of $c$ to be `large.' Instead, having $c$ as a constant, such as $0.1$, is sufficient, especially when $\gamma$ approaches a vanishing value, e.g., when $\gamma$ is of order $\sqrt{\varepsilon}$.
}\label{tab1}
\end{table*}

\section{Related Work}


In this section, we provide a brief literature review of related work. On the theory front, we review some representative results in standard ERM, under both the slow rate and the fast rate regimes; on the methodology front, we establish connections to existing literature where Gaussian maximum likelihood estimation (MLE) is performed in heteroscedastic regression settings, and note that MLE coincides with weighted ERM when $\ell_2$ loss is used with the inverse variance function being the weight. Finally, given the connection between the results established in this work and those in selective learning, we also briefly review results for the case where a rejection option is allowed.


\paragraph{Empirical risk minimization} The combination of VC-tool and Rademacher complexity analysis \citep{vapnik1974theory,koltchinskii2000rademacher,mendelson2002rademacher,boucheron2005theory,bartlett2005local} constitutes one of the most widely-adopted frameworks in deriving risk bounds for ERM. Under the ``slow rate" regime, its generalization error bound is at the rate of $O(n^{-1/2})$, and such rate can be established using a uniform convergence argument \citep{vapnik1974theory,talagrand1994sharper,boucheron2005theory} over an ``unrestricted" function class.
On the other hand, fast rate can be achieved if one moves away from an ``arbitrary" hypothesis \citep{mendelson2002improving,boucheron2005theory,bartlett2005local}. By restricting to a subset of the function class that has small variance and considering the Rademacher averages associated with this small subset (i.e., ``localized") as a complexity term, sharper bounds can be obtained vis-\`{a}-vis the case where global averages are used. In particular, 
under a Bernstein-type condition \citep{massart2006risk}, fast rate up to $O(n^{-1})$ can be obtained in well-specified (or realizable, equivalently) settings where the optimal model lies within the hypothesis class. For binary classification, this condition can be satisfied by imposing Massart or Tsybakov noise condition \citep{mammen1999smooth,tsybakov2004optimal,massart2006risk}.
The same fast rate can be obtained for bounded regression~\citep{bartlett2005local,massart2006risk} or learning problems whose loss function satisfies strong convexity and Lipschitz condition \citep{klochkov2021stability}, or when it is self-concordant \citep{bach2010self} or exp-concave \citep{koren2015fast}.

\paragraph{Maximum likelihood estimation and weighted ERM} The weighted empirical risk minimization has been adopted to tackle several challenges in machine learning including distribution shifting ~\citep{cortes2008sample,cortes2010learning,ge2023maximum}, censored observation \citep{ausset2022empirical} and reinforcement learning~\citep{jiang2016doubly,xie2019towards, min2021variance}. In particular, in heteroscedastic regression settings where the variance  depends on the input, a negative Gaussian likelihood-based formulation coincides with weighted ERM, when the latter uses $\ell_2$ loss and the inverse of the conditional variance as the weight. In particular, in the view of weighted ERM, samples with higher conditional variance (and potentially more noisy) are down-weighted and therefore contribute less to the overall empirical risk. The conditional variance can be estimated using either parametric models \citep{daye2012high, zhang2023risk} or kernel methods \citep{cawley2004heteroscedastic}. Despite the wide applicability of such a formulation \citep{kendall2017uncertainties,lakshminarayanan2017simple,shah2022selective,seitzer2022on}, little has been done in comparing the sample efficiency between estimators based on the weighted and those of the standard ERM. 
In this work, we aim to fill this gap, by explicitly analyzing the sample efficiency in the weighted ERM case and juxtaposing its performance to that in the standard ERM.

\paragraph{Reject option and selective risk} Along a line of work that slightly deviates from the ERM is the learning with rejection option; e.g., Chow’s reject option model \citep{chow1970optimum}. The model allows one to refrain from making predictions on
``hard" instances at the inference stage with some abstention cost; 
better precision is obtained in exchange for lower coverage, and the selective risk is only evaluated on the covered subset.
Such 
a learning schema has
applications in domains such as finance \citep{pidan2011selective} and health care \citep{hanczar2008classification}, and can be generalized to other tasks \citep{mozannar2020consistent}. Extensive work has been done to understand the trade-off between coverage ratio and precision \citep{herbei2006classification,bartlett2008classification,yuan2010classification,noisefree_ro_2010_JMLR,cortes2016learning}, along with the statistical properties~\citep{bousquet2021fast,zhang2023learning} associated with the learning procedure. 
At the conceptual level, weighted ERM can be viewed as a ``soft" counterpart to learning with the rejection option. In particular, instead of adopting a hard ``in-or-out" rule to improve selective risk, performing weighted ERM can lead to similar benefit in a soft manner, with the weight typically coming from plug-in estimates \citep{herbei2006classification,bartlett2008classification,franc2023optimal}. In practice, the weight can be the estimated margin or the inverse variance function, in classification and regression settings, respectively.
\section{A Road Map}\label{sec:roadmap}
\paragraph{Notation} 
We denote vectors by bold-faced letters (e.g., $\boldsymbol{x}$) and scalar variables by lower-case regular ones (e.g., $y$). Let $(\MX,\dbP,\mathcal{B})$ be a probability space, where $\MX$ denotes the sample space, $\dbP$ the probability measure and $\mathcal{B}$ the Borel $\sigma$-field. The expectation with respect to the probability distribution of a random vector $\boldsymbol{x}\in\mathcal{X}$ is denoted by $\mathbb{E}_{\boldsymbol{x}}[\cdot]$; the subscript is omitted whenever there is no ambiguity in the respective context. For a random vector $\bz\triangleq(\bx,y)$ defined on the product sample space $\mathcal Z\triangleq\mathcal X\times \mathcal Y$ (equipped with the corresponding Borel $\sigma$-field), its probability measure is denoted by $\bplaw$. We use $\bz\sim \MD$ to denote that $\bz$ is sampled from data generative process (DGP) $\MD$. Throughout the remainder of this manuscript, we use $\boldsymbol{x}_i$ with a subscript $i$ to denote the $i$th random sample drawn from $\mathcal{X}$; $y_i$ and $\boldsymbol{z}_i$ are analogously defined. We also denote $\MW: \mathcal{X} \mapsto [0,c_1]$ as a hypothesis classes with finite peudo-dimension $d_{P}< \infty$. Finally, we use $\tilde{\Theta}(\cdot)$ and $\tilde{O}(\cdot)$ to denote counterparts of the standard $\Theta(\cdot),O(\cdot)$ notation while suppressing the poly-logarithmic factors; the notation $f\gtrsim g$ means $f\geq cg$ for some universal constant $c$, and $\lesssim$ is analogously defined. We say $g \eqsim f$ if  $ \barbelow{c} g \leq f \leq \bar{c} g $ for some universal constants $\bar{c}$ and $\barbelow{c}$.

\paragraph{Problem setup} Consider a sequence of i.i.d. samples $\boldsymbol{z}_{1:n}\triangleq \{\boldsymbol{z}_i=(\boldsymbol x_i,y_i)\}_{i=1}^{n}$ drawn from sample space $\mathcal{Z}$, with $\boldsymbol{x}_i$'s being the input and $y_i$'s the output; let $f: \mathcal{X}\mapsto\mathcal{Y}, f\in\mathcal{F}$ and $\ell$ be some loss function that is bounded and Lipschitz. In this study, we focus on analyzing the excess risk of the ERM and the weighted ERM estimators (see definitions below) on certain sub-regions.

\begin{definition}[Empirical risk and the ERM estimator]\label{def1} Given $\bz_{1:n}$ and $f\in\mathcal{F}$, for  loss function $\ell: \MF \times \MZ \rightarrow [0,a]$, we define the empirical loss as
\begin{equation*}
    L_{\text{ERM}}(f;\bz_{1:n}) = \tfrac{1}{n}\sum\nolimits_{i=1}^{n} \ell(f;\bz_i);
\end{equation*}
the ERM estimator is given by 
\begin{equation}\label{eqn:ERMest}
    \widehat{f}_{\text{ERM}} \triangleq \argmin_{f\in \MF} L_{\text{ERM}} =  \argmin_{f\in \MF}\tfrac{1}{n}\sum\nolimits_{i=1}^{n}\ell(f;\bz_i).
\end{equation}
\end{definition}
 \begin{definition}[Weighted empirical risk and the weighted ERM estimator]\label{def3} Given $\bz_{1:n}$, $f\in\mathcal{F}$ and some weight function $\omega \in \MW: \mathcal{X} \mapsto [0,c_1]$, for loss function $\ell: \MF \times \MZ \mapsto [0, a]$, we define the weighted empirical loss as
\begin{equation*}
    L_{\text{wERM}}(f;\bz_{1:n}) = \tfrac{1}{n}\sum\nolimits_{i=1}^{n} \omega(\bx_i) \ell(f;\bz_i);
\end{equation*}
the weighted ERM estimator is correspondingly given in the form of
\begin{equation}\label{eqn:wERMest}
    \widehat{f}_{\text{wERM}} \triangleq \argmin_{f\in \MF} L_{\text{wERM}} = \argmin_{f\in \MF} \tfrac{1}{n} \sum\nolimits_{i=1}^n \omega(\bx_i) \ell(f;\bz_i).
\end{equation}
\end{definition}
We provide a brief overview of the main results established in Section~\ref{sec:main}. As briefly mentioned in Section~\ref{sec:intro}, under the weighted ERM schema, an improved bound can be derived by encapsulating the weight function inside the appropriate terms of the Bernstein-type condition, namely, 
\begin{equation*}
    \underbrace{\textbf{Var}[h] \leq B\dbE[h]}_{\text{(vanilla Bernstein-type condition)}} \quad \rightarrow \quad \underbrace{\textbf{Var}[\omega(\bx) h(\bx) ] \leq \dbE [\omega(\bx) h(\bx)]}_{\text{(balanceble Bernstein condition)}}.
\end{equation*}
Such an ``encapsulation" step does not alter the rate of $O(n^{-1})$ in the risk bound; however, the constant on which the bound depends will change, which then leads to a tighter bound for selected regions, as manifested through an improved {\em problem-dependent} constant. This selected region is determined by the ratio $B/(B'\omega(\bx))$; both $B$ and $B'$ potentially depend on some $\gamma$ that characterizes the uniform lower or upper bound of the weight, depending on the setting (classification or regression). Finally, note that a Bernstein-type condition can be satisfied by imposing some margin condition or boundedness, in classification setting with 0-1 loss and heteroscedastic regression settings with $\ell_2$ loss, respectively.

To derive the desired risk bounds, we adopt the following road map and formalize the arguments in Section~\ref{sec:main}.
\begin{itemize}
\setlength\itemsep{0pt}
    \item We first establish the risk bounds of the weighted ERM estimator (in Theorem~\ref{Risk_Bounds_Class} for classification and in Theorem~\ref{Risk_Bounds_Reg} for regression, resp.), assuming that the weight function $\widehat{\omega}(\boldsymbol{x})$ used in the estimation is sufficiently close to ``true" one $\omega^*(\boldsymbol{x})$. Note that $\omega(\cdot)$ coincides with the margin function in the case of classification and the inverse of the variance function in the case of regression.
    \item Subsequently in Theorem~\ref{Risk_Bounds_Weight} (for classification) and Theorem~\ref{Risk_Bounds_Weight_reg} (for regression), we provide the risk bound for $\widehat{\omega}(\boldsymbol{x})$, which shows that the sample complexity for estimating $\omega(\boldsymbol{x})$ is comparable to that of learning the $f^*(\cdot)$. This justifies the aforementioned assumption on $\widehat{\omega}$, in that such an assumption can be operationalized without rendering the estimation ``harder".  
    \item Finally, for the classification case, we additionally show that the established conditional excess risk bounds for the weighted ERM estimator is tight, by providing a lower bound result that matches its {\em conditional} (i.e., in sub-regions) excess risk upper bound. To contrast, the ERM estimator is provably inferior to the weighted one in terms of the lower bound in such sub-regions, despite of its minimax optimality in the {\em entire} region.  

    Note that the above result points to the fact that the weight function (i.e., the margin function) we consider is optimal, as manifested by a minimax optimal conditional excessive risk bound by adopting such a weight function. 
\end{itemize}

Note that our results point to the possibility of leveraging weighted ERM to achieve superior performance in certain regions, provided that the weight function is carefully designed and the region is chosen accordingly. See also results presented in Section~\ref{sec:sim} based on synthetic data experiments that attests to our theoretical claims, as well as how an estimate of the weight function can be readily obtained from data, and the weighted ERM schema be operationalized in two steps.

\section{Main Results}\label{sec:main}

Before stating our main results that are applicable to a more general setting, we first present results for two specific ones, namely, classification under margin condition (Section~\ref{sec:class}) and bounded heteroskedastic regression (Section~\ref{sec:reg}) settings. Our result suggests that weighted ERM can improve the standard ERM error bound by a problem-dependent constant, in regions with high margin in the former and those with low variance in the latter. In Section~\ref{sec:general}, we present our main results for a more general setting; specifically, we show that a similar property for weighted ERM holds, as long as the loss function under consideration satisfies a ``balanceable" Bernstein type condition. 

\subsection{Classification with/without margin condition} \label{sec:class}

We formalize the classification problem considered in this paper, which largely follows from the general setting of Massart/benign label noise~\citep{massart2006risk,hanneke2009theoretical,diakonikolas2019distribution}. 

Let $\mathcal{F}\triangleq \{f: \mathcal{X}\mapsto\{-1,1\}\}$ be a hypothesis class with finite VC-dimension\footnote{The VC dimension
of $\mathcal{F}=\{f:\mathcal{X}\mapsto \{-1,1\}\}$ is the largest integer $d$ such that $S_{\mathcal{F}}(d)=2$, where $S_{\mathcal{F}}(k)$ is the value of the growth function, namely, the largest cardinality $\{(f(x_1),f(x_2),...,f(x_k)): f\in\mathcal{F}\}$ among all $x_1,...,x_k \in \mathcal{X}$ \citep{vapnik1971uniform}.} $d_{\text{VC}}(\mathcal{F}) < \infty$,  $\MG \triangleq \{\eta: \MX \mapsto [0,1]\}$ be a hypothesis class with finite pseudo-dimension\footnote{The pseudo-dimension of $\mathcal{G}=\{g: \mathcal{X}\mapsto[l,u]\}$ is the VC-dimension of the hypothesis class $\mathcal H =\{{h:\mathcal X\times \mathbb R \mapsto\{-1,1\}}~|~h(x,t) = \text{sign}(g(x)-t), g\in \mathcal{G}, t\in\mathbb R, x\in\mathcal{X}\}$ \citep{pollard1990section}.} $d_{P}(\MG) <\infty$, and $\mathcal{W} \triangleq \{\omega: \mathcal{X}\mapsto [\gamma,1]\}$ be some other hypothesis class with some fixed constant $0\leq \gamma \leq 1$. 

The data generative process (DGP) for $\bz\triangleq(\boldsymbol{x}, y)$ can be characterized as follows: 
\begin{equation}\label{ydraw}
\boldsymbol{x}\in \mathcal{X}; \qquad y|\boldsymbol{x} = \begin{cases}
 1 & \text{with probability } \eta^*(\boldsymbol x)\\
-1   & \text{with probability } 1-\eta^*(\boldsymbol x) \\
\end{cases}, \quad \text{where}\quad \eta^*(\boldsymbol x)\triangleq \mathbb P (y=1|\boldsymbol x).
\end{equation}
For the family of DGP described in~\eqref{ydraw}, $\eta^*(\boldsymbol x)$ captures the conditional probability. The label $y$ can be equivalently viewed as satisfying $y|\boldsymbol x\sim 2\text{Ber}(\eta^* (\boldsymbol x))-1$, where $\text{Ber}(p)$ denotes a Bernoulli random variable with success rate $p$. Let $f^*$ be the target hypothesis or the Bayes optimal classifier, defined as
\begin{equation*}
    f^*(\boldsymbol{x}) \triangleq \underset{c\in\{-1,1\}}{\argmax}~\mathbb P(y=c|\boldsymbol x),
\end{equation*}
that is, $f^*(\boldsymbol{x})$ is the value of $c\in\{-1,1\}$ that maximizes the conditional probability, which also satisfies $f^*(\boldsymbol x) = 2\mathbbm{1}\{\eta^*(\boldsymbol x) \geq \frac{1}{2}\} -1$. Let $\omega^*$ be the feature-dependent {\em margin function} given by $\omega^*(\bx) \triangleq 2\ \mathbb{P}(y = f^*(\bx) | \bx) -1$, and note that $\omega^*(\bx) \equiv |\eta^*(\bx) - 1/2|$. Throughout the remainder of this subsection, we consider the well-specified setting \citep{massart2006risk,bousquet2021fast} by assuming $f^*\in\mathcal{F}$, $\eta^*\in\mathcal{G}$ and $\omega^*\in\mathcal{W}$.

We choose $\ell(f,\bz)=\mathbbm{1} \{ f(\bx)\neq y\}$ as the loss function and propose to use the margin function $\omega^*(\bx)$ as the weight to perform a weighted ERM; this leads to a weighted risk of the form $\omega^*(\bx) \mathbbm{1} \{ f(\bx)\neq y\}$. In practice, when $\omega^*(\bx)$ is not available, one can use its estimate while still achieving similar results under mild assumptions. 

We first give an informal overview of the results established. Typically, a standard margin condition~\citep{massart2006risk,bousquet2021fast} requires the minimum margin to satisfy $\gamma \triangleq \inf\nolimits_{\bx} \omega^*(\bx)>0$. 
For standard ERM, Bernstein-type condition ~\citep{bartlett2005local} is satisfied in the form of:
\begin{equation}\label{labelnoise}
\textbf{Var}_{\bz} [ \mathbbm{1}\{y\neq f(\bx)\} - \mathbbm{1}\{y\neq f^*(\bx)\}] \nonumber \leq  (1/\gamma)\cdot \dbE_{\bz} [ (\mathbbm{1}\{y\neq f(\bx)\} - \mathbbm{1}\{y\neq f^*(\bx)\})]
\end{equation}
whereas in the case of weighted ERM, it is satisfied in the following form:
\begin{equation}\label{bernstein_1}
\textbf{Var}_{\bz} [  \omega^*(\bx)(\mathbbm{1}\{y\neq f(\bx)\} - \mathbbm{1}\{y\neq f^*(\bx)\})] \leq \dbE_{\bz} [ \omega^*(\bx) (\mathbbm{1}\{y\neq f(\bx)\} - \mathbbm{1}\{y\neq f^*(\bx)\})].
\end{equation}

Crucially, in the latter case, the $1/\gamma$ factor is removed, which subsequently leads to an improved bound. In particular, Equation~\eqref{bernstein_1} does not require the margin condition $\gamma >0$~\citep{massart2006risk}, i.e., $\gamma$ can be zero; this suggests that
even if the vanilla empirical risk does not satisfy the Bernstein condition, it is still possible to utilize a weight function to establish a Bernstein-type condition. Theorem~\ref{Risk_Bounds_Class} formally states this result. 

 
\begin{theorem}[Risk Bound for the case of Classification]\label{Risk_Bounds_Class}
Suppose that we have $\widehat{\omega}(\cdot) \in \MW$ s.t. $\dbE_{\bx}[(\widehat \omega(\bx) - \omega^*(\bx))^2] \leq \varepsilon$ is satisfied. Let $S_n = \{(\boldsymbol x_i,y_i)\}_{i=1}^{n}$ be i.i.d. samples drawn according to the DGP described in \eqref{ydraw}, and they are independent of the ones used for estimating $\widehat\omega$. Let $\widehat{f}_{\text{wERM}}$ be the weighted ERM estimator as defined in~\eqref{eqn:wERMest}, with the weight substituted by $\widehat{\omega}(\bx_i)$'s. Then the following hold simultaneously with probability at least $1-\delta$ for some $\varepsilon>0, \delta>0$:
\begin{equation}\label{class_main}
\dbE_{\bz} [  {\omega^*} (\bx) ( \ell(\widehat f_{\text{wERM}};\bz) -  \ell( f^*;\bz)) ] \leq \varepsilon, \qquad \dbE_{\bz} [ {\widehat \omega} (\bx) ( \ell(\widehat f_{\text{wERM}};\bz) -  \ell( f^*;\bz)) ] \leq \varepsilon,
\end{equation}
provided that the sample size $n$ satisfies $n \gtrsim \frac{ d_{VC}(\MF) \log(\frac{1}{\varepsilon}) + \log(\frac{1}{\delta})}{\varepsilon}$.
\end{theorem}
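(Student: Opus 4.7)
The proof centers on a \emph{balanceable} Bernstein-type condition for the weighted excess loss $\omega^*(\bx)h_f(\bz)$, where $h_f(\bz)\triangleq\ell(f;\bz)-\ell(f^*;\bz)\in\{-1,0,1\}$. Using the Bayes optimality of $f^*$ together with the DGP \eqref{ydraw}, one verifies by direct conditioning that $\dbE[h_f\mid\bx]=\omega^*(\bx)\mathbbm{1}\{f(\bx)\neq f^*(\bx)\}$ while $\dbE[h_f^2\mid\bx]=\mathbbm{1}\{f(\bx)\neq f^*(\bx)\}$, so that
\begin{equation*}
\textbf{Var}[\omega^*(\bx)h_f(\bz)]\leq\dbE[(\omega^*)^2h_f^2]=\dbE[\omega^* h_f].
\end{equation*}
This is the balanceable condition that removes the $1/\gamma$ multiplier present in the standard classification Bernstein condition and is the source of the improvement in \eqref{class_main}.

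Because the empirical objective uses $\widehat\omega$ rather than $\omega^*$, I next want to verify a relaxed form of the condition for $\widehat\omega h_f$. Writing $\widehat\omega=\omega^*+(\widehat\omega-\omega^*)$ and combining $(a+b)^2\leq 2a^2+2b^2$, Cauchy--Schwarz, and the hypothesis $\dbE[(\widehat\omega-\omega^*)^2]\leq\varepsilon$, I would derive
\begin{equation*}
\textbf{Var}[\widehat\omega h_f]\;\leq\;2\,\dbE[\omega^* h_f]+2\varepsilon,\qquad \big|\dbE[\widehat\omega h_f]-\dbE[\omega^* h_f]\big|\;\leq\;\sqrt{\varepsilon\,\dbE[\mathbbm{1}\{f\neq f^*\}]}.
\end{equation*}
Chaining these two estimates yields a relaxed Bernstein condition of the form $\textbf{Var}[\widehat\omega h_f]\leq C\,\dbE[\widehat\omega h_f]+C'\varepsilon$, which is precisely what the authors flag as needing an $\varepsilon$-additive-error version of the Bartlett--Bousquet--Mendelson machinery.

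With the relaxed Bernstein condition in hand, I would invoke the alternative version of Theorem~3.3 of Bartlett et al.\ (2005) cited in the contribution list: namely, the local-Rademacher peeling/sub-root fixed-point argument modified to tolerate an additive-$\varepsilon$ slack in the variance-to-mean comparison. Applied to the class $\MH=\{\bz\mapsto\widehat\omega(\bx)h_f(\bz):f\in\MF\}$---which is uniformly bounded by $1$ and whose VC-subgraph dimension is controlled by $d_{VC}(\MF)$---this gives uniformly in $f\in\MF$, with probability $1-\delta$,
\begin{equation*}
\dbE[\widehat\omega h_f]\;\leq\;\frac{1}{n}\sum_{i=1}^n\widehat\omega(\bx_i)h_f(\bz_i)+C\!\left(\frac{d_{VC}(\MF)\log(1/\varepsilon)+\log(1/\delta)}{n}+\varepsilon\right).
\end{equation*}
Specializing to $\widehat f_{\text{wERM}}$ and applying the optimality inequality $\tfrac{1}{n}\sum_i\widehat\omega(\bx_i)h_{\widehat f_{\text{wERM}}}(\bz_i)\leq 0$, the sample-size assumption $n\gtrsim(d_{VC}(\MF)\log(1/\varepsilon)+\log(1/\delta))/\varepsilon$ forces the right-hand side to be $\lesssim\varepsilon$, delivering the second inequality in \eqref{class_main}.

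To obtain the first inequality, I would decompose $\dbE[\omega^* h_{\widehat f_{\text{wERM}}}]=\dbE[\widehat\omega h_{\widehat f_{\text{wERM}}}]+\dbE[(\omega^*-\widehat\omega)h_{\widehat f_{\text{wERM}}}]$, bound the cross term by Cauchy--Schwarz using $|h|\leq 1$ and $\dbE[(\widehat\omega-\omega^*)^2]\leq\varepsilon$, and close the loop with AM--GM to absorb the square-root term back into an $O(\varepsilon)$ budget (possibly after constant rescaling of $\varepsilon$). The main obstacle, in my view, is the third step: pushing the Bartlett peeling/sub-root fixed-point argument through under an additive-$\varepsilon$ relaxation of the Bernstein condition while still preserving the $O(1/n)$ fast rate. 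This is exactly the alternative version of Theorem~3.3 that the authors list as a technical contribution, and it is where the careful bookkeeping lives; everything else is essentially plug-and-chug once that tool is in place.
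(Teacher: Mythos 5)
Your high-level strategy matches the paper's: the published proof of this theorem is a one-line invocation of the general Theorem~\ref{main_theorem} after verifying Assumption~\ref{assumption1} (with $\ell(f;\bz)=\mathbbm{1}\{f(\bx)\neq y\}$, $\MD(f_1,f_2,\bx)=\mathbbm{1}\{f_1\neq f_2\}$, $a=b=L=1$), and your outline essentially inlines the proof of Theorem~\ref{main_theorem} for this special case — balanceable Bernstein condition, relaxed $\varepsilon$-additive version because $\widehat\omega\neq\omega^*$, modified local-Rademacher lemma (Lemma~\ref{Bartlett_Lemma_variant}), and a transfer back to the $\omega^*$-weighted risk.

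There is, however, a genuine gap in your chaining step (and the identical issue recurs in your final transfer step). You bound $\big|\dbE[\widehat\omega h_f]-\dbE[\omega^* h_f]\big|\leq\sqrt{\varepsilon\,\dbE[\mathbbm{1}\{f\neq f^*\}]}$ via Cauchy--Schwarz with $|h_f|\leq 1$. But $\dbE[\mathbbm{1}\{f\neq f^*\}]$ can be $\Theta(1)$ uniformly over $\MF$, so this term is $O(\sqrt{\varepsilon})$, not $O(\varepsilon)$: AM--GM cannot absorb it into an $\varepsilon$ budget (it would either leave a $\sqrt{\varepsilon}$ slack, degrading the rate, or require writing $\dbE[\mathbbm{1}\{f\neq f^*\}]\leq\dbE[\omega^* h_f]/\gamma^2$, which re-introduces exactly the $\gamma$-dependence the theorem is designed to remove). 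The relaxed Bernstein condition you then feed into Lemma~\ref{Bartlett_Lemma_variant} is therefore not of the form $\textbf{Var}\leq C\,\dbE + C'\varepsilon$ as stated.

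The paper circumvents this with a sharper algebraic identity: instead of $\widehat\omega^2\leq 2(\omega^*)^2+2(\widehat\omega-\omega^*)^2$, it uses $\widehat\omega^2\leq(\widehat\omega-\omega^*)^2+2\widehat\omega\omega^*$, so that after multiplying by $\MD$ and taking expectations, the cross term $\dbE[\widehat\omega\omega^*\MD]$ \emph{is} the $\widehat\omega$-weighted excess risk itself (via the semi-random noise identity $\dbE_y[h_f\mid\bx]=\omega^*(\bx)\MD$). This yields $\textbf{Var}[\widehat\omega h_f]\leq 2\dbE[\widehat\omega h_f]+\varepsilon$ directly, with no detour through $\dbE[\omega^* h_f]$ and no AM--GM closure. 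The same trick run in reverse, $(\omega^*)^2\leq(\omega^*-\widehat\omega)^2+2\widehat\omega\omega^*$, gives the transfer $\dbE[\omega^* h_{\widehat f}]\leq 2\dbE[\widehat\omega h_{\widehat f}]+\varepsilon$ cleanly. If you prefer to stay within your decomposition, the fix is to keep the factor $\MD=\mathbbm{1}\{f\neq f^*\}$ split across both sides of Cauchy--Schwarz, i.e.\ $\big|\dbE[(\omega^*-\widehat\omega)\,\omega^*\MD]\big|\leq\sqrt{\dbE[(\omega^*-\widehat\omega)^2\MD]}\cdot\sqrt{\dbE[(\omega^*)^2\MD]}\leq\sqrt{\varepsilon\,\dbE[\omega^* h_f]}$; this \emph{does} close under AM--GM at the price of worse universal constants but no $\gamma$-dependence. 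Everything else in your outline — the modified fixed-point lemma, the entropy integral via Haussler's covering bound over $\MF$, the VC-subgraph control, and the ERM optimality $\bplaw_n\Delta\leq 0$ — is in line with the paper's argument.
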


The next theorem shows that there exists a DGP that conforms with the description in~\eqref{ydraw}, such that when the estimation is performed based on samples drawn from it, the risk of the weighted ERM estimator on some sub-region can be arbitrarily close to zero, provided that the sample size grows commensurately; on the other hand, the risk of the ERM estimator on the same region is bounded below by a constant, irrespective of the sample size. Here the sub-region is characterized by a large-margin level set, and the risk on this sub-region can be viewed as the selective risk.

\begin{theorem}\label{lowerbound}
There exists a DGP that belongs to the DGP family satisfying~\eqref{ydraw}, such that under the same assumption as in Theorem~\ref{Risk_Bounds_Class}, when the estimation is performed based on i.i.d. samples $S_n = \{(\bx_i,y_i)\}_{i=1}^{n}$ drawn from it, the following hold simultaneously for the ERM estimator (as per defined in~\eqref{eqn:ERMest}) and weighted ERM estimator (as per defined in~\eqref{eqn:wERMest} with the weight function substituted by $\widehat{\omega}(\bx_i)$'s), with sample size $n=\frac{64 d \log(d) \log(\frac{1}{\delta})}{ \gamma \varepsilon }$:
\begin{itemize}
\setlength\itemsep{0pt}
\item with probability at least $0.1$, $\dbE_{\bx} [ \mathbbm{1}\{ \widehat f_{\text{ERM}} \neq f^*\} | \omega^*(\bx) >\gamma] \geq 0.015$;
\item with probability at least $1-\delta$, $\dbE_{\bx} [ \mathbbm{1}\{ \widehat f_{\text{wERM}} \neq f^*\} | \omega^*(\bx) >\gamma ] \lesssim \varepsilon$.
\end{itemize}
\end{theorem}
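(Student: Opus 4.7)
The plan is to prove Theorem~\ref{lowerbound} via an explicit construction that exhibits the separation between vanilla ERM and weighted ERM on the large-margin region. The DGP will place nearly all of the clean-region mass on a single atom $b$ and distribute the remaining mass across many low-probability noisy atoms $a_1,\ldots,a_d$, so that the aggregated fluctuations on the noisy side can outvote the signal at the single clean atom under vanilla ERM, while the weight $\widehat{\omega}$ in the weighted ERM strongly down-weights those fluctuations and preserves the clean signal.

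Concretely, I would take $\MX = \{b, a_1,\ldots,a_d\}$ with $\dbP(\bx = b) = q$ and $\dbP(\bx = a_i) = (1-q)/d$, where $q \asymp \sqrt{\gamma\varepsilon/(\log(d)\log(1/\delta))}$ is tuned below. I set $y \mid \bx=b$ deterministically to $+1$ (so $\omega^*(b) = 1$) and $y \mid \bx=a_i \sim 2\mathrm{Ber}((1+\gamma)/2) - 1$ (so $\omega^*(a_i) = \gamma$); hence $\{\omega^*(\bx) > \gamma\} = \{b\}$ carries mass $q$. I take $\MF = \{f_S : S \subseteq \{1,\ldots,d\}\}$ with $f_\emptyset \equiv +1$ and, for $S \neq \emptyset$, $f_S(b) = -1$ together with $f_S(a_i) = -1$ iff $i \in S$. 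Then $f^* = f_\emptyset \in \MF$, the class has VC-dimension exactly $d$ (it shatters $\{a_1,\ldots,a_d\}$ through the non-empty $S$), and the key coupling constraint---that every $f \neq f^*$ flips the label at $b$---is precisely what ties the decision at the clean atom to the noisy atoms.

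The weighted-ERM half then follows essentially for free from Theorem~\ref{Risk_Bounds_Class}. A short expectation computation shows that every $f_S$ with $S \neq \emptyset$ has weighted excess risk at least $q$, contributed entirely by the $\omega^*(b) = 1$ term at $b$. Applying Theorem~\ref{Risk_Bounds_Class} at the prescribed $n \gtrsim d\log(d)\log(1/\delta)/(\gamma\varepsilon)$ gives weighted excess risk of $\widehat f_{\text{wERM}}$ at most $\tilde O(d/n) \lesssim \gamma\varepsilon$ with probability at least $1 - \delta$; since the calibration ensures $q \gg \gamma\varepsilon$, this forces $\widehat f_{\text{wERM}} = f^*$, yielding zero conditional error on $\{b\}$ and hence the claimed $\lesssim \varepsilon$.

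For the ERM half I would rewrite the empirical risk gap in closed form, $n[\widehat L_{\text{ERM}}(f_S) - \widehat L_{\text{ERM}}(f^*)] = N_b \mathbbm{1}\{S \neq \emptyset\} + \sum_{i \in S} Y_i$, where $N_b$ is the number of $b$-samples and $Y_i \triangleq \sum_{j:\bx_j = a_i} y_j$. Minimizing over $S \neq \emptyset$ selects $S^* = \{i : Y_i < 0\}$, and ERM deviates from $f^*$ exactly when $\sum_i (Y_i)_- > N_b$ where $(Y_i)_- \triangleq \max(-Y_i,0)$. In the regime $\gamma\sqrt{n/d} \lesssim 1$ (automatic once $\gamma$ is small relative to $\varepsilon/(\log(d)\log(1/\delta))$), standard moment estimates yield $\dbE[(Y_i)_-] \asymp \sqrt{n/d}$, hence $\dbE[\sum_i (Y_i)_-] \asymp \sqrt{nd}$, whereas $\dbE[N_b] = nq$; since $q \ll \sqrt{d/n}$, the mean-level gap strongly favors the failure event, which I would promote to a constant-probability lower bound ($\geq 0.1$) by Hoeffding/Berry--Esseen-type concentration applied to both sides. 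The main technical obstacle is the simultaneous calibration of $q$: it must be large enough to push the weighted-ERM excess risk below $q$ yet small enough that aggregated noisy-side fluctuations overwhelm $N_b$ with constant probability; verifying that the window of admissible $q$ is non-empty in the stated parameter regime, and executing a clean anti-concentration bound for $\sum_i (Y_i)_-$ (a sum of independent but non-centered, non-Gaussian negative parts), is where the bulk of the technical work sits.
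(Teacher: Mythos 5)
Your construction takes a genuinely different route from the paper's. The paper places mass on $d$ coordinate rays in $\mathbb{R}^d$ under the multivariate-threshold class $\{\mathbf{1}^\top\mathrm{sign}(\bx-\beta)\}$, and its ERM-failure argument is localized: on each coordinate a $\gamma$-fraction of the noisy samples nearest the clean atom must outvote it, via the Chernoff-type anti-concentration in Lemma~\ref{Chernoff}, followed by Markov over the $d$ coordinates. You instead collapse to finitely many atoms and a subset-selector class $\{f_S\}$, and the closed form $n[L_{\text{ERM}}(f_S)-L_{\text{ERM}}(f^*)]=N_b\mathbbm{1}\{S\neq\emptyset\}+\sum_{i\in S}Y_i$ together with the global comparison of $\sum_i (Y_i)_-$ against $N_b$ is both correct and more transparent than the paper's threshold bookkeeping; the moment bound $\dbE[(Y_i)_-]\asymp\sqrt{n/d}$ in the drift regime $\gamma\sqrt{n/d}\lesssim1$ and the concentration of both sides are achievable, so the ERM half of your plan is sound in spirit.

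The load-bearing gap is in the wERM half, and it is not merely a technical loose end. You assert that the weighted excess risk of $\widehat f_{\text{wERM}}$ is ``$\tilde{O}(d/n)\lesssim\gamma\varepsilon$.'' Theorem~\ref{Risk_Bounds_Class}/\ref{main_theorem} do not give this. Under the hypothesis that Theorem~\ref{lowerbound} inherits, $\dbE_\bx[(\widehat\omega-\omega^*)^2]\leq\varepsilon$, the proof of Theorem~\ref{main_theorem} passes through the additive step $\dbE_\bx[\widehat\omega^2\MD]\leq 2\dbE_\bx[\widehat\omega\omega^*\MD]+\varepsilon$, so the conclusion $\dbE_{\bz}[\omega^*(\ell(\widehat f_{\text{wERM}})-\ell(f^*))]\leq 3\varepsilon$ cannot be pushed below $\Theta(\varepsilon)$ by enlarging $n$ alone; the $\widehat\omega$-approximation floor dominates. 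Since your conditional error on $\{b\}$ is $\{0,1\}$-valued, the bound $\lesssim\varepsilon$ forces $\widehat f_{\text{wERM}}=f^*$, and since every $f_S\neq f^*$ has $\dbE_\bx[(\omega^*)^2\mathbbm{1}\{f_S\neq f^*\}]\geq q$, you need $q\gtrsim\varepsilon$, not $q\gtrsim\gamma\varepsilon$. But your ERM-failure step needs $q\lesssim\sqrt{d/n}$, and your $(Y_i)_-$ estimate needs $\gamma\lesssim\sqrt{d/n}$; substituting $n=64d\log(d)\log(1/\delta)/(\gamma\varepsilon)$, the latter gives $\gamma\lesssim\varepsilon/(\log d\,\log(1/\delta))$ and hence $\sqrt{d/n}\lesssim\varepsilon/(\log d\,\log(1/\delta))\ll\varepsilon$, so the required window for $q$ is empty. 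The paper's construction dodges this specific conflict because its high-margin region has mass $\Theta(\gamma)$ fixed by the DGP rather than a tunable $q$, so the same floor only costs a factor $1/\gamma$ in the conditional bound. To repair your construction you would need to either strengthen the $\widehat\omega$ hypothesis to $\dbE_\bx[(\widehat\omega-\omega^*)^2]\lesssim\gamma\varepsilon$ (which Theorem~\ref{Risk_Bounds_Weight} at the prescribed $n$ and Corollary~\ref{corollary_class} would in fact supply) or replace the single atom $b$ by a cluster of high-margin atoms carrying total mass $\Theta(\gamma)$, at which point much of the combinatorial simplicity over the paper's construction is given back.
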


\begin{remark}[On the bounds established]
Some discussion of the implications of the bounds established in Theorems~\ref{Risk_Bounds_Class} and~\ref{lowerbound} is provided next. First note that the low/high margin region can be characterized through $\{\bx:  \omega(\boldsymbol{x})<c\}$ (and $\{\bx: \omega(\boldsymbol{x})>c\}$, resp.); the {\em excess risk bound}, given by $\dbE_{\bz} [  \mathbbm{1} \{\widehat f \neq y\} - \mathbbm{1} \{f^* \neq y\}]$, can be further derived for weighted ERM based on~\eqref{class_main}. This enables a direct comparison between the bound of the weighted ERM and that of ERM, depending on the property of $\plaw( \omega^*(\bx) \leq c)$. 
\end{remark}
\begin{itemize}[topsep=0pt]
\setlength\itemsep{0em}
\item 
\textbf{Improved bounds in the large-margin region.} For any $c\in[0,1)$, given large-margin region characterized by $\{\bx: \omega^*(\bx) >c\}$ with $\plaw(\omega^*(\bx) >c) >0$ and 
$\tilde \Theta \big( \frac{1}{ \varepsilon}\big)$ samples, the risk bound of an ERM estimator~\citep[e.g.,][equation~(7)]{massart2006risk} implies the following excess risk within the region:
\begin{equation*}
\begin{split}
 \dbE_{\bz}[   \mathbbm{1} \{ \widehat f_{ERM} (\bx)\neq y\} - &\mathbbm{1} \{f^*(\bx) \neq y\}| \omega^*(\bx) \geq c ]  \leq \frac{\varepsilon}{ \gamma \plaw(\omega^*(\bx) >c)};
\end{split}
\end{equation*}
for the weighted ERM estimator, it achieves
\begin{equation}\label{conditional_excessive}
\begin{split}
\dbE_{\bz} [\mathbbm{1} \{ \widehat f_{wERM} (\bx)\neq y\} - &\mathbbm{1} \{f^*(\bx) \neq y\}| \omega^*(\bx) \geq c ]   \leq \frac{\varepsilon}{c \plaw(\omega^*(\bx) >c)},
\end{split}
\end{equation}
which improves the ERM bound by a factor of $(\gamma/c)$. 
In particular, in Theorem~\ref{lowerbound}, a lower bound construction for the conditional risk on the large-margin region is presented, to demonstrate that there exists a scenario where the ERM estimator fails with constant probability, while the weighted ERM one achieves standard  PAC guarantee~\citep{valiant1984theory}. Note that under Chow's rejection rule \citep{chow1970optimum} which optimally balances the trade-off between coverage and accuracy, the non-reject region coincides precisely with the large-margin region described above.

These analyses establish the benefit of the weighted ERM procedure where data is weighed by the margin function $\omega^*(\bx)$. In particular, its major advantage lies in the region where $c$ is large compared to $\gamma$, and such advantage vanishes as $(\gamma/c) \rightarrow 1$. Later in Theorem~\ref{optimal_lower_bound}, a pathological scenario achieving a matching lower bound is presented, which implies the minimax optimality of the bound presented in~\eqref{conditional_excessive}.
 \item
\textbf{Improved bounds under the ``low-margin diminishing'' condition.} The low-margin diminishing condition holds when there exists $c$ such that $\plaw( \omega^*(\bx) \leq c) c^2 \leq \varepsilon$. One can view the set $\{\bx| \omega^*(\bx) \leq \sqrt{\varepsilon}\}$ as the collection of $\bx$ with ``low margin", whose corresponding $y$'s have high label noise. The condition $\plaw( \omega^*(\bx) \leq c) c^2 \leq \varepsilon$ describes settings where the low margin set has diminishing mass.
Under such a condition, Equation~\eqref{conditional_excessive} implies the following improved excessive risk bound:  
\begin{align} \label{improve}
    \dbE_{\bz} [  \mathbbm{1} \{\widehat f \neq y\} - \mathbbm{1} \{f^* \neq y\} ] \leq \varepsilon/c,
\end{align}
which enjoys improvement of a factor $\gamma/c$ when compared to the excessive risk of ERM. The condition $\plaw( \omega^*(\bx) \leq c) c^2 \leq \varepsilon$  could be satisfied when $c \lesssim \sqrt{\varepsilon}$  and $\plaw( \omega^*(\bx) \leq c)\lesssim 0.1$.  The derivation is presented in Appendix~\ref{sec:improve}.
\item
\textbf{Fast rate with/without margin condition.}
Under standard margin condition $\gamma>0$, based on the  Corollary 3   from~\cite{massart2006risk}, both ERM and weighted ERM achieves a $ \dbE_{\bz} [  {\omega^*} (\bx)  (\mathbbm{1} \{ \widehat f \neq y\} - \mathbbm{1} \{f^* \neq y\}) ] \leq \varepsilon$ excessive risk with $\tilde \Theta \big( \frac{1}{  \varepsilon}\big)$ samples, which is the information-theoretic optimality \citep{massart2006risk,zhivotovskiy2018localization}.
Notably, Theorem~\ref{Risk_Bounds_Class} implies such a result given the fact that $\omega^*(\bx) \geq \gamma$. 

When the standard margin condition is not satisfied, e.g., for cases $\bx\in\MX$ with $\eta^*(\bx) = 0.5$, the corresponding $y$ effectively becomes a pure noise. In this case, a fast rate of $O(1/n)$ cannot be attained due to these ``extremely noisy" data points. However, it is still possible to attain a fast rate with a slight modification of the vanilla empirical risk.
Specifically, in Theorem 2.1 of~\citet{bousquet2021fast} establishes risk bounds that enjoy a fast rate under Chow's rejection option framework \citep{chow1970optimum}, without requiring a margin condition. This is achieved by introducing an ``artificial" margin through the inclusion of a rejection option, which allows for the removal of those ``extremely noisy data points" during both the learning and inference processes. 
Similarly, the weighted-ERM approach follows a similar principle by utilizing a weight function $\omega^*(\bx)$ to down-weigh  such extremely noisy data points with $\omega^*(\bx)=0$. 
\end{itemize}
Both Theorems~\ref{Risk_Bounds_Class} and~\ref{lowerbound} require that the surrogate $\widehat{\omega}$ be reasonably close to $\omega^*$, and such a condition is presented in the form of $\dbE_{\bx}[(\widehat \omega(\bx) - \omega^*(\bx))^2]\leq\varepsilon$. One may wonder if the task of approximating $\omega^*$ is statistically more challenging than learning the classifier itself in terms of sample efficiency. Theorem~\ref{Risk_Bounds_Weight} addresses this concern and suggests that under standard assumptions, the required condition holds with high probability and the sample complexity of learning $\omega^*(\cdot)$ is comparable to that of learning $f^*(\cdot)$.

\begin{theorem}[Risk bound for estimating $\omega^*$]\label{Risk_Bounds_Weight}
Given i.i.d. samples $S'_n = \{(\boldsymbol x_i,y_i)\}_{i=1}^{n}$ drawn from the DGP described in \eqref{ydraw}, let $
\widehat{\eta} = \argmin\nolimits_{\eta\in \MG} \frac{1}{n} \sum_{\bz_i \in S'_n} (\eta(\bx_i)-y_i)^2
$. Then the following holds with probability at least $1-\delta$ for some $\varepsilon>0, \delta>0$:
\begin{align*}
 \dbE_{\bx}[( \widehat \eta(\bx) - \eta^*(\bx))^2] \leq \varepsilon,   
\end{align*}
provided that the sample size $n$ satisfies 
$n\gtrsim \frac{ d_{P}(\MG) \log(\frac{1}{\varepsilon}) + \log(\frac{1}{\delta})}{\varepsilon}$. Further, let $\widehat \omega(\bx) \equiv |\widehat \eta(\bx) - 1/2|$, and the above result further implies that 
\begin{equation*}
\dbE_{\bx}[(\widehat \omega(\bx) - \omega^*(\bx))^2)]\leq\varepsilon. 
\end{equation*}
\end{theorem}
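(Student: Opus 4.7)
The statement splits into two claims, and the second is essentially a free consequence of the first. Since $\widehat\omega(\bx) = |\widehat\eta(\bx) - 1/2|$ and $\omega^*(\bx) = |\eta^*(\bx) - 1/2|$, the reverse triangle inequality yields $|\widehat\omega(\bx) - \omega^*(\bx)| \leq |\widehat\eta(\bx) - \eta^*(\bx)|$ pointwise in $\bx$; squaring and integrating with respect to the marginal of $\bx$ then gives $\dbE_{\bx}[(\widehat\omega - \omega^*)^2] \leq \dbE_{\bx}[(\widehat\eta - \eta^*)^2] \leq \varepsilon$ at once. The bulk of the work is therefore in establishing the first bound.

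\textbf{Fast rate for $\widehat\eta$ via least-squares ERM.} I would treat $\widehat\eta$ as the standard least-squares ERM associated with the loss $\ell_\eta(\bx,y) \triangleq (\eta(\bx) - y)^2$ in the realizable regime $\eta^* \in \MG$, where $\eta^*$ is the regression function. A direct expansion gives the usual excess-loss identity $\dbE[\ell_\eta - \ell_{\eta^*}] = \dbE_{\bx}[(\eta(\bx) - \eta^*(\bx))^2]$, and boundedness of $\eta \in [0,1]$ and $y$ delivers the quadratic Bernstein condition $\mathrm{Var}[\ell_\eta - \ell_{\eta^*}] \leq B \cdot \dbE[\ell_\eta - \ell_{\eta^*}]$ for a universal constant $B$. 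This places us squarely in the fast-rate regime amenable to local Rademacher analysis.

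\textbf{Localization and pseudo-dimension bound.} Next, I would control the local Rademacher complexity of the excess-loss class $\{\ell_\eta - \ell_{\eta^*}: \eta \in \MG\}$ via Lipschitz contraction to $\MG$ itself, and use Pollard's estimate $\log \mathcal{N}(t, \MG, L_2) \lesssim d_P(\MG) \log(1/t)$ to bound the empirical entropy. Solving the induced sub-root fixed-point equation and feeding the resulting critical radius into the $\varepsilon$-relaxed variant of Theorem 3.3 in \citet{bartlett2005local} (which the paper develops as a technical tool) yields
\begin{equation*}
    \dbE_{\bx}\bigl[(\widehat\eta(\bx) - \eta^*(\bx))^2\bigr] \lesssim \frac{d_P(\MG)\log(1/\varepsilon) + \log(1/\delta)}{n}
\end{equation*}
with probability at least $1 - \delta$; the hypothesized sample size then makes the right-hand side at most $\varepsilon$, completing the first bound and, via the contraction above, the theorem.

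\textbf{Main obstacle.} The mechanics are individually standard; the delicate point is the book-keeping that makes the relaxed peeling argument yield a clean $\log(1/\varepsilon)$ in the numerator rather than $\log n$, while respecting the constants introduced by the Lipschitz contraction (which depend on the boundedness of $y$ and $\eta$). Verifying that the pseudo-dimension-based covering bound on $\MG$ transfers through the contraction step without deterioration, and that it plugs cleanly into the $\varepsilon$-slack fixed-point inequality to deliver precisely the rate stated in the theorem, is where the most care is needed.
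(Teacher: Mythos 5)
Your plan matches the paper's proof almost exactly: the reverse triangle inequality for the $\widehat\omega$ step, the excess-loss identity $\dbE[\ell_\eta-\ell_{\eta^*}]=\dbE_\bx[(\eta-\eta^*)^2]$ together with a boundedness-induced Bernstein condition, local Rademacher localization with a pseudo-dimension/Haussler covering bound, and solving the sub-root fixed point to get $r^*\lesssim d_P(\MG)\log(n/d_P(\MG))/n$. The only cosmetic deviations are that the paper invokes the standard Theorem~3.3 of \cite{bartlett2005local} here (the $\varepsilon$-relaxed variant is reserved for Theorem~\ref{main_theorem}, where the plug-in weight introduces additive slack) and that it compares covering numbers of the excess-loss class directly to those of $\MG$ rather than invoking Lipschitz contraction, but these are equivalent routes.
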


\begin{remark}
Both Theorems~\ref{Risk_Bounds_Class} and~\ref{lowerbound} effectively assume the availability of a sufficiently-well estimated weight function $\widehat{\omega}(\cdot)$, and that the subsequent weighted estimation procedure is carried out on samples $S_n$ independent of those used for obtaining $\widehat{\omega}(\cdot)$. In practice, this can be operationalized via sample-splitting; namely, one equally splits the training set into two halves at random, and uses one half for weight estimation and the other half for obtaining $\widehat{f}$. Such a procedure would increase the generalization error bound by a factor of two, in exchange for the problem-dependent constant improvement in the large margin sub-region.
\end{remark}
The following corollary can be derived by combining Theorems~\ref{Risk_Bounds_Class} and~\ref{Risk_Bounds_Weight}, which takes into account all the randomness embedded in the training samples:
\begin{corollary}\label{corollary_class}
Under the assumptions in Theorems~\ref{Risk_Bounds_Class} and~\ref{Risk_Bounds_Weight}, let $\widehat \omega$ be the one defined in Theorem~\ref{Risk_Bounds_Weight}, then the following hold simultaneously with probability at least $1-2\delta$ for some $\varepsilon>0$:
\begin{equation}\label{class_main_2}
\dbE_{\bz} [  {\omega^*} (\bx) ( \ell(\widehat f_{\text{wERM}};\bz) -  \ell( f^*;\bz)) ] \leq \varepsilon, \qquad \dbE_{\bz} [ {\widehat \omega} (\bx) ( \ell(\widehat f_{\text{wERM}};\bz) -  \ell( f^*;\bz)) ] \leq \varepsilon.
\end{equation}
\end{corollary}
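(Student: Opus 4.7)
The plan is to carry out a straightforward two-stage argument that combines Theorems~\ref{Risk_Bounds_Class} and~\ref{Risk_Bounds_Weight} via a union bound, with the sample-splitting setup already articulated in the Remark preceding the corollary. Concretely, I treat the independent samples $S'_n$ (used to form $\widehat\omega$) and $S_n$ (used to form $\widehat{f}_{\text{wERM}}$) as the two sources of randomness, and show that the relevant ``good events'' on each side hold with probability at least $1-\delta$, so both hold jointly with probability at least $1-2\delta$.

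First I would apply Theorem~\ref{Risk_Bounds_Weight} to the sample $S'_n$. This produces an estimator $\widehat\eta$ satisfying $\dbE_{\bx}[(\widehat\eta(\bx)-\eta^*(\bx))^2]\leq \varepsilon$ with probability at least $1-\delta$, provided $n\gtrsim (d_P(\MG)\log(1/\varepsilon)+\log(1/\delta))/\varepsilon$. Setting $\widehat\omega(\bx)\equiv |\widehat\eta(\bx)-1/2|$, the $1$-Lipschitzness of $t\mapsto |t-1/2|$ together with $\omega^*(\bx)\equiv |\eta^*(\bx)-1/2|$ yields $(\widehat\omega(\bx)-\omega^*(\bx))^2\leq (\widehat\eta(\bx)-\eta^*(\bx))^2$ pointwise, hence $\dbE_{\bx}[(\widehat\omega(\bx)-\omega^*(\bx))^2]\leq \varepsilon$ on the same event. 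Call this event $E_1$, so that $\dbP(E_1)\geq 1-\delta$.

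Next, condition on $E_1$. Because $S_n$ is drawn independently of $S'_n$, the conditional distribution of $S_n$ given $\sigma(S'_n)$ is still the original i.i.d. product measure. In particular, on $E_1$ the estimator $\widehat\omega$ (a deterministic function of $S'_n$) deterministically satisfies the hypothesis $\dbE_{\bx}[(\widehat\omega(\bx)-\omega^*(\bx))^2]\leq\varepsilon$ of Theorem~\ref{Risk_Bounds_Class}. Applying that theorem conditionally on $\sigma(S'_n)$, with the same sample-size requirement $n\gtrsim (d_{VC}(\MF)\log(1/\varepsilon)+\log(1/\delta))/\varepsilon$, yields an event $E_2$ (measurable with respect to $\sigma(S_n,S'_n)$) on which both inequalities in~\eqref{class_main_2} hold and such that $\dbP(E_2\mid S'_n)\geq 1-\delta$ on $E_1$.

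Finally, I would conclude by integrating: $\dbP(E_1\cap E_2)=\dbE[\mathbbm{1}_{E_1}\dbP(E_2\mid S'_n)]\geq (1-\delta)\dbP(E_1)\geq (1-\delta)^2\geq 1-2\delta$. On $E_1\cap E_2$, both inequalities in~\eqref{class_main_2} are simultaneously satisfied, which delivers the corollary. The only subtlety that requires care is the conditioning step: one must verify that the event on which Theorem~\ref{Risk_Bounds_Class} applies is measurable and that its probability bound is uniform in $\widehat\omega\in\MW$, which is immediate because the theorem's proof treats $\widehat\omega$ as an arbitrary (but fixed relative to $S_n$) element of $\MW$ satisfying the $L^2$-closeness condition. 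No new empirical process arguments are needed beyond those already encapsulated in the two theorems.
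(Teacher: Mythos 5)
Your argument is correct and is exactly the combination the paper intends: the paper itself states the corollary follows by "combining Theorems~\ref{Risk_Bounds_Class} and~\ref{Risk_Bounds_Weight}" and does not spell out the details, which you have supplied cleanly via the conditioning-on-$S'_n$ step and the inequality $\dbP(E_1\cap E_2)\geq(1-\delta)^2\geq 1-2\delta$. Nothing is missing; the measurability/uniformity point you flag at the end is indeed the only thing worth checking and it holds for the reason you give.
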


Next we present our lower bound results for the conditional excessive risk bound.

\begin{theorem} \label{optimal_lower_bound} 
There exists a set of DGPs that is in accordance with the description in~\eqref{ydraw}, such that for i.i.d. samples $S_n = \{(\bx_i,y_i)\}_{i}^{n}$ drawn from (any) such DGPs with $n\eqsim\frac{1}{\varepsilon}$, the following inequality holds: 
\begin{equation*}
\begin{split}
 \inf\limits_{f \in \MF} \sup\limits_{\MD }c\dbE_{\bz \sim \MD} [   \mathbbm{1} \{  f (\bx)\neq y\} - &\mathbbm{1} \{f^*(\bx) \neq y\}, \omega^*(\bx) \geq c ] \geq \varepsilon.
\end{split}
\end{equation*}
\end{theorem}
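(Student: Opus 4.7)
The plan is to establish the matching lower bound via Le Cam's two-point method, with both DGPs agreeing outside a single atom of margin exactly $c$, so that the entire difficulty is concentrated in the conditioning region $\{\omega^* \geq c\}$. Since $\MF$ has positive VC dimension, there is a point $\bx_0 \in \MX$ on which $\MF$ realizes both labels. For a parameter $p \in (0, 1]$ to be chosen shortly, I define two DGPs $\MD_+$ and $\MD_-$ that share their marginal on $\bx$ (placing mass $p$ on $\bx_0$ and $1-p$ on an auxiliary region where $\omega^* < c$ and the labels are deterministic, so the Bayes classifier is trivially correct there and agrees on that region with some hypothesis in $\MF$). At $\bx_0$, I set $\eta^*_\sigma(\bx_0) = (1+\sigma c)/2$ for $\sigma \in \{+1,-1\}$, which gives $\omega^*_\sigma(\bx_0) = c$ and $f^*_\sigma(\bx_0) = \sigma$.

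Next, I reduce the excess-risk lower bound to a two-hypothesis test. For any (data-dependent) estimator $\hat f$, only the atom $\bx_0$ contributes to $\dbE_{\bz \sim \MD_\sigma}[\mathbbm{1}\{\hat f \neq y\} - \mathbbm{1}\{f^*_\sigma \neq y\},\,\omega^* \geq c]$, and a direct calculation using $\omega^*_\sigma(\bx_0) = c$ shows that this quantity equals $pc \cdot \mathbbm{1}\{\hat f(\bx_0) \neq \sigma\}$. Averaging over $S_n \sim \MD_\sigma^n$ and multiplying by $c$ gives
\[
c\, \dbE_{S_n}\dbE_{\bz}[\cdots] \;=\; pc^2 \cdot \dbP_{\MD_\sigma^n}\!\bigl[\hat f(\bx_0) \neq \sigma\bigr].
\]
Using $\max \geq \text{average}$, this yields
\[
\sup_\sigma c\, \dbE[\cdots] \;\geq\; \tfrac{pc^2}{2}\Bigl(\dbP_{\MD_+^n}[\hat f(\bx_0) = -1] + \dbP_{\MD_-^n}[\hat f(\bx_0) = +1]\Bigr),
\]
which is, up to the multiplicative factor $pc^2$, precisely the error sum of the binary test between $\MD_+^n$ and $\MD_-^n$.

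I then invoke Le Cam's inequality. A direct computation yields $\KL(\MD_+ \| \MD_-) = p\, \KL\bigl(\mathrm{Ber}((1+c)/2) \,\|\, \mathrm{Ber}((1-c)/2)\bigr) \leq C p c^2$ for a universal constant $C$ (using $\log((1+c)/(1-c)) \leq 3c$ on $c \leq 1/2$), so by tensorization $\KL(\MD_+^n \| \MD_-^n) \leq Cnpc^2$. Combining Le Cam with Pinsker, the testing error sum above is at least $1 - \sqrt{Cnpc^2/2}$, which is at least $1/2$ once I pick $p = c_1/(nc^2)$ for a sufficiently small universal $c_1$. With $n \eqsim 1/\varepsilon$ this gives $pc^2 \eqsim \varepsilon$, so $\sup_\sigma c\, \dbE[\cdots] \gtrsim \varepsilon$, which matches the upper bound~\eqref{conditional_excessive} and establishes the theorem.

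The main obstacle is the familiar delicate balance in Le Cam-type constructions: $p$ must be small enough that $\KL(\MD_+ \| \MD_-)$ is bounded (so the two DGPs are genuinely hard to distinguish from $n$ samples) yet large enough that the excess risk incurred at $\bx_0$, namely $pc^2$, is of the target order $\varepsilon$. The scaling $p \eqsim 1/(nc^2)$ resolves this tension and is admissible ($p \in (0,1]$) in the regime $c \gtrsim \sqrt{\varepsilon}$, which is exactly the range in which the matching upper bound~\eqref{conditional_excessive} is non-trivial. Ensuring that both $f^*_+$ and $f^*_-$ lie in $\MF$ is handled by the shattering of $\bx_0$ combined with the freedom to choose the auxiliary region, and no machinery beyond standard information-theoretic lower-bound techniques is required.
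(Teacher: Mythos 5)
Your proof is correct, but it takes a genuinely different and more self-contained route than the paper. The paper's proof is very short: it splits the sample space into a zero-margin region (which contributes nothing to the excess risk) and a constant-margin-$c$ region, and then simply invokes Theorem~4 of \citet{massart2006risk} to lower bound the conditional excess risk on the latter by $\frac{1}{cn}$. Your proof replaces that external citation with an explicit Le Cam two-point construction: a single shattered atom $\bx_0$ of margin exactly $c$ carrying mass $p$, with the two candidate DGPs flipping the Bayes label at $\bx_0$. You then balance $p$ so that $\KL(\MD_+^n\Vert\MD_-^n)\lesssim npc^2$ stays $O(1)$, which forces $p\eqsim 1/(nc^2)$, and the resulting restricted excess risk at $\bx_0$ contributes $\Theta(pc^2)\eqsim 1/n\eqsim\varepsilon$ after multiplying by $c$. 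This is more informative than the paper's proof, as it exposes exactly where the $c$-dependence and the $n\eqsim 1/\varepsilon$ tradeoff come from.

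Two minor corrections. First, your description of the auxiliary region is internally inconsistent: if the labels there are deterministic then $\eta^*\in\{0,1\}$, hence $\omega^*=1\geq c$, contradicting "$\omega^*<c$." You should pick one: either set $\eta^*=1/2$ there (so $\omega^*=0$, exactly as the paper does), in which case the region is excluded from $\{\omega^*\geq c\}$ and also the excess risk there is identically zero, giving you the equality you assert; or, if you keep deterministic labels, the region is inside the conditioning set but contributes a non-negative term, so "equals $pc\cdot\mathbbm 1\{\hat f(\bx_0)\neq\sigma\}$" must become "$\geq$." Either way the argument goes through for a lower bound, but as written it is not self-consistent. Second, the constraint $p\leq 1$ forces $c\gtrsim\sqrt{\varepsilon}$, which you note, but the justification that "this is exactly the range in which the matching upper bound is non-trivial" is not quite right: the upper bound $\varepsilon/(c\,\plaw(\omega^*\geq c))$ is non-trivial whenever $\varepsilon<c\,\plaw(\omega^*\geq c)$, which depends on $\plaw(\omega^*\geq c)$ as well. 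A more accurate remark is that the paper's invocation of Massart's Theorem~4 carries the same implicit regime restriction, since that margin-dependent lower bound is only in force when the margin exceeds roughly $\sqrt{d/n}$.
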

Theorem~\ref{optimal_lower_bound} suggests  that for all $f \in \MF$, there exists $\MD$ where the following holds:
\begin{equation*}
\dbE_{\bz \sim \MD} [   \mathbbm{1} \{  f (\bx)\neq y\} - \mathbbm{1} \{f^*(\bx) \neq y\}| \omega^*(\bx) \geq c ] \geq \frac{\varepsilon}{ c \plaw(\omega^*(\bx) >c)};
\end{equation*}
in other words, there exists a family of data generating process satisfying~\eqref{ydraw}, such that the conditional excessive risk of {\em any} estimator---irrespective of the learning procedure---cannot improve upon the bound established in~\eqref{conditional_excessive}, in the absence of any additional assumptions. The theorem implies that the conditional excessive risk bound of weighted ERM estimator in~\eqref{conditional_excessive} is tight.

\paragraph{Selective inference in practice} While the optimal weight function $\omega^*(\bx)$ is not available in practice, Theorem~\ref{Risk_Bounds_Class} admits an $\varepsilon$-approximation of $\omega^*(\bx)$ in the PAC sense, that is, one can have control over the selective risk of the weighted ERM estimator reasonably well using \textit{any} ``good" estimates of the margin function. Note that using the plug-in estimate is a standard procedure adopted in the literature related to selective classification \citep{herbei2006classification}, where users have the option to abstain from predicting data with high uncertainty or low margin. This is pertinent in scenarios where prioritizing accuracy in the low uncertainty region (conditional risk) takes precedence over accuracy across the entire domain (unconditional risk).
The result in Theorem \ref{Risk_Bounds_Class} suggests the same, that by using a good estimate of the margin to re-weight the empirical risk,  one can improve the conditional risk at the inference stage. 

\subsection{Bounded heteroscedastic regression} \label{sec:reg}

The regression problem considered in this section is formalized next. Let $y\in\mathbb{R}$ be generated according to
\begin{equation}\label{eqn:regrDGP}
y = f^*(\bx) +  \sqrt{\tsig(\bx)}\cdot \xi,~~\bx\in \MX,
\end{equation}
where $f^*(\bx) \triangleq \dbE(y|\bx)$, $\tsig (\boldsymbol{x})\triangleq \textbf{Var}(Y|\boldsymbol{x})$, and $\xi\in(-c_2,c_2)$ is some bounded noise with zero mean and unit variance. $f^*(\bx)$ is effectively the target hypothesis. 

Without loss of generality, let $\mathcal{F}\triangleq \{f: \mathcal{X}\mapsto [-1,1]\}$, $\mathcal{G}\triangleq \{ \sigma^2 :\mathcal{X}\mapsto [c_3,1/\gamma]\}, 0<c_3<1$ be hypothesis classes with finite pseudo-dimensions $d_{P}(\mathcal{F})<\infty$ and $d_P(\mathcal{G})<\infty$, respectively. Note that for the range of $\sigma^2$, we assume that $c_3$ is bounded away from $0$ and therefore the variance is non-vanishing; on the other hand, the upper bound satisfies $1/\gamma\geq 1$ and thus we do not preclude scenarios where the variance dominates the mean function, similar to the settings considered in \cite{zhang2023risk}. Throughout this subsection, we consider the well-specified learning setting, namely, $f^*(\bx)\in \mathcal{F}, \tsig (\bx)  \in \MG$. 
Additionally, let $\mathcal{W} \triangleq \{\omega: \mathcal{X}\mapsto [\gamma,1/c_3]\}$ be some other hypothesis class for the weight function $\omega^*(\bx)$, which satisfies $\omega^*(\bx) \equiv 1/(\sigma^2)^*(\bx)$

To perform weighted ERM, we adopt the mean-squared-error loss given by $\ell(f,\bz) = (y-f(\bx))^2$, and a data-dependent weight that coincides with the inverse of the \textit{variance function} ${\sigma^*}^2(\bx)$; this leads to weighted loss function of the form $\frac{ (y-f(\bx))^2}{\tsig(\bx)}$. Note that in practice, ${\tsig(\bx)}$ is unavailable and one can replace it with 
some estimate, while still achieving similar results, provided that the estimate approximates the truth reasonably well.

We first give a high-level account of the results established. Let $ \tsig(\bx)\leq  1/\gamma$ be the maximum variance as defined in~\cite{zhang2023risk}, then the following holds (see derivation in the appendix):
\begin{equation} \label{reg_bern}
    \textbf{Var}_{\bz} [ (y-f(\bx))^2 -  (y-f^*(\bx))^2]   \leq   (8/\gamma)\cdot \dbE_{\bz} [ (y-f(\bx))^2 - (y-f^*(\bx))^2].
\end{equation}
The expression in~\eqref{reg_bern} suggests that the Bernstein-type condition is satisfied with $B = 4/\gamma$; an analysis similar to Corollary 3.7 in~\cite{bartlett2005local} further leads to the following risk bound $\dbE_{\bx} [ (f^*(\bx) - \widehat f_{\text{ERM}} (\bx))^2] = O\big(\frac{1}{ \gamma ^2n} \big)$. For weighted ERM, $\sigma^*(\bx)$ (or $\omega^*(\bx)\equiv 1/\tsig(\bx)$, equivalently) is introduced to ``balance" the inequality in~\eqref{reg_bern}, resulting in the Bernstein-type condition to hold in the following form:
\begin{equation}\label{bernstein_2}
     \textbf{Var}_{\bz} \bigg[ \frac{ C(y-f(\bx))^2}{\tsig(\bx)} -  \frac{ C(y-f^*(\bx))^2}{\tsig(\bx)} \bigg] \leq  \dbE_{\bz}  \bigg[ \frac{ C(y-f(\bx))^2}{\tsig(\bx)} -  \frac{ C(y-f^*(\bx))^2}{\tsig(\bx)} \bigg],
\end{equation}
where $C=1/2(1+4/c_3)$. Once again, leveraging the results in \cite{bartlett2005local} gives $\dbE_{\bx} [ \frac{1}{\tsig (\bx)}(f^*(\bx) - \widehat f_{\text{wERM}} (\bx))^2] = O\big(\frac{1}{ \gamma n} \big)$, which effectively removes the $1/\gamma$ factor. These statements are formally stated next. 

\begin{theorem}\label{Risk_Bounds_Reg}
Suppose we have $\widehat{\sigma}^2 \in \MG$ s.t. $\dbE_{\bx}[ ( 1/ \widehat{\sigma}^2(\bx) - 1/ \tsig(\bx) )^2] \leq \varepsilon/c_3^2$. Given i.i.d samples $S_n = \{(\boldsymbol x_i,y_i)\}_{i=1}^{n}$ that are drawn according to the DGP and independent of those used for obtaining $\widehat{\sigma}^2$, let $
\widehat f_{\text{wERM}}$ be the weighted ERM estimator defined in Equation~\eqref{eqn:wERMest} with the weight function substituted by $1/\widehat{\sigma}^2(\cdot)$; then the following holds simultaneously with probability at least $1-\delta$ for some $\varepsilon>0,\delta>0$:
\begin{equation*}
\dbE_{\bx} \bigg[ \frac{1}{\tsig(\bx) } (\widehat f_{\text{wERM}}(\bx) - f^*(\bx))^2 \bigg] \leq \varepsilon \quad \text{and} \quad \dbE_{\bx} \bigg[ \frac{1}{\widehat{\sigma}^2(\bx) } (\widehat f_{\text{wERM}}(\bx) - f^*(\bx))^2 \bigg] \leq \varepsilon,
\end{equation*}
provided that the sample size $n$ satisfies
$n \gtrsim \frac{  d(\MF)( \log(\frac{1}{\varepsilon}) + \log (1/\gamma)- \log (c_3)+ \log(\frac{1}{\delta}))}{ \gamma  \varepsilon c_3^2}$.


\end{theorem}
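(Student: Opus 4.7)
The plan is to follow the local Rademacher complexity framework of \citet{bartlett2005local} applied to the weighted loss class, while carefully accounting for the discrepancy between the estimated weight $1/\widehat{\sigma}^2$ used in the empirical objective and the ideal weight $1/\tsig$. First I would define the excess weighted loss $\ell_f(\bz) \triangleq ((y-f(\bx))^2 - (y-f^*(\bx))^2)/\tsig(\bx)$ and verify that $\{\ell_f : f\in\MF\}$ satisfies the Bernstein-type condition of Equation~\eqref{bernstein_2}. The key identity $(y-f)^2 - (y-f^*)^2 = (f^*-f)(2y-f-f^*)$, together with $\dbE[y|\bx]=f^*(\bx)$, gives $\dbE[\ell_f] = \dbE_{\bx}[(f-f^*)^2/\tsig(\bx)]$; using the boundedness of $f,f^*,\xi$ and $\tsig \in [c_3,1/\gamma]$, a direct second-moment computation yields $\textbf{Var}(\ell_f) \le B \dbE[\ell_f]$ with a constant $B = O(1)$, i.e.\ the original $1/\gamma$ factor absorbed by the unweighted Bernstein inequality \eqref{reg_bern} has been absorbed into the weight function.

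Next, since $\widehat f_{\text{wERM}}$ minimizes the empirical risk computed with $1/\widehat{\sigma}^2$ rather than $1/\tsig$, I would introduce the perturbed excess loss class $\widehat{\ell}_f(\bz) \triangleq ((y-f(\bx))^2 - (y-f^*(\bx))^2)/\widehat{\sigma}^2(\bx)$ and bound $|\dbE[\widehat{\ell}_f - \ell_f]|$ and $|\textbf{Var}(\widehat{\ell}_f) - \textbf{Var}(\ell_f)|$ uniformly over $f \in \MF$. Since $(y-f(\bx))^2$ is bounded, Cauchy--Schwarz applied with the assumption $\dbE_{\bx}[(1/\widehat{\sigma}^2 - 1/\tsig)^2] \leq \varepsilon/c_3^2$ gives a population-level discrepancy of order $\sqrt{\varepsilon}$, so that $\widehat{\ell}_f$ satisfies a \emph{relaxed} Bernstein-type condition $\textbf{Var}(\widehat{\ell}_f) \le \dbE[\widehat{\ell}_f] + \varepsilon$.

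I would then invoke the alternative version of Theorem~3.3 of \citet{bartlett2005local} alluded to in the contribution list (the one permitting an additive $\varepsilon$ error in the Bernstein condition) applied to $\{\widehat{\ell}_f : f\in\MF\}$. The fixed point of the local Rademacher complexity of $\MF$ scales as $d_P(\MF)\log n/(\gamma c_3^2 n)$ because the weighted loss has $L^\infty$ scale $1/(\gamma c_3^2)$; combined with the stated sample complexity this yields, with probability at least $1-\delta$, $\dbE_{\bx}[(\widehat f_{\text{wERM}}-f^*)^2/\widehat{\sigma}^2(\bx)] \lesssim \varepsilon$. To obtain the companion bound for the $1/\tsig$ weight, I would swap the weight and absorb the error $|1/\widehat{\sigma}^2 - 1/\tsig|$ using the same Cauchy--Schwarz/H\"older step as above, combined with the fact that $(\widehat f_{\text{wERM}}-f^*)^2$ is itself bounded.

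The main obstacle is the careful accounting in the second step: the Bernstein condition and the Talagrand-type concentration inequality that drives the localization argument must both be preserved when passing from the deterministic weight $1/\tsig$ to the random, sample-dependent weight $1/\widehat{\sigma}^2$. The relaxed version of Bartlett's theorem, which tolerates an additive $\varepsilon$ slack without degrading the $O(1/n)$ rate, is what allows the perturbation to be absorbed; independence of $\widehat{\sigma}^2$ from $S_n$ (ensured by the sample-splitting assumption stated in the theorem) is essential for conditioning arguments when controlling the empirical discrepancy.
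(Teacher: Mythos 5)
Your overall roadmap---verify a balanceable Bernstein condition for the weighted excess-loss class, then apply the $\varepsilon$-relaxed version of Theorem~3.3 of \citet{bartlett2005local} (Lemma~\ref{Bartlett_Lemma_variant}), using sample-splitting independence to justify conditioning on $\widehat{\sigma}^2$---matches the paper's, which simply specializes Theorem~\ref{main_theorem} to $\ell(f;\bz)=(y-f(\bx))^2$, $\omega^*(\bx)=C/\tsig(\bx)$, $\MD(f^*,\widehat f,\bx)=\tsig(\bx)(f^*-\widehat f)^2/C$. However, there is a genuine gap in how you pass from the true weight to the estimated one.

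You propose to first verify the Bernstein condition for $\ell_f$ (with weight $1/\tsig$), then bound $\bigl|\dbE[\widehat\ell_f-\ell_f]\bigr|$ and $\bigl|\textbf{Var}(\widehat\ell_f)-\textbf{Var}(\ell_f)\bigr|$ by Cauchy--Schwarz. Since the hypothesis gives only $\dbE_{\bx}[(1/\widehat\sigma^2-1/\tsig)^2]\leq\varepsilon/c_3^2$, Cauchy--Schwarz against the bounded excess loss yields a perturbation of order $\sqrt{\varepsilon}$, as you note yourself. But you then assert the relaxed Bernstein condition holds with additive slack $\varepsilon$, which does not follow: the slack you can actually justify is $\sqrt{\varepsilon}$, and feeding a $\sqrt{\varepsilon}$ slack into Lemma~\ref{Bartlett_Lemma_variant} produces a terminal error of $\mathrm{const}\cdot\sqrt{\varepsilon}$, not $\varepsilon$. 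Since $\sqrt{\varepsilon}\gg\varepsilon$ for small $\varepsilon$, your route proves only $\dbE_{\bx}[(\widehat f-f^*)^2/\tsig(\bx)]\lesssim\sqrt{\varepsilon}$, a strictly weaker conclusion than the theorem's $\leq\varepsilon$.

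The paper sidesteps this by never detouring through the true-weight loss. In the proof of Theorem~\ref{main_theorem} the Bernstein condition is established \emph{directly} for $\Delta_{\ell,\widehat f}=\widehat\omega(\bx)(\ell(f;\bz)-\ell(f^*;\bz))$: the semi-random-noise assumption gives the exact identities $\dbE[\Delta_{\ell,\widehat f}]=\dbE_{\bx}[\widehat\omega\,\omega^*\,\MD]$ and $\dbE[\Delta_{\ell,\widehat f}^2]\leq\dbE_{\bx}[\widehat\omega^2\,\MD]$, so the slack needed is
\begin{equation*}
\dbE_{\bx}[\widehat\omega^2\MD]-2\dbE_{\bx}[\widehat\omega\,\omega^*\,\MD]
=\dbE_{\bx}\bigl[\bigl((\widehat\omega-\omega^*)^2-{\omega^*}^2\bigr)\MD\bigr]
\leq\dbE_{\bx}\bigl[(\widehat\omega-\omega^*)^2\MD\bigr]
\leq b\,\dbE_{\bx}\bigl[(\widehat\omega-\omega^*)^2\bigr]\leq\varepsilon.
\end{equation*}
The weight error enters \emph{quadratically}, so the uniform bound on $\MD$ (boundedness, no Cauchy--Schwarz) suffices to produce an $\varepsilon$ slack. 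The same trick (expand $\widehat\omega^2$ or ${\omega^*}^2$ around $\widehat\omega\omega^*$) is what gives the companion $1/\tsig$ bound, again with an $\varepsilon$ rather than $\sqrt{\varepsilon}$ loss. Replacing your Cauchy--Schwarz step with this direct variance-to-mean comparison for the estimated-weight class repairs the argument and recovers the paper's rate; the remaining localization and entropy-integral bookkeeping you describe is then sound.
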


\begin{remark}
The risk bounds in Theorem~\ref{Risk_Bounds_Reg} could be made analogous to those in Theorem~\ref{Risk_Bounds_Class}. A standard analysis using techniques in ~\cite{bartlett2005local} implies that the ERM estimator achieves $\dbE_{\bx} \big[ (\widehat f_{\text{ERM}}(\bx) - f^*(\bx))^2 \big] \leq \varepsilon$ using $\tilde \Theta \big(\frac{1}{\gamma^2 \varepsilon }\big)$ samples whereas the weighted ERM one achieves  $\dbE_{\bx} \big[ \frac{1}{\tsig(\bx) } (\widehat f_{\text{wERM}}(\bx) - f^*(\bx))^2 \big] \leq \varepsilon$ with $\tilde \Theta \big(\frac{1}{\gamma \varepsilon  }\big)$ samples. Similar to the conclusions in the classification task, weighted ERM achieves sample efficiency at least comparable to ERM in the general region, and is superior in the small-variance region as depicted by $\tsig(\bx) \leq 1/c$, with a problem-dependent constant $\gamma/c$ improvement. Additionally, by using the negative log-likelihood loss,  the sample complexity of learning $\tsig(\cdot)$ is comparable to that of learning $f^*(\cdot)$, as presented next in Theorem~\ref{Risk_Bounds_Weight_reg}. 
\end{remark}

Next we provide some guarantees for learning the $\sigma^2(\bx)$ function. Here we seek to obtain $\widehat{\sigma}^2$ by minimizing the negative log-likelihood loss, while restricting $(\mu,\sigma^2)$ to be in the hypothesis class $\widetilde \MF \times \widetilde\MG, \widetilde\MF \subseteq \MF, \widetilde \MG \subseteq \MG$, so that the normalized residual square is bounded: $\frac{ (f^*(\bx)- \mu(\bx)+\rho \cdot \sqrt{{\sigma^2}^{*}(\bx)})^2}{ \sigma^2(\bx)} \leq 4c_2^2, \rho \in (-c_2,c_2),\forall \bx \in \MX$.

\begin{theorem}[Risk bound for estimating $\tsig$]\label{Risk_Bounds_Weight_reg}
Let  $S'_n = \{(\boldsymbol x_i,y_i)\}_{i=1}^{n}$ be i.i.d. samples drawn according to the DGP in~\eqref{eqn:regrDGP} and 
\begin{equation*}
    \big(\widehat \mu, \widehat{\sigma}^2\big) \triangleq \argmin\nolimits_{ (\mu,\sigma^2)\in  \widetilde \MF\times \widetilde\MG} \frac{1}{n} \sum_{(\bx_i,y_i) \in S'_n} \Big[\log(\sigma^2(\bx_i)) + \frac{(y_i-\mu( \bx_i))^2}{\sigma^2(\bx_i)}\Big],
\end{equation*}
Then for any $\varepsilon>0,\delta>0$, the following holds with probability at least $1-\delta$:
\begin{equation*}
\dbE_{\bx}\bigg[ \bigg( \frac{1}{ \widehat{\sigma}^2(\bx)} - \frac{1}{ \tsig(\bx)} \bigg)^2\bigg] \leq \varepsilon,
\end{equation*}
provided that the sample size satisfies
\begin{equation*}
n\gtrsim \frac{\mathcal{T}_1\mathcal{T}^3_2\big((d_{P}( \MG) + d_{P}(\MF) (\mathcal{T}_3 + \mathcal{T}_4 + \mathcal{T}_5)\big)}{c_3^2 \varepsilon},
\end{equation*}
where $\mathcal{T}_1=(1+c^2_2+1/c^2_3)$, $\mathcal{T}_2=(c_2^2+\log^2(1/\gamma))$, $\mathcal{T}_3=\log (1/c_3^2 + c_2^2/c_3^2+c_2^2/c_3^2\gamma)$, $\mathcal{T}_4=\log(\frac{1}{\varepsilon})$, $\mathcal{T}_5=\log(\frac{1}{\delta})$.
\end{theorem}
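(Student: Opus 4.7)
The plan is to cast the MLE as empirical risk minimization for the Gaussian negative log-likelihood loss $\ell(\mu,\sigma^2;\bz)\triangleq \log\sigma^2(\bx)+(y-\mu(\bx))^2/\sigma^2(\bx)$, to bound the target mean squared error of $1/\sigma^2$ from below by the excess population NLL risk, and then to control that excess risk via a uniform-convergence argument over $\widetilde{\mathcal{F}}\times\widetilde{\mathcal{G}}$ with the paper's relaxed version of Theorem~3.3 of \cite{bartlett2005local}. Inverting the resulting fast-rate bound will read off the required $n$.

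First, the DGP $y=f^*(\bx)+\sqrt{\tsig(\bx)}\,\xi$ with $\dbE[\xi]=0$ and $\textbf{Var}(\xi)=1$ gives $\dbE[(y-\mu(\bx))^2\mid \bx]=\tsig(\bx)+(f^*(\bx)-\mu(\bx))^2$, so
\begin{equation*}
L(\mu,\sigma^2)-L(f^*,\tsig)=\dbE_\bx\!\left[-\log\tfrac{\tsig(\bx)}{\sigma^2(\bx)}+\tfrac{\tsig(\bx)}{\sigma^2(\bx)}-1+\tfrac{(f^*(\bx)-\mu(\bx))^2}{\sigma^2(\bx)}\right],
\end{equation*}
and both bracketed terms are non-negative. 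Since $\sigma^2,\tsig\in[c_3,1/\gamma]$, the ratio $u=\tsig(\bx)/\sigma^2(\bx)$ lies in a compact interval on which the elementary inequality $-\log u+u-1\gtrsim (u-1)^2$ holds; rewriting $(u-1)^2=\tsig^2(\bx)(1/\sigma^2(\bx)-1/\tsig(\bx))^2$ and using $\tsig(\bx)\geq c_3$ yields $L(\mu,\sigma^2)-L(f^*,\tsig)\gtrsim c_3^2\,\dbE_\bx[(1/\sigma^2(\bx)-1/\tsig(\bx))^2]$. Hence it suffices to show that the MLE achieves excess NLL risk $O(c_3^2\varepsilon)$ with probability at least $1-\delta$.

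For the empirical-process step, I would use the feasibility constraint $(f^*(\bx)-\mu(\bx)+\rho\sqrt{\tsig(\bx)})^2/\sigma^2(\bx)\leq 4c_2^2$ (with $\rho=\xi$ a.s.) together with $\log\sigma^2(\bx)\in[\log c_3,\log(1/\gamma)]$ to bound the loss uniformly by $O(c_2^2+\log(1/\gamma))$, which accounts for the $\mathcal{T}_2$ factors. The pseudo-dimension of the resulting loss class is controlled by $d_P(\widetilde{\mathcal{F}})+d_P(\widetilde{\mathcal{G}})\leq d_P(\mathcal{F})+d_P(\mathcal{G})$ via standard composition rules. To obtain the $1/n$ fast rate I would establish a Bernstein-type inequality for the excess NLL loss---controlling its second moment by its expectation up to an additive $O(\varepsilon)$ slack---and feed it into the paper's relaxed Theorem~3.3 variant; the $\mathcal{T}_1$ term absorbs Bernstein/envelope ratios, $\mathcal{T}_3$ arises from the logarithm of the supremum of $\ell$, and $\mathcal{T}_4,\mathcal{T}_5$ are the standard $\log(1/\varepsilon)$ and $\log(1/\delta)$ factors from the tail and the $\varepsilon$-net.

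The main technical obstacle is verifying this relaxed Bernstein condition for the NLL excess loss. Unlike the bounded squared loss used in Theorem~\ref{Risk_Bounds_Reg}, the NLL contains a $1/\sigma^2$ factor and the variance-mismatch term $-\log u+u-1$ is only locally quadratic, so the variance is not globally dominated by the mean; this is precisely why the paper must introduce a relaxed version of Bartlett--Bousquet--Mendelson that tolerates an additive $\varepsilon$ slack. My approach would be to split $\ell(\mu,\sigma^2;\bz)-\ell(f^*,\tsig;\bz)$ into a mean-mismatch piece $(f^*-\mu)^2/\sigma^2+2\sqrt{\tsig}\,\xi(f^*-\mu)/\sigma^2$, whose variance is controlled by $(f^*-\mu)^2/\sigma^2$ using the boundedness of $\xi$ and of $1/\sigma^2$, and a variance-mismatch piece $-\log u+u-1$, whose variance is controlled by $(u-1)^2$ on the compact range of $u$; the $\mathcal{T}_2^3$ dependence in the final complexity tracks three separate appearances of the loss range (envelope, Bernstein constant, and variance term of the chaining integral), while the $1/c_3^2$ factor traces back to the lower-bound step of the preceding paragraph.
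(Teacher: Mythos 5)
Your high-level plan matches the paper's: cast the MLE as ERM with the Gaussian NLL loss, lower-bound $\dbE[(1/\widehat{\sigma}^2-1/\tsig)^2]$ by a constant multiple of the excess NLL risk, split the excess loss into a mean-mismatch piece and a variance-mismatch piece $-\log u+u-1$ with $u=\tsig/\sigma^2$, verify a Bernstein condition, and feed it into the local-Rademacher machinery with Haussler's covering bound. The final conversion step via $\log(1/t)\geq 1-t+(t-1)^2/(2\max(1,t^2))$ and $(u-1)^2=\tsig^2(1/\sigma^2-1/\tsig)^2\geq c_3^2(1/\sigma^2-1/\tsig)^2$ is exactly the paper's.

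However, your explanation of the ``main technical obstacle'' is mistaken in a way that, if carried through, would lead you down the wrong path. You claim that because $-\log u+u-1$ is only locally quadratic, ``the variance is not globally dominated by the mean,'' and that this is why the paper introduces the relaxed Bernstein condition $\textbf{Var}[h]\leq B\dbE[h]+\varepsilon$. Both halves of this are wrong. First, since $\sigma^2,\tsig\in[c_3,1/\gamma]$, the ratio $u=\tsig/\sigma^2$ lives in the compact interval $[\gamma c_3,\,1/(\gamma c_3)]$, and the elementary inequality $\max\{2,2u\}\,(-\log u-1+u)\geq (u-1)^2$ holds for all $u>0$; combined with the boundedness of $\max(1,u)\leq 2c_2^2+4/c_3^2$ (which follows from the feasibility constraint $\xi^2\tsig/\sigma^2\leq 4c_2^2$), this yields a genuine global Bernstein condition $\textbf{Var}_{\bz}[\Delta\ell_{\text{NLL}}]\lesssim(C+c_2^2+1/c_3^2)\dbE_{\bz}[\Delta\ell_{\text{NLL}}]$ with no additive slack. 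The paper's proof of this theorem invokes the \emph{standard} Theorem~3.3 of \cite{bartlett2005local}, not the relaxed variant. Second, the relaxed Lemma with $\varepsilon$-slack (Lemma~\ref{Bartlett_Lemma_variant}) exists for a different reason: it is needed in Theorem~\ref{main_theorem} for the weighted ERM step, where the weight $\widehat\omega$ used in the empirical objective is only an approximation of $\omega^*$, so the Bernstein condition for the $\widehat\omega$-weighted excess loss acquires an additive error of order $\dbE[(\widehat\omega-\omega^*)^2]$. Here in Theorem~\ref{Risk_Bounds_Weight_reg} there is no such mismatch---the NLL objective is compared directly against the truth $(f^*,\tsig)$, both of which lie in the (well-specified) hypothesis class---so the exact Bernstein condition is available. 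If you try to route this proof through the relaxed lemma, you will have trouble identifying a natural source for the $\varepsilon$ slack and will pay an unnecessary cost in constants; the clean path is to verify the exact condition via the two elementary log-inequalities above.
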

Note that the PAC guarantee for learning $\widehat{\sigma}^2(\bx)$ has been studied in \cite{zhang2023risk} which admits a bound at the rate of $\tilde O\big( \frac{1}{\sqrt{n} }\big)$, whereas the bound in Theorem~\ref{Risk_Bounds_Weight_reg} is of the order $\tilde O\big(\frac{1}{ n}\big)$. See Appendix~\ref{sec:reg_variance_bound} for a discussion that compares the two and the proof for the above theorem.

The following corollary combines Theorems~\ref{Risk_Bounds_Reg}~and~\ref{Risk_Bounds_Weight_reg} and takes into account all the randomness embedded in the samples:
\begin{corollary}\label{corollary_reg}
Under the setting considered in Theorem~\ref{Risk_Bounds_Reg} and Theorem~\ref{Risk_Bounds_Weight_reg}, with $\widehat{\sigma}^2(\bx)$ be the one defined in Theorem~\ref{Risk_Bounds_Weight_reg}, the following hold simultaneously with probability at least $1-2\delta$:
\begin{equation*}
\quad\dbE_{\bx} \bigg[ \frac{1}{\tsig(\bx) } (\widehat f(\bx) - f^*(\bx))^2 \bigg] \leq \varepsilon \quad \text{and} \quad \dbE_{\bx} \bigg[ \frac{1}{\widehat{\sigma}^2(\bx) } (\widehat f(\bx) - f^*(\bx))^2 \bigg] \leq \varepsilon.
\end{equation*}
\end{corollary}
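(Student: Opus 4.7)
The plan is to combine the two preceding theorems via sample splitting and a union bound, treating $\widehat{\sigma}^2$ as a deterministic function of an independent sample batch so that the preconditions of Theorem~\ref{Risk_Bounds_Reg} become verifiable from Theorem~\ref{Risk_Bounds_Weight_reg}. Concretely, I partition the training data into two independent halves $S'_n$ and $S_n$; the first is used to fit $\widehat{\sigma}^2$ by the negative log-likelihood MLE, and the second is used to compute $\widehat{f}_{\text{wERM}}$ with weights $1/\widehat{\sigma}^2(\cdot)$.

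First I would apply Theorem~\ref{Risk_Bounds_Weight_reg} to the sample $S'_n$ with target accuracy $\varepsilon/c_3^2$ in place of $\varepsilon$. This yields, on an event $E_1$ with $\dbP(E_1)\geq 1-\delta$, the precondition
\begin{equation*}
\dbE_{\bx}\Big[\Big(\tfrac{1}{\widehat{\sigma}^2(\bx)}-\tfrac{1}{\tsig(\bx)}\Big)^{2}\Big]\leq \varepsilon/c_3^2
\end{equation*}
needed to invoke Theorem~\ref{Risk_Bounds_Reg}. The sample complexity is inflated by only a constant factor $c_3^{-2}$ relative to the bound displayed in Theorem~\ref{Risk_Bounds_Weight_reg}, which is absorbed into the problem-dependent constants.

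Next, conditional on $E_1$, I would treat $\widehat{\sigma}^2$ as a fixed element of $\MG$ and apply Theorem~\ref{Risk_Bounds_Reg} to the independent sample $S_n$. Independence of $S_n$ from $S'_n$ ensures that the distribution of $S_n$ is unaffected by the conditioning on $E_1$, so the conclusion of Theorem~\ref{Risk_Bounds_Reg} holds on an event $E_2$ with $\dbP(E_2\mid E_1)\geq 1-\delta$. A union bound then gives
\begin{equation*}
\dbP(E_1\cap E_2)\geq 1-2\delta,
\end{equation*}
on which both displayed inequalities of the corollary hold simultaneously.

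The main subtlety, and the only step that requires a touch of care, is the measurability/independence bookkeeping: one must verify that the event $E_1$ (a function of $S'_n$ alone) does not bias the joint distribution of $(\widehat{\sigma}^2,S_n)$ in a way that breaks the hypothesis of Theorem~\ref{Risk_Bounds_Reg}. Since $\widehat{\sigma}^2$ is $S'_n$-measurable and $S_n\perp S'_n$, conditioning on any $S'_n$-measurable event leaves the conditional law of $S_n$ unchanged, so Theorem~\ref{Risk_Bounds_Reg} applies uniformly over the realizations of $\widehat{\sigma}^2$ on $E_1$. Beyond this observation, no additional computation is required, and the remark following Corollary~\ref{corollary_class} already motivates the sample-splitting procedure used here.
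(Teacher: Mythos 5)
Your proposal is correct and matches what the paper intends: the corollary is stated as a direct combination of Theorems~\ref{Risk_Bounds_Reg} and~\ref{Risk_Bounds_Weight_reg} via sample splitting (as motivated by the remark following Theorem~\ref{Risk_Bounds_Weight}) and a union bound, and you correctly note the minor bookkeeping of applying Theorem~\ref{Risk_Bounds_Weight_reg} at level $\varepsilon/c_3^2$ to meet the precondition of Theorem~\ref{Risk_Bounds_Reg}, together with the independence argument that lets the second theorem apply conditionally on the first-stage event.
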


\subsection{General case} \label{sec:general}
Next, we generalize the classification and regression examples presented in Sections~\ref{sec:class} and~\ref{sec:reg} above. Consider hypothesis class $
\MF\triangleq\{ f: \MX \mapsto [-1,1]\}$ with complexity measure $d(\MF)<\infty$, and  $\MW\triangleq\{\omega: \MX \mapsto [\gamma,c_1]\}$ with complexity measure $d(\MW)<\infty$, and $c_1>0, \gamma > 0$\footnote{Note that here we effectively assume that $\omega^*$ is bounded away from zero to accommodate both the case of classification and regression. However, note that in the case of classification, the assumption $\omega^*$ being bounded away from zero is equivalent to the margin condition from \citet{massart2006risk}. The weighted ERM framework is actually agnostic to this condition and allows $\omega^*$ to be zero.}. Assume that the target hypothesis $f^*\in\mathcal{F}$ and the true weight $\omega^*\in\mathcal{W}$.

A \textit{balanceable Bernstein-type condition} is given in the following assumption, together with some other technical assumptions required for the loss function. 
\begin{assumption}\label{assumption1} Let $\MD: \mathcal{F} \times \mathcal{F}\times \mathcal{X} \mapsto [0,b]$ be uniformly bounded function that captures the excess risk. The following are assumed to hold for the loss function $\ell(\cdot,\cdot)$:
\begin{itemize}[noitemsep,topsep=0pt]
\item \textbf{Lipschitzness and uniform boundedness.}
\begin{equation}\label{assume0}
\begin{split}
&\forall \bz,f \quad |\ell(f,\bz) - \ell(f^*,\bz)|\leq a; \\
&\forall \bz, f_1,f_2\quad |\ell(f_1, \bz)-\ell (f_2,\bz)| \leq L|f_1-f_2|.
\end{split}
\end{equation}
\item \textbf{Under semi-random noise label.} Suppose the DGP satisfies semi-random noise label \citep{diakonikolas2021relu,pmlr-v178-pia22a} and the following are satisfied:
\begin{equation}\label{semi_random}
\begin{split}
     &\dbE_{\bz} [ \ell(f, \bz)-\ell (f^*,\bz)] = \dbE_{\bx} [\omega^*(\bx )\MD(f^*(\bx),f(\bx))] \\
     &\dbE_{y} [ (\ell(f, \bz)-\ell (f^*,\bz) ) ^2] \leq \MD(f^*(\bx),f(\bx)) 
\end{split}
\end{equation}
\item \textbf{Balanceable Bernstein condition.} Under the same semi-random noise label assumption for the DGP, there exists a uniformly bounded $\omega(\bx)$ that the following holds:
\begin{align}\label{assume_1}
 \textbf{Var}_{\bz} [ 
\omega(\bx) (\ell(f, \bz)-\ell (f^*,\bz))] \leq  \dbE_{\bz} [\omega(\bx)(\ell(f, \bz)-\ell (f^*,\bz))].  
\end{align}

\end{itemize}
\end{assumption}
For~\eqref{assume_1}, if one replaces $\omega^*$ by its uniform lower bound, standard Bernstein-type condition $\textbf{Var}[h(\bx)] \leq B\dbE_{\bx}[h(\bx)]$ can be recovered with $\omega(\bx) =1$ and $B = 1/\gamma$. On the other hand, there exists $\omega(\bx)$ that balances the ratio between $\textbf{Var}[ \omega(\bx) h (\bx)]$ and $\dbE[ \omega(\bx) h(\bx)]$, such that Bernstein-type condition holds with an improved multiplier. In particular, one can set $\omega(\cdot)\triangleq  \omega^*(\cdot)$ so that $B=1$, as in~\eqref{bernstein_1} and~\eqref{bernstein_2}.
\begin{remark}[Assumption~\ref{assumption1} in classification and regression settings] We provide concrete examples for how expressions in Assumption~\ref{assumption1} manifest in classification and regression settings, respectively. 
\begin{itemize}
    \item For classification settings, let $\ell(f;\bz) \triangleq \mathbbm{1}\{ f(\bx) \neq y\}$ and $\mathcal{D}(f^*,\widehat{f}, \boldsymbol{x})\triangleq\mathbbm{1} \{f^*(\bx)\neq \widehat{f}(\bx)\}$. It is easy to verify that~\eqref{assume0} is satisfied with $L=1,a=1$; \eqref{semi_random} is satisfied with $\omega^*(\bx)\triangleq |2\eta^*(\bx)-1|$ (see derivation in Appendix~\ref{appendix:labelnoise}); and~\eqref{assume_1} holds as established in~\eqref{bernstein_1}
    \item For regression settings, let $\ell(f;\bz) \triangleq (y-f(\bx))^2$ and $\MD(f^*, \widehat f, \boldsymbol{x})  \triangleq \frac{{\sigma^2}^*(\bx)}{C} (f^*(\bx)-\widehat f(\bx))^2$ for some constant $C$. \eqref{assume0} is satisfied with $a=8+(8c_2^2/\sqrt{\gamma}), L=c_2/\sqrt{\gamma}$ where $ b=4/(C\gamma),c_1 = \frac{1}{2}$; \eqref{semi_random} holds with $\omega^*(\bx)\triangleq \frac{C}{{\sigma^2}^*(\bx)}$; and~\eqref{assume_1} holds as established in~\eqref{bernstein_2}. 
\end{itemize}
\end{remark}
  
\begin{theorem}\label{main_theorem}
Suppose Assumption~\ref{assumption1} holds. Let $f^* \in \MF$ and $\omega^*\in \MW$, and suppose we have $\widehat \omega \in \MW$ s.t. $\dbE_{\bx}[(\widehat \omega(\bx) - \omega^*(\bx))^2] \leq \frac{\varepsilon}{b}$. Given i.i.d. samples $S_n = \{(\boldsymbol x_i,y_i)\}_{i=1}^{n}$ drawn from the data generative process, let $
\widehat{f}_{\text{wERM}} \triangleq  \argmin\nolimits_{f\in \MF} \frac{1}{n} \sum_{\bz_i \in S_n} \widehat \omega(\bx) \ell(f;\bz_i)
$. Then for any $\varepsilon>0, \delta>0$, the following holds simultaneously with probability at least $1-\delta$:
\begin{equation*}
\begin{split}
& \dbE_{\bx} [ {\widehat \omega}^2 (\bx) \MD(f^*,\widehat f_{\text{wERM}},\bx) ] \leq \varepsilon, \quad \dbE_{\bx} [  {\omega^*}^2 (\bx) \MD(f^*,\widehat f_{\text{wERM}},\bx) ] \leq \varepsilon, \text{ and}~~~\dbE_{\bx} [ {\widehat \omega}(\bx) {\omega^*} (\bx) \MD(f^*,\widehat f_{\text{wERM}},\bx) ] \leq \varepsilon,
\end{split}
\end{equation*}
as long as the sample size requirement is satisfied:
{\small
\begin{equation*}
    n \gtrsim \frac{ c_1^2a^2(d(\MF) \log(\frac{1}{\varepsilon}) + \log (c_1L)+  \log(\frac{1}{\delta}))}{   \varepsilon }+ \frac{ c_1a\log(\frac{1}{\delta})}{\varepsilon}.
\end{equation*}
}%


\end{theorem}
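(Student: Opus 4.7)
The plan is to run the localized-Rademacher machinery of Bartlett--Bousquet--Mendelson (2005) on the class $\MH \triangleq \{h_f(\bz) \triangleq \widehat\omega(\bx)(\ell(f;\bz) - \ell(f^*;\bz)): f \in \MF\}$, after replacing the classical $\textbf{Var}\le B\dbE$ hypothesis with a balanceable version that carries an additive slack controlled by $\dbE_{\bx}[(\widehat\omega-\omega^*)^2]$. The optimality of $\widehat f_{\text{wERM}}$ yields $\tfrac1n\sum_{i=1}^n h_{\widehat f_{\text{wERM}}}(\bz_i)\le 0$, so it suffices to obtain a one-sided uniform bound on $\dbE[h_f]-\tfrac1n\sum_i h_f(\bz_i)$ over $\MH$ that degrades only by an additive error term.

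First I would establish the relaxed Bernstein condition. From the semi-random noise identities in~\eqref{semi_random}, $\dbE_y[\ell(f;\bz)-\ell(f^*;\bz)\mid\bx] = \omega^*(\bx)\MD(f^*,f,\bx)$ and $\dbE_y[(\ell(f;\bz)-\ell(f^*;\bz))^2\mid\bx] \le \MD(f^*,f,\bx)$, so $\textbf{Var}[h_f] \le \dbE_{\bx}[\widehat\omega^2(\bx)\MD(f^*,f,\bx)]$. Splitting $\widehat\omega^2 = \widehat\omega\omega^* + \widehat\omega(\widehat\omega-\omega^*)$, using $\MD\le b$, $\widehat\omega\le c_1$, and Cauchy--Schwarz with $\dbE_{\bx}[(\widehat\omega-\omega^*)^2]\le \varepsilon/b$ yields
\begin{equation*}
\textbf{Var}[h_f] \;\le\; \dbE_{\bx}[\widehat\omega(\bx)\omega^*(\bx)\MD(f^*,f,\bx)] + c_1\sqrt{b\varepsilon} \;=\; \dbE[h_f] + c_1\sqrt{b\varepsilon},
\end{equation*}
which is exactly~\eqref{assume_1} with unit Bernstein constant and an additive slack $\varepsilon' \triangleq c_1\sqrt{b\varepsilon}$. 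Since $\widehat\omega$ is constructed from a sample independent of $S_n$ (via the same sample-splitting scheme underlying Theorems~\ref{Risk_Bounds_Class} and~\ref{Risk_Bounds_Reg}), conditioning on $\widehat\omega$ makes $\MH$ a $[-c_1 a, c_1 a]$-bounded class that is $c_1 L$-Lipschitz in $f$ by~\eqref{assume0}; a Ledoux--Talagrand contraction then bounds its local Rademacher complexity by $c_1 L$ times that of $\MF$. Feeding these ingredients into the authors' adaptation of Theorem~3.3 of Bartlett--Bousquet--Mendelson, which tolerates an additive $\varepsilon'$-slack in the $\textbf{Var}\le\dbE$ hypothesis, produces with probability at least $1-\delta$
\begin{equation*}
\dbE[h_{\widehat f_{\text{wERM}}}] \;\lesssim\; r^* + \frac{c_1 a \log(1/\delta)}{n} + c_1\sqrt{b\varepsilon},
\end{equation*}
where $r^*$ is the sub-root fixed point of the localized complexity of $\MH$, of the order $\tilde O\big(c_1^2 a^2 d(\MF)(\log(1/\varepsilon) + \log(c_1 L))/n\big)$. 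The stated sample-size hypothesis drives the right-hand side below $\varepsilon$, which is exactly $\dbE_{\bx}[\widehat\omega(\bx)\omega^*(\bx)\MD(f^*,\widehat f_{\text{wERM}},\bx)]\le\varepsilon$. The first conclusion $\dbE[\widehat\omega^2\MD]\le\varepsilon$ then follows by expanding $\widehat\omega^2 = \widehat\omega\omega^* + \widehat\omega(\widehat\omega-\omega^*)$ and absorbing the $c_1\sqrt{b\varepsilon}$ mismatch into the constant; the second $\dbE[(\omega^*)^2\MD]\le\varepsilon$ is symmetric via $(\omega^*)^2 = \widehat\omega\omega^* - \omega^*(\widehat\omega-\omega^*)$.

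The hard part will be the adapted Theorem~3.3: routing the additive slack $c_1\sqrt{b\varepsilon}$ through the peeling and fixed-point construction of the sub-root function without inflating the critical radius is the key technical nuisance. One must select the peeling levels and the truncation inside the sub-root so that the slack contributes only an additive constant to the final bound and does not spoil the $O(1/n)$ fast rate---this is precisely the ``alternative version of Theorem~3.3'' the authors advertise. A secondary item is the bookkeeping needed to conclude all three inequalities from a single concentration event: the three targets differ only by deterministic mismatch terms of the form $c_1 b \cdot \dbE_{\bx}[|\widehat\omega-\omega^*|]$, each controlled by $c_1\sqrt{b\varepsilon}$ and absorbed after a universal-constant rescaling of $\varepsilon$ at the outset.
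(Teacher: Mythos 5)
Your plan matches the paper's architecture in its essentials --- same excess-loss class $\MH$, same use of $\bplaw_n h_{\widehat f_{\text{wERM}}}\le 0$, same localized-Rademacher argument through the paper's Lemma~\ref{Bartlett_Lemma_variant} (the variant of Theorem~3.3 of Bartlett--Bousquet--Mendelson tolerating an additive slack), and same covering-number/chaining control of the fixed point. However, there is a genuine quantitative error in how you convert the hypothesis $\dbE_{\bx}[(\widehat\omega-\omega^*)^2]\le\varepsilon/b$ into the additive slack for the relaxed Bernstein condition, and this error breaks the fast rate.

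You split $\widehat\omega^2=\widehat\omega\omega^*+\widehat\omega(\widehat\omega-\omega^*)$ and then bound the cross term by Cauchy--Schwarz, obtaining an additive slack of $c_1\sqrt{b\varepsilon}$. But $\sqrt{\varepsilon}\gg\varepsilon$ for small $\varepsilon$, and the slack appears \emph{additively} in the conclusion of Lemma~\ref{Bartlett_Lemma_variant}. So the final bound you would get is $\dbE_{\bx}[\widehat\omega\omega^*\MD]\lesssim r^*+\log(1/\delta)/n+c_1\sqrt{b\varepsilon}$, and no choice of sample size $n$ makes the right-hand side fall below $\varepsilon$; there is also no ``absorbing $c_1\sqrt{b\varepsilon}$ into the constant'', because $\sqrt{\varepsilon}\leq C\varepsilon$ fails for all constants $C$ once $\varepsilon$ is small. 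To force the $\sqrt{\varepsilon}$ slack below $\varepsilon$ you would have to rescale $\varepsilon\mapsto\varepsilon^2$, inflating the sample complexity from $\tilde\Theta(1/\varepsilon)$ to $\tilde\Theta(1/\varepsilon^2)$ and destroying the $O(1/n)$ rate that the theorem asserts.

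The paper avoids this by never separating the factor $\MD$ from the squared error. It uses the completed-square inequality $\widehat\omega^2\le 2\widehat\omega\omega^*+(\widehat\omega-\omega^*)^2$ pointwise, multiplies by $\MD\ge 0$, and takes expectations, so the error term remains the product $\dbE_{\bx}[(\widehat\omega-\omega^*)^2\MD]$, which is $\le b\cdot\dbE_{\bx}[(\widehat\omega-\omega^*)^2]\le\varepsilon$ with no square root. This yields $\textbf{Var}[h_f]\le 2\dbE[h_f]+\varepsilon$, i.e. a Bernstein constant $B=2$ with slack exactly $\varepsilon$, and the same device gives all three conclusions: $\dbE_{\bx}[\widehat\omega^2\MD]\le 2\dbE_{\bx}[\widehat\omega\omega^*\MD]+\varepsilon$ and $\dbE_{\bx}[{\omega^*}^2\MD]\le 2\dbE_{\bx}[\widehat\omega\omega^*\MD]+\varepsilon$, so both are at most $3\varepsilon$ once the lemma has certified $\dbE_{\bx}[\widehat\omega\omega^*\MD]\le\varepsilon$. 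Replacing your linear split plus Cauchy--Schwarz by this quadratic completion of the square, and accepting the Bernstein constant $B=2$ rather than insisting on $B=1$, repairs the argument; the rest of your outline (contraction to $\MF$, peeling, Haussler covering bound, fixed-point solution) is correct.
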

\begin{remark}
The proof of Theorem~\ref{main_theorem} is deferred to the appendix. A major hurdle in completing the proof comes from the inaccessibility of $\omega^*(\bx)$ and thus one needs to use $\widehat\omega(\bx)$ instead; this is out of practical consideration as one can only access an estimated version. To overcome this challenge, it suffices to require $\dbE_{\bx}[ (\widehat\omega(\bx) - \omega^*(\bx))^2] \leq \varepsilon/b$, with which one can show the weighted empirical risk satisfies an $\varepsilon$ additive error version of the Bernstein type condition, namely, $\textbf{Var}[h]\leq B\dbE[h]+\varepsilon$. To this end, we prove an alternative version of Theorem 3.3 in \cite{bartlett2005local} under this relaxed Bernstein-type condition, and show that the weighted ERM achieves a fast rate in the generalization error bound. 
\end{remark}

\section{Synthetic Data Experiments}\label{sec:sim}
We present results from synthetic data experiments to support our theoretical claims, respectively for regression and classification settings. For both settings, we follow a two-step procedure, in which we first perform ERM to obtain estimates for the mean and the weight, followed by a reweighting step. Subsequently, we compare two sets of estimates---resp. from ERM and weighted ERM---in terms of their selective risk, where the selective set is chosen over a range with varying coverage determined by the variance or the margin, depending on the setting under consideration. For both experiments, the size of the training set is set at 2e4, to ensure that the algorithm has access to adequate number of samples and circumvent any potential issues due to lack of fit, although empirically the conclusion broadly holds even with much smaller sample sizes. 

\subsection{Experiments under regression settings}

We consider regression settings in the presence of heteroscedasticity, similar to the ones used in \cite{skafte2019reliable,seitzer2022on}. The true data generating process is given by a univariate regression with $x\in\mathbb{R}$ of the form
\begin{equation*}
    y = f^*(x) + \sqrt{\tsig (x)}\cdot \xi, \quad \mathbb{E}(\xi)=0;~\textbf{Var}(\xi)=1.
\end{equation*}
The mean $f^*(x) \triangleq x\sin(x)$ is a sinusoidal function; the scale function of the additive noise $\xi$ depends on the value of $x$, and is given by $(\sigma^2)^*(x) \triangleq (0.09)(1+x^2)$. The regressor $x$ is sampled uniformly from $[0,10]$ and the noise $\xi$ is standard Gaussian. Figure~\ref{fig:regr-data} provides a visualization of the data resulting from this DGP. 
\begin{figure}[!ht]
\captionsetup[figure]{font=scriptsize}
\centering
\begin{subfigure}[t]{0.45\textwidth}
\centering
\includegraphics[width=\textwidth]{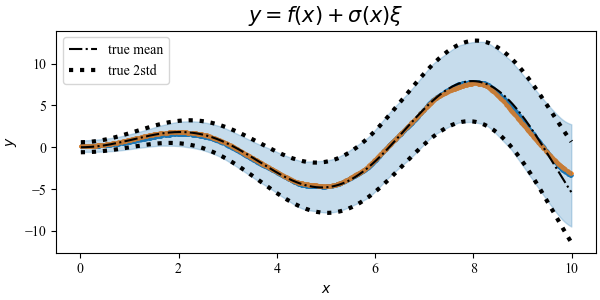}
\caption{\scriptsize Visualization of the DGP and the estimates. The true mean $f^*(x)$ and the two-standard deviation bands are in black dotted lines. The mean estimates from ERM (blue) and wERM (orange) are in solid lines, and the shaded area corresponds to the two-standard deviation derived from the variance estimate of the ERM step.}\label{fig:regr-data}
\end{subfigure}
\hfill
\begin{subfigure}[t]{0.45\textwidth}
\centering
\includegraphics[width=\textwidth]{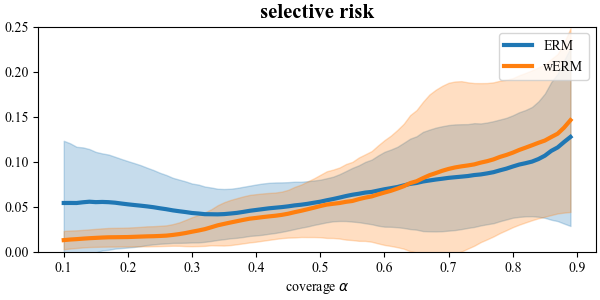}
\caption{\scriptsize Risk over the selective set with varying coverage $\alpha$; the solid lines correspond the risk on the test (sub)set averaged over 10 runs of the experiment, and the shaded area correspond to 1 standard deviation.}\label{fig:regr-risk}
\end{subfigure}
\caption{Regression setting: underlying true data, estimates from ERM and weighted ERM, and the selective risk}
\end{figure}

We consider the following estimation procedure using $\ell_2$ loss; $f(x)$ and $\sigma^2(x)$ are both parametrized by multi-layer perceptrons (MLP):
\begin{itemize}
\setlength\itemsep{0pt}
\item An ERM step that gives rise to the mean and the variance estimates, that is, 
{\small
\begin{equation*}
\setlength{\abovedisplayskip}{1pt}
\setlength{\belowdisplayskip}{1pt}
\widehat{f}_{\text{ERM}}(x) := \argmin_f\sum\limits_{i=1}^n (y_i- f(x_i))^2~~\text{and}~~\widehat{\sigma}^2(x) := \argmin_{\sigma^2}\sum\limits_{i=1}^n\big[ \log\sigma^2(x) + \frac{(y_i-\widehat{f}_{\text{ERM}}(x_i))^2}{\sigma^2(x)}\big];
\end{equation*}
}
\item A reweighting step, with the weight given by the precision estimate from the ERM step:
\begin{equation*}
\setlength{\abovedisplayskip}{1pt}
\setlength{\belowdisplayskip}{1pt}
\widehat{f}_{\text{wERM}}(x) := \argmin_f\sum\limits_{i=1}^n \omega(x_i)\big(y_i- f(x_i)\big)^2 \quad \text{where}~~\omega(x_i):= 1/\widehat{\sigma}^2(x_i).
\end{equation*}
\end{itemize}
Once $\widehat{f}_{\text{ERM}}(x)$ and $\widehat{f}_{\text{wERM}}(x)$ are obtained, on the test set, we consider evaluating their risk over a range of selective set with varying coverage $\alpha\in[0,1]$. Concretely, at evaluation time, the selective risk is calculated as
\begin{equation*}
    \mathcal{R}_\alpha := \mathbb{E}\Big[\big(f^\star(x) - f(x)\big)^2 ~|~\big\{ \sigma^2(\bx)\leq q_{\alpha}(\sigma^2)\big\}\Big],
\end{equation*}
where $q_{\alpha}(\sigma^2)$ is the $\alpha$ quantile of the variance over the entire domain; a low-coverage (i.e., small $\alpha$) selective set corresponds to the low-variance region. Empirically, the risk is obtained by substituting $f$ by either $\widehat{f}_{\text{ERM}}$ or $\widehat{f}_{\text{wERM}}$ for each test data point in the selective set then taking the average, with $\widehat{\sigma}(x)$ coming from the ERM step. The cut-off $q_{\alpha}(\sigma^2)$ is determined by the empirical quantile of the estimated $\sigma^2$ on the validation set. 

Figure~\ref{fig:regr-data} presents the mean estimate from ERM (blue) and weighted ERM (orange) respectively, although the quality of the fit is satisfactory for both cases and therefore they largely overlap with the truth and becomes hard to distinguish, visually. Figure~\ref{fig:regr-risk} displays the risk over the selective set with varying coverage $\alpha$. As it can be seen from the plot, weighted ERM has an advantage over the ERM estimate in the low-coverage region, as manifested by a lower selective risk, and the advantage diminishes as the coverage $\alpha$ increases. This is in accordance with the theoretical results established in Section~\ref{sec:reg}.

As a remark, the practical implication for such results is that in certain scenarios (e.g., some finance applications) where one takes actions only when there is high confidence and abstains otherwise (and therefore being ``selective"), a weighted ERM procedure can be leveraged to obtain more refined estimates for the region of interest where actions would take place.

\subsection{Experiments under classification settings}

For classification, we consider the following data-generating process for illustration purpose, in which extremely noisy data points are present. The features $\boldsymbol{x}_i\in\mathbb{R}^2$ are sampled from class-conditional Gaussian with equal covariance, that is $\mathbb{P}(c_i=k) = p_k$; $\mathbb{P}(\boldsymbol{x}_i|c_i=k)\sim\mathcal{N}(\boldsymbol{\mu}_k, \Sigma)$. Here we consider 4 ``clusters" with $k\in\{0,0',1,1'\}$, where $p_{0'}=0.5, p_{0}=0.25, p_1=0.20, p_{1'}=0.05$; $\boldsymbol{\mu}_{0'}=(-10,0)^\top, \boldsymbol{\mu}_{0}=(-3,0)^\top, \boldsymbol{\mu}_1=(3,0)^\top, \boldsymbol{\mu}_{1'}=(12,0)^\top$, $\Sigma=\left[ \begin{smallmatrix} 2 & 0.5 \\ 0.5 & 2 \end{smallmatrix}\right]$. Let
\begin{equation*}
\phi^*(\boldsymbol{x}_i) \triangleq \mathbb{P}\big(c_i\in\{1,1'\}~|~\boldsymbol{x}_i\big) = \big( \sum\nolimits_{k\in\{1,1'\}} p_k\cdot p(\boldsymbol{x}_i; \boldsymbol{\mu}_{k},\Sigma) \big) / \big( \sum\nolimits_{k\in\{0,0',1,1'\}} p_k\cdot p(\boldsymbol{x}_i; \boldsymbol{\mu}_{k},\Sigma)\big),
\end{equation*}
where $p(\boldsymbol{x}_i;\boldsymbol{\mu}_k,\Sigma)$ denotes the pdf corresponding to $\mathcal{N}(\boldsymbol{\mu}_k,\Sigma)$ evaluated at $\boldsymbol{x}_i$, and the equality follows from Bayes rule. Set $y_i^{\text{initial}} \triangleq \mathbbm{1}\{\phi^*(\boldsymbol{x}_i) > 1/2\}$. Further, we inject noise to the class labels for points that are in cluster $0'$, such that their final labels are flipped with probability $p_{\text{flip}}$, that is:
\begin{equation*}
y_i = 1 - y_i^{\text{initial}}~~(\text{w.p.}~p_{\text{flip}})~~\text{if}~ c_i=0' \quad \text{otherwise}~~~y_i^{\text{initial}}.
\end{equation*}
It is worth noting that given that above-mentioned DGP, the theoretical decision boundary is linear; additionally, the theoretical margin is given by $\omega^*(\boldsymbol{x}_i)\triangleq|\eta^*(\boldsymbol{x}_i)-1/2|$, where $\eta^*(\boldsymbol{x}_i)\triangleq \mathbbm{1}\{\phi^*(\boldsymbol{x}_i) > 1/2\}$ if $c_i\neq 0'$ otherwise $p_{\text{flip}}$. In this setting, the flipping probability $p_{\text{flip}}$ is set at 0.49. 

Similar to the case of regression, we consider the following procedure that entails two steps, using the cross-entropy loss: 
\begin{itemize}
    \item An ERM step: we obtain the estimated margin $\widehat{\omega}(\boldsymbol{x}_i)=|\widehat{\eta}(\boldsymbol{x}_i)-1/2|$ where $\widehat{\eta}(\boldsymbol{x}_i)$ is the estimated Bayes-optimal classifier, and a linear decision boundary that gives rise to the class labels $\widehat{ f}(\bx_i)_{\text{ERM}}$.
    \item A reweighting step with the weight set at $\widehat w(\boldsymbol{x}_i)$, which then gives rise to an updated decision boundary and the associated class label $\widehat{ f}(\bx_i)_{\text{wERM}}$.
\end{itemize}

The selective risk is then evaluated as 
\begin{equation*}
    \mathcal{R}_\alpha := \mathbb{E}\Big[\mathbbm{1}\{ f^* \neq \widehat f\} ~|~\big\{ \omega(\bx) \geq q_{\alpha}( \omega) \big\}  \Big], \qquad f^*:= \mathbbm{1}\{\eta^\star > 0.5\}.
\end{equation*}
where $q_{\alpha}(\omega)$ is the $\alpha$ quantile of the margin over the entire domain; a low-coverage (i.e., small $\alpha$) selective set corresponds to the large-margin region. Empirically, the evaluation is done by averaging the risk on the corresponding test (sub)set, and $y$ is substituted by either $\widehat{ f}(\bx_i)_{\text{ERM}}$ or $\widehat{ f}(\bx_i)_{\text{wERM}}$; the cut off $q_{\alpha}( \omega)$ is empirically determined by the quantile of the estimated margin on the validation set. 

\begin{figure}[!ht]
\captionsetup[figure]{font=scriptsize}
\centering
\subfloat[t][\scriptsize Visualization of the DGP. The top panel displays the class labels after injecting label noise, and the bottom shows class labels based on the Bayes optimal classifier, respectively]{\includegraphics[width=0.31\textwidth]{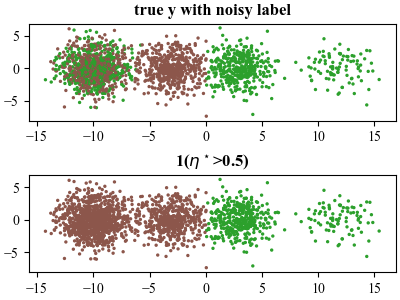}\label{fig:clsf-data}}
\quad
\subfloat[t][\scriptsize Estimated class labels based on ERM (top) and weighted ERM (bottom), respectively]{\includegraphics[width=0.31\textwidth]{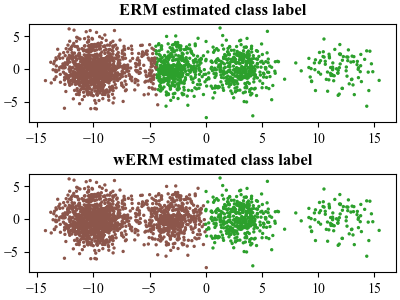}\label{fig:clsf-fit}}
\quad
\subfloat[t][\scriptsize Risk over the select set with varying coverage $\alpha$. the solid lines correspond the risk on the test (sub)set averaged over 10 runs of the experiment, and the shaded area corresponds to 1 standard deviation.]{\includegraphics[width=0.31\textwidth]{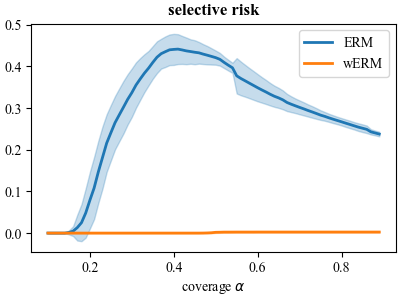}\label{fig:clsf-risk}}
\caption{Classification setting: underlying true data, estimates from ERM and weighted ERM and the selective risk}
\end{figure}

Figure~\ref{fig:clsf-data} provides a representative view of the data resulting from this DGP; Figure~\ref{fig:clsf-fit} displays the estimated class labels from the ERM (top) and the weighted ERM (bottom) step, respectively. As it can be seen from the figure, the latter largely aligns with the class labels dictated by the Bayes-optimal classifier, whereas the former has a noticeable number of points near the decision boundary being mis-labeled. Figure~\ref{fig:clsf-risk} provides a comparison of the selective risk for the two sets of estimates. In this specific setting, weighted ERM dominates.  


\section{Discussion, Limitation, and Future Work}
To conclude, this work investigates the generalization error bound of weighted ERM under the fast-rate regime. We show that by additionally incorporating a carefully-designed weight function in each loss term, estimators based on weighted ERM can achieve a tighter bound in selected regions by a problem-dependent constant, when compared with the one from standard ERM. This finding leads to practical applications where one can use plug-in estimates of the weight function to obtain superior performance in sub-regions through a two-step procedure, as demonstrated in our synthetic data experiments. 

It is worth noting that recent work~\citet{zhai2022understanding} considers a generalized reweighting scheme where samples are reweighted dynamically during training; in spirit, this is similar to the procedure considered in our work, yet the two differ in the following aspects: (1) the starting point of \citet{zhai2022understanding} is the generalization of distributional robust optimization (DRO) algorithms; in particular, under a DRO setting where distributional shift is present yet the test distribution is ``close" to the training one, at the conceptual level, training should focus on the ``hard" cases. In our work, on the contrary, the setting under consideration can be made analogous to soft abstention, and thus the weight schema further upweights the ``easy" cases. (2) The result established in \citet{zhai2022understanding} states that the generalized reweighting procedure leads to a solution that is close to the ERM one, in that the points they converge to are close; this is largely done by analyzing the properties of the estimates between successive iterates. On the other hand, this work is concerned with the statistical error bound of the weighted ERM estimator---in particular, its superiority over the standard ERM one in the high-confidence region. 


There are several limitations of this work and directions that could be further explored. Throughout the paper, we consider well-specified settings. There are several difficulties in regards to the extension to mis-specified settings where the target hypothesis can potentially live outside of the hypothesis class in question. Possible extension includes exploring mis-specified settings with surrogate losses using tools developed in~\cite{awasthi2022h,mao2023h}, and leveraging several recent results which show that fast rate could be achieved under mis-specified setting via model selection aggregation \citep{tsybakov2003optimal,bousquet2021fast,kanade2022exponential}. Other recent work that touched upon this issue includes \citet{puchkin2021exponential}, where to establish the desired results the authors require both the diameter of the hypothesis class and the star number to be finite. Alternatively, tools that can work in specific setups may be introduced to characterize the approximation error; e.g., if one considers a setting similar to that in \citet{kohler2021rate}, namely, the hypothesis class being a set of fully connected DNNs and the target hypothesis being in the class of $(p,C)$-smooth functions, then the approximation error bound can be calibrated. The results developed in this work can potentially be extended to these settings, which however requires more involved analysis to handle various components (e.g., the approximation error of the weight function $\widehat \omega(\bx)$) and very specific assumptions on the exact setup. 

Separately, our analysis is limited to Bernstein-type condition \citep{lee1996importance,bartlett2005local} of the form $\textbf{Var}[h]\leq B\dbE[h]$. To study classification problems under Tsybakov noise condition \citep{mammen1999smooth,tsybakov2004optimal} and regression with $\ell_{p}$ risk \citep{bartlett2006empirical}, a more generalized form $\dbE[h^2 ]\leq B(\dbE[h])^{\beta},\beta\in (0,1]$ is required. Finally, for some other settings such as Offset Rademacher Complexity \citep{liang2015learning} and “small-ball condition” \citep{mendelson2018learning}, where the fast rate has been established, we are optimistic they can also enjoy some problem-dependent constant improvement by exploring the structure of semi-random noise label with a properly designed weight function.

\clearpage
\setlength{\bibsep}{2pt}
\bibliographystyle{chicago}
\bibliography{ref}

\begin{thebibliography}{67}
\providecommand{\natexlab}[1]{#1}
\providecommand{\url}[1]{\texttt{#1}}
\expandafter\ifx\csname urlstyle\endcsname\relax
  \providecommand{\doi}[1]{doi: #1}\else
  \providecommand{\doi}{doi: \begingroup \urlstyle{rm}\Url}\fi

\bibitem[Ausset et~al.(2022)Ausset, Cl{\'e}men{\c{c}}on, and Portier]{ausset2022empirical}
Ausset, G., Cl{\'e}men{\c{c}}on, S., and Portier, F.
\newblock Empirical risk minimization under random censorship.
\newblock \emph{Journal of Machine Learning Research}, 2022.

\bibitem[Awasthi et~al.(2022)Awasthi, Mao, Mohri, and Zhong]{awasthi2022h}
Awasthi, P., Mao, A., Mohri, M., and Zhong, Y.
\newblock H-consistency bounds for surrogate loss minimizers.
\newblock In \emph{International Conference on Machine Learning}, pp.\  1117--1174. PMLR, 2022.

\bibitem[Bach(2010)]{bach2010self}
Bach, F.
\newblock Self-concordant analysis for logistic regression.
\newblock \emph{Electronic Journal of Statistics}, 4:\penalty0 384--414, 2010.

\bibitem[Bartlett \& Mendelson(2006)Bartlett and Mendelson]{bartlett2006empirical}
Bartlett, P.~L. and Mendelson, S.
\newblock Empirical minimization.
\newblock \emph{Probability theory and related fields}, 135\penalty0 (3):\penalty0 311--334, 2006.

\bibitem[Bartlett \& Wegkamp(2008)Bartlett and Wegkamp]{bartlett2008classification}
Bartlett, P.~L. and Wegkamp, M.~H.
\newblock Classification with a reject option using a hinge loss.
\newblock \emph{Journal of Machine Learning Research}, 9\penalty0 (8), 2008.

\bibitem[Bartlett et~al.(2005)Bartlett, Bousquet, and Mendelson]{bartlett2005local}
Bartlett, P.~L., Bousquet, O., and Mendelson, S.
\newblock Local rademacher complexities.
\newblock \emph{The Annals of Statistics}, 33\penalty0 (4):\penalty0 1497--1537, 2005.

\bibitem[Boucheron et~al.(2005)Boucheron, Bousquet, and Lugosi]{boucheron2005theory}
Boucheron, S., Bousquet, O., and Lugosi, G.
\newblock Theory of classification: A survey of some recent advances.
\newblock \emph{ESAIM: probability and statistics}, 9:\penalty0 323--375, 2005.

\bibitem[Bousquet \& Zhivotovskiy(2021)Bousquet and Zhivotovskiy]{bousquet2021fast}
Bousquet, O. and Zhivotovskiy, N.
\newblock Fast classification rates without standard margin assumptions.
\newblock \emph{Information and Inference: A Journal of the IMA}, 10\penalty0 (4):\penalty0 1389--1421, 2021.

\bibitem[Bousquet et~al.(2003)Bousquet, Boucheron, and Lugosi]{bousquet2003introduction}
Bousquet, O., Boucheron, S., and Lugosi, G.
\newblock Introduction to statistical learning theory.
\newblock In \emph{Summer school on machine learning}, pp.\  169--207. Springer, 2003.

\bibitem[Cawley et~al.(2004)Cawley, Talbot, Foxall, Dorling, and Mandic]{cawley2004heteroscedastic}
Cawley, G.~C., Talbot, N.~L., Foxall, R.~J., Dorling, S.~R., and Mandic, D.~P.
\newblock Heteroscedastic kernel ridge regression.
\newblock \emph{Neurocomputing}, 57:\penalty0 105--124, 2004.

\bibitem[Chow(1970)]{chow1970optimum}
Chow, C.
\newblock On optimum recognition error and reject tradeoff.
\newblock \emph{IEEE Transactions on information theory}, 16\penalty0 (1):\penalty0 41--46, 1970.

\bibitem[Cortes et~al.(2008)Cortes, Mohri, Riley, and Rostamizadeh]{cortes2008sample}
Cortes, C., Mohri, M., Riley, M., and Rostamizadeh, A.
\newblock Sample selection bias correction theory.
\newblock In \emph{Algorithmic Learning Theory: 19th International Conference, ALT 2008, Budapest, Hungary, October 13-16, 2008. Proceedings 19}, pp.\  38--53. Springer, 2008.

\bibitem[Cortes et~al.(2010)Cortes, Mansour, and Mohri]{cortes2010learning}
Cortes, C., Mansour, Y., and Mohri, M.
\newblock Learning bounds for importance weighting.
\newblock \emph{Advances in neural information processing systems}, 23, 2010.

\bibitem[Cortes et~al.(2016)Cortes, DeSalvo, and Mohri]{cortes2016learning}
Cortes, C., DeSalvo, G., and Mohri, M.
\newblock Learning with rejection.
\newblock In \emph{Algorithmic Learning Theory: 27th International Conference, ALT 2016, Bari, Italy, October 19-21, 2016, Proceedings 27}, pp.\  67--82. Springer, 2016.

\bibitem[Daye et~al.(2012)Daye, Chen, and Li]{daye2012high}
Daye, Z.~J., Chen, J., and Li, H.
\newblock High-dimensional heteroscedastic regression with an application to eqtl data analysis.
\newblock \emph{Biometrics}, 68\penalty0 (1):\penalty0 316--326, 2012.

\bibitem[Diakonikolas et~al.(2019)Diakonikolas, Gouleakis, and Tzamos]{diakonikolas2019distribution}
Diakonikolas, I., Gouleakis, T., and Tzamos, C.
\newblock Distribution-independent pac learning of halfspaces with massart noise.
\newblock \emph{Advances in Neural Information Processing Systems}, 32, 2019.

\bibitem[Diakonikolas et~al.(2021)Diakonikolas, Park, and Tzamos]{diakonikolas2021relu}
Diakonikolas, I., Park, J.~H., and Tzamos, C.
\newblock Relu regression with massart noise.
\newblock \emph{Advances in Neural Information Processing Systems}, 34:\penalty0 25891--25903, 2021.

\bibitem[Dudley(2014)]{dudley2014uniform}
Dudley, R.~M.
\newblock \emph{Uniform central limit theorems}, volume 142.
\newblock Cambridge university press, 2014.

\bibitem[El-Yaniv et~al.(2010)]{noisefree_ro_2010_JMLR}
El-Yaniv, R. et~al.
\newblock On the foundations of noise-free selective classification.
\newblock \emph{JMLR}, 11\penalty0 (5), 2010.

\bibitem[Franc et~al.(2023)Franc, Prusa, and Voracek]{franc2023optimal}
Franc, V., Prusa, D., and Voracek, V.
\newblock Optimal strategies for reject option classifiers.
\newblock \emph{Journal of Machine Learning Research}, 24\penalty0 (11):\penalty0 1--49, 2023.

\bibitem[Ge et~al.(2023)Ge, Tang, Fan, Ma, and Jin]{ge2023maximum}
Ge, J., Tang, S., Fan, J., Ma, C., and Jin, C.
\newblock Maximum likelihood estimation is all you need for well-specified covariate shift.
\newblock \emph{arXiv preprint arXiv:2311.15961}, 2023.

\bibitem[Hanczar \& Dougherty(2008)Hanczar and Dougherty]{hanczar2008classification}
Hanczar, B. and Dougherty, E.~R.
\newblock Classification with reject option in gene expression data.
\newblock \emph{Bioinformatics}, 24\penalty0 (17):\penalty0 1889--1895, 2008.

\bibitem[Hanneke(2009)]{hanneke2009theoretical}
Hanneke, S.
\newblock \emph{Theoretical foundations of active learning}.
\newblock Carnegie Mellon University, 2009.

\bibitem[Haussler(1995)]{haussler1995sphere}
Haussler, D.
\newblock Sphere packing numbers for subsets of the boolean n-cube with bounded {Vapnik-Chervonenkis} dimension.
\newblock \emph{Journal of Combinatorial Theory, Series A}, 69\penalty0 (2):\penalty0 217--232, 1995.

\bibitem[Herbei \& Wegkamp(2006)Herbei and Wegkamp]{herbei2006classification}
Herbei, R. and Wegkamp, M.~H.
\newblock Classification with reject option.
\newblock \emph{The Canadian Journal of Statistics/La Revue Canadienne de Statistique}, pp.\  709--721, 2006.

\bibitem[Jiang \& Li(2016)Jiang and Li]{jiang2016doubly}
Jiang, N. and Li, L.
\newblock Doubly robust off-policy value evaluation for reinforcement learning.
\newblock In \emph{International Conference on Machine Learning}, pp.\  652--661. PMLR, 2016.

\bibitem[Kanade et~al.(2022)Kanade, Rebeschini, and Vaskevicius]{kanade2022exponential}
Kanade, V., Rebeschini, P., and Vaskevicius, T.
\newblock Exponential tail local rademacher complexity risk bounds without the bernstein condition.
\newblock \emph{arXiv preprint arXiv:2202.11461}, 2022.

\bibitem[Kendall \& Gal(2017)Kendall and Gal]{kendall2017uncertainties}
Kendall, A. and Gal, Y.
\newblock What uncertainties do we need in bayesian deep learning for computer vision?
\newblock \emph{Advances in neural information processing systems}, 30, 2017.

\bibitem[Klein \& Young(2015)Klein and Young]{klein2015number}
Klein, P. and Young, N.~E.
\newblock On the number of iterations for dantzig--wolfe optimization and packing-covering approximation algorithms.
\newblock \emph{SIAM Journal on Computing}, 44\penalty0 (4):\penalty0 1154--1172, 2015.

\bibitem[Klochkov \& Zhivotovskiy(2021)Klochkov and Zhivotovskiy]{klochkov2021stability}
Klochkov, Y. and Zhivotovskiy, N.
\newblock Stability and deviation optimal risk bounds with convergence rate $ o (1/n) $.
\newblock \emph{Advances in Neural Information Processing Systems}, 34:\penalty0 5065--5076, 2021.

\bibitem[Kohler \& Langer(2021)Kohler and Langer]{kohler2021rate}
Kohler, M. and Langer, S.
\newblock On the rate of convergence of fully connected deep neural network regression estimates.
\newblock \emph{The Annals of Statistics}, 49\penalty0 (4):\penalty0 2231--2249, 2021.

\bibitem[Koltchinskii \& Panchenko(2000)Koltchinskii and Panchenko]{koltchinskii2000rademacher}
Koltchinskii, V. and Panchenko, D.
\newblock Rademacher processes and bounding the risk of function learning.
\newblock In \emph{High dimensional probability II}, pp.\  443--457. Springer, 2000.

\bibitem[Koren \& Levy(2015)Koren and Levy]{koren2015fast}
Koren, T. and Levy, K.
\newblock Fast rates for exp-concave empirical risk minimization.
\newblock \emph{Advances in Neural Information Processing Systems}, 28, 2015.

\bibitem[Lakshminarayanan et~al.(2017)Lakshminarayanan, Pritzel, and Blundell]{lakshminarayanan2017simple}
Lakshminarayanan, B., Pritzel, A., and Blundell, C.
\newblock Simple and scalable predictive uncertainty estimation using deep ensembles.
\newblock \emph{Advances in Neural Information Processing Systems}, 30, 2017.

\bibitem[Lee et~al.(1996)Lee, Bartlett, and Williamson]{lee1996importance}
Lee, W.~S., Bartlett, P.~L., and Williamson, R.~C.
\newblock The importance of convexity in learning with squared loss.
\newblock In \emph{Proceedings of the Ninth Annual Conference on Computational Learning Theory}, pp.\  140--146, 1996.

\bibitem[Liang et~al.(2015)Liang, Rakhlin, and Sridharan]{liang2015learning}
Liang, T., Rakhlin, A., and Sridharan, K.
\newblock Learning with square loss: Localization through offset rademacher complexity.
\newblock In \emph{Conference on Learning Theory}, pp.\  1260--1285. PMLR, 2015.

\bibitem[Mammen \& Tsybakov(1999)Mammen and Tsybakov]{mammen1999smooth}
Mammen, E. and Tsybakov, A.~B.
\newblock Smooth discrimination analysis.
\newblock \emph{The Annals of Statistics}, 27\penalty0 (6):\penalty0 1808--1829, 1999.

\bibitem[Mao et~al.(2023)Mao, Mohri, and Zhong]{mao2023h}
Mao, A., Mohri, M., and Zhong, Y.
\newblock $ h $-consistency bounds: Characterization and extensions.
\newblock In \emph{Thirty-seventh Conference on Neural Information Processing Systems}, 2023.

\bibitem[Massart \& N{\'e}d{\'e}lec(2006)Massart and N{\'e}d{\'e}lec]{massart2006risk}
Massart, P. and N{\'e}d{\'e}lec, {\'E}.
\newblock Risk bounds for statistical learning.
\newblock \emph{The Annals of Statistics}, 34\penalty0 (5):\penalty0 2326--2366, 2006.

\bibitem[Mendelson(2002{\natexlab{a}})]{mendelson2002improving}
Mendelson, S.
\newblock Improving the sample complexity using global data.
\newblock \emph{IEEE transactions on Information Theory}, 48\penalty0 (7):\penalty0 1977--1991, 2002{\natexlab{a}}.

\bibitem[Mendelson(2002{\natexlab{b}})]{mendelson2002rademacher}
Mendelson, S.
\newblock Rademacher averages and phase transitions in glivenko-cantelli classes.
\newblock \emph{IEEE transactions on Information Theory}, 48\penalty0 (1):\penalty0 251--263, 2002{\natexlab{b}}.

\bibitem[Mendelson(2018)]{mendelson2018learning}
Mendelson, S.
\newblock Learning without concentration for general loss functions.
\newblock \emph{Probability Theory and Related Fields}, 171\penalty0 (1-2):\penalty0 459--502, 2018.

\bibitem[Min et~al.(2021)Min, Wang, Zhou, and Gu]{min2021variance}
Min, Y., Wang, T., Zhou, D., and Gu, Q.
\newblock Variance-aware off-policy evaluation with linear function approximation.
\newblock \emph{Advances in neural information processing systems}, 34:\penalty0 7598--7610, 2021.

\bibitem[Mozannar \& Sontag(2020)Mozannar and Sontag]{mozannar2020consistent}
Mozannar, H. and Sontag, D.
\newblock Consistent estimators for learning to defer to an expert.
\newblock In \emph{International Conference on Machine Learning}, pp.\  7076--7087. PMLR, 2020.

\bibitem[Namkoong \& Duchi(2017)Namkoong and Duchi]{namkoong2017variance}
Namkoong, H. and Duchi, J.~C.
\newblock Variance-based regularization with convex objectives.
\newblock \emph{Advances in neural information processing systems}, 30, 2017.

\bibitem[Pia et~al.(2022)Pia, Ma, and Tzamos]{pmlr-v178-pia22a}
Pia, A.~D., Ma, M., and Tzamos, C.
\newblock Clustering with queries under semi-random noise.
\newblock In Loh, P.-L. and Raginsky, M. (eds.), \emph{Proceedings of Thirty Fifth Conference on Learning Theory}, volume 178 of \emph{Proceedings of Machine Learning Research}, pp.\  5278--5313. PMLR, 02--05 Jul 2022.
\newblock URL \url{https://proceedings.mlr.press/v178/pia22a.html}.

\bibitem[Pidan \& El-Yaniv(2011)Pidan and El-Yaniv]{pidan2011selective}
Pidan, D. and El-Yaniv, R.
\newblock Selective prediction of financial trends with hidden markov models.
\newblock \emph{Advances in Neural Information Processing Systems}, 24, 2011.

\bibitem[Pollard(1990)]{pollard1990section}
Pollard, D.
\newblock Empirical processes.
\newblock volume~2, pp.\  43--50. Institute of Mathematical Statistics, 1990.

\bibitem[Puchkin \& Zhivotovskiy(2022)Puchkin and Zhivotovskiy]{puchkin2021exponential}
Puchkin, N. and Zhivotovskiy, N.
\newblock Exponential savings in agnostic active learning through abstention.
\newblock \emph{IEEE Transactions on Information Theory}, 68\penalty0 (7):\penalty0 4651--4665, 2022.
\newblock \doi{10.1109/TIT.2022.3156592}.

\bibitem[Seitzer et~al.(2022)Seitzer, Tavakoli, Antic, and Martius]{seitzer2022on}
Seitzer, M., Tavakoli, A., Antic, D., and Martius, G.
\newblock On the pitfalls of heteroscedastic uncertainty estimation with probabilistic neural networks.
\newblock In \emph{International Conference on Learning Representations}, 2022.
\newblock URL \url{https://openreview.net/forum?id=aPOpXlnV1T}.

\bibitem[Shah et~al.(2022)Shah, Bu, Lee, Das, Panda, Sattigeri, and Wornell]{shah2022selective}
Shah, A., Bu, Y., Lee, J.~K., Das, S., Panda, R., Sattigeri, P., and Wornell, G.~W.
\newblock Selective regression under fairness criteria.
\newblock In \emph{International Conference on Machine Learning}, pp.\  19598--19615. PMLR, 2022.

\bibitem[Skafte et~al.(2019)Skafte, J{\o}rgensen, and Hauberg]{skafte2019reliable}
Skafte, N., J{\o}rgensen, M., and Hauberg, S.
\newblock Reliable training and estimation of variance networks.
\newblock \emph{Advances in Neural Information Processing Systems}, 32, 2019.

\bibitem[Talagrand(1994)]{talagrand1994sharper}
Talagrand, M.
\newblock Sharper bounds for gaussian and empirical processes.
\newblock \emph{The Annals of Probability}, pp.\  28--76, 1994.

\bibitem[Tsybakov(2004)]{tsybakov2004optimal}
Tsybakov, A.~B.
\newblock Optimal aggregation of classifiers in statistical learning.
\newblock \emph{The Annals of Statistics}, 32\penalty0 (1):\penalty0 135--166, 2004.

\bibitem[Tsybakov(2003)]{tsybakov2003optimal}
Tsybakov, A.~B.
\newblock Optimal rates of aggregation.
\newblock In \emph{Learning Theory and Kernel Machines: 16th Annual Conference on Learning Theory and 7th Kernel Workshop, COLT/Kernel 2003, Washington, DC, USA, August 24-27, 2003. Proceedings}, pp.\  303--313. Springer, 2003.

\bibitem[Valiant(1984)]{valiant1984theory}
Valiant, L.~G.
\newblock A theory of the learnable.
\newblock \emph{Communications of the ACM}, 27\penalty0 (11):\penalty0 1134--1142, 1984.

\bibitem[Van Der~Vaart et~al.(1996)Van Der~Vaart, Wellner, van~der Vaart, and Wellner]{van1996weak}
Van Der~Vaart, A.~W., Wellner, J.~A., van~der Vaart, A.~W., and Wellner, J.~A.
\newblock \emph{Weak convergence}.
\newblock Springer, 1996.

\bibitem[Vapnik \& Chervonenkis(1974)Vapnik and Chervonenkis]{vapnik1974theory}
Vapnik, V. and Chervonenkis, A.
\newblock Theory of pattern recognition, 1974.

\bibitem[Vapnik \& Chervonenkis(1971)Vapnik and Chervonenkis]{vapnik1971uniform}
Vapnik, V. and Chervonenkis, A.~Y.
\newblock On the uniform convergence of relative frequencies of events to their probabilities.
\newblock \emph{Theory of Probability \& Its Applications}, 16\penalty0 (2):\penalty0 264--280, 1971.

\bibitem[Vidyasagar(2013)]{vidyasagar2013learning}
Vidyasagar, M.
\newblock \emph{Learning and generalisation: with applications to neural networks}.
\newblock Springer Science \& Business Media, 2013.

\bibitem[Xie et~al.(2019)Xie, Ma, and Wang]{xie2019towards}
Xie, T., Ma, Y., and Wang, Y.-X.
\newblock Towards optimal off-policy evaluation for reinforcement learning with marginalized importance sampling.
\newblock \emph{Advances in Neural Information Processing Systems}, 32, 2019.

\bibitem[Xu \& Zeevi(2020)Xu and Zeevi]{xu2020towards}
Xu, Y. and Zeevi, A.
\newblock Towards optimal problem dependent generalization error bounds in statistical learning theory.
\newblock \emph{arXiv preprint arXiv:2011.06186}, 2020.

\bibitem[Yuan \& Wegkamp(2010)Yuan and Wegkamp]{yuan2010classification}
Yuan, M. and Wegkamp, M.
\newblock Classification methods with reject option based on convex risk minimization.
\newblock \emph{Journal of Machine Learning Research}, 11\penalty0 (1), 2010.

\bibitem[Zhai et~al.(2023)Zhai, Dan, Kolter, and Ravikumar]{zhai2022understanding}
Zhai, R., Dan, C., Kolter, J.~Z., and Ravikumar, P.~K.
\newblock Understanding why generalized reweighting does not improve over erm.
\newblock In \emph{The Eleventh International Conference on Learning Representations}, 2023.

\bibitem[Zhang et~al.(2023{\natexlab{a}})Zhang, Lin, Li, Adler, Rasul, Schneider, and Nevmyvaka]{zhang2023risk}
Zhang, Y., Lin, J., Li, F., Adler, Y., Rasul, K., Schneider, A., and Nevmyvaka, Y.
\newblock Risk bounds on aleatoric uncertainty recovery.
\newblock In \emph{International Conference on Artificial Intelligence and Statistics}, pp.\  6015--6036. PMLR, 2023{\natexlab{a}}.

\bibitem[Zhang et~al.(2023{\natexlab{b}})Zhang, Zheng, Dalirrooyfard, Wu, Schneider, Raj, Nevmyvaka, and Chen]{zhang2023learning}
Zhang, Y., Zheng, S., Dalirrooyfard, M., Wu, P., Schneider, A., Raj, A., Nevmyvaka, Y., and Chen, C.
\newblock Learning to abstain from uninformative data.
\newblock \emph{arXiv preprint arXiv:2309.14240}, 2023{\natexlab{b}}.

\bibitem[Zhivotovskiy \& Hanneke(2018)Zhivotovskiy and Hanneke]{zhivotovskiy2018localization}
Zhivotovskiy, N. and Hanneke, S.
\newblock Localization of vc classes: Beyond local rademacher complexities.
\newblock \emph{Theoretical Computer Science}, 742:\penalty0 27--49, 2018.

\end{thebibliography}

\clearpage
\appendix
\section{Proofs and Discussions}
In this section, we include proofs for the results established in Section~\ref{sec:main} and the corresponding discussions. We introduce additional notation and definitions that are used in the ensuing development.

Let $[n]\triangleq\{1,\cdots,n\}$. We use $\{\boldsymbol{x}\}_{i\in[n]} \triangleq \boldsymbol{x}_{1:n}$ to denote samples of $\boldsymbol{x}$ of size $n$; $\{\boldsymbol{z}\}_{i\in[n]}\equiv\boldsymbol{z}_{1:n}=\{(\boldsymbol{x}_i, y_i)\}_{i=1}^n$ is analogously defined, and they are i.i.d. samples drawn from $\bar {\dbP}$. Given  $\{\boldsymbol{z}\}_{i\in[n]}$, we use $\bplaw_n$ to denote the empirical measure.
We use $\|\cdot\|_{p}$ to denote the $\ell_p$ norm of vectors and $\|\cdot\|_{L_p}$ to denote the $L_p$ norm of random variables under $\dbP$. The indicator function $\mathbbm{1}\{\boldsymbol x\in A\}$ equals $1$ when the condition $x\in A$ is true and $0$ otherwise.  Denote $a\vee b=\max(a,b)$ and $a\wedge b=\min(a,b)$. 
Let $\sigma_1,...,\sigma_n$ be $n$ independent \textit{Rademacher} random variables. For a function  $h:\MZ \rightarrow \dbR$, define: 
$$\bplaw_n h  \triangleq \frac{1}{n} \sum_{i=1}^{n} h (\boldsymbol z_i), \bplaw h \triangleq \dbE_{\bz \sim \bplaw} h (\boldsymbol z).$$
For a family of functions $\MH \triangleq\{h:\MZ \rightarrow \dbR\}$ the \textit{Rademacher Complexity} and \textit{Rademacher Average} is defined as:
 $$\MFR_n \MH = \dbE_{\sigma_{1:n}}\bigg[ \sup\limits_{h\in \MH}\frac{1}{n} \sum_{i=1}^{n} \sigma_i h(\boldsymbol z_i)\bigg], \qquad \MFR \MH = \dbE_{\bz_{1:n},\sigma_{1:n}}\bigg[ \sup\limits_{h\in \MH}\frac{1}{n} \sum_{i=1}^{n} \sigma_i h(\boldsymbol z_i)\bigg]$$
 We further define the \textit{Local Rademacher Complexity} and \textit{Local Rademacher Average} with radius $r$ as $\MFR_n \{h\in \MH, \dbP_n h^2 \leq r\}$ and $\MFR \{h\in \MH, \dbP h^2 \leq r\}$.

\begin{definition}[Star Hull; see also \citet{bousquet2003introduction,bartlett2005local}]
The star hull of set of functions $\mathcal{F}$ is defined as 
$$\ast\mathcal{F} \equiv \{\alpha f: f\in\mathcal{F}, \alpha \in [0,1]\}$$
\end{definition}

\begin{definition}[Sub-Root Function \citep{bousquet2003introduction,bartlett2005local}]
A function $\psi: \mathbb{R}\rightarrow \mathbb{R}$ is sub-root if 
\begin{itemize}
\item $\psi$ is non-decreasing 
\item $\psi$ is non-negative
\item $\psi(r)/\sqrt{r}$ is non-increasing
\end{itemize}
And we say $r^*$ is a fixed point of $\psi$ if $\psi(r^*) = r^*$. 
\end{definition}

The following definition for VC-class could be found in~\citep{vapnik1971uniform,van1996weak}. We include them here for the sake of completeness.

\begin{definition}[VC-dimension; \citet{vapnik1971uniform}]
The VC-dimension $d_{VC}(\mathcal{F})$ of a hypothesis class \mbox{$\mathcal{F}=\{f: \mathcal{X} \mapsto \{1,-1\}\}$} is the largest cardinality of any set $S\subseteq \mathcal X$ such that $\forall \bar S \subseteq S$, $\exists f \in \mathcal{F}$:
\begin{equation*}
f(\boldsymbol  x) = 
    \begin{cases}
    1 &\text{if}~~\boldsymbol x\in \bar S\\
    -1 & \text{if}~~\boldsymbol x \in S\setminus\bar S
    \end{cases}
\end{equation*}
\end{definition}
\begin{definition}[Pseudo-dimension; \citet{pollard1990section}]
The Pseudo-dimension $d_{P}(\mathcal{F})$ of a real-valued hypothesis class {$\mathcal{G}=\{g: \mathcal{X} \mapsto [l,u]\}$ } is the VC-dimension of the hypothesis class 
\begin{equation*}
\mathcal H =\{{h:\mathcal X\times \mathbb R \mapsto\{-1,1\}}~|~h(\boldsymbol x,t) = \text{sign}(g(\boldsymbol x)-t), g\in \mathcal{G}, t\in\mathbb R\}.
\end{equation*}
\end{definition}
\subsection{Derivation of Equation~\ref{labelnoise}}\label{appendix:labelnoise}
The following derivation establishes the connection between $\mathbb{E}_{\boldsymbol z} [ \ell (f;\boldsymbol z) -\ell (f^*;\boldsymbol z)  ]$ and $ \mathbb{E}_{\boldsymbol x}[ \omega^*(\boldsymbol x) \mathbbm{1} \{f \neq f^*\}]$, which could be found in ~\cite{boucheron2005theory}. We include here for completeness.
\begin{align}
&\mathbb{E}_{\boldsymbol z} [ \ell (f;\boldsymbol z) -\ell (f^*;\boldsymbol z)  ] \nonumber\\
 = & \mathbb{E}_{\boldsymbol x,y} [ \mathbbm{1}\{y\neq f(\boldsymbol x)\} - \mathbbm{1}\{y\neq f^*(\boldsymbol x)\}]  \nonumber\\
 = & \mathbb{E}_{\boldsymbol x,y} [  \dbP [y= f^*(x)] \mathbbm{1}\{f^*(\bx)\neq f(\boldsymbol x)\} +   \dbP [y\neq  f^*(x)] \mathbbm{1}\{f^*(\bx) =  f(\boldsymbol x)\} - \dbP[y\neq f^*(\boldsymbol x)]] \label{noise_label} \\
 = & \mathbb{E}_{\boldsymbol x, y} [  \dbP [y= f^*(x)] \mathbbm{1}\{f^*(\bx)\neq f(\boldsymbol x)\} - \dbP [y\neq  f^*(x)] \mathbbm{1}\{f^*(\bx) \neq  f(\boldsymbol x)\} ]  \nonumber\\
  = & \mathbb{E}_{\boldsymbol x,y} [  (\dbP [y= f^*(x)] - \dbP [y\neq f^*(x)]) \mathbbm{1}\{f^*(\bx)\neq f(\boldsymbol x)\}\} ]  \nonumber\\
 = & \mathbb{E}_{\boldsymbol x}[ \omega^*(\boldsymbol x) \mathbbm{1} \{f \neq f^*\}]; \nonumber
\end{align}
Next, we derive  Equation~\ref{labelnoise}:
\begin{align*}
&\textbf{Var}_{\bx,y} [  \omega^*(\bx)(\mathbbm{1}\{y\neq f(\bx)\} - \mathbbm{1}\{y\neq f^*(\bx)\})] \\
\leq& \dbE_{\bx,y} [ {\omega^*}^2(\bx) (\mathbbm{1}\{y\neq f(\bx)\} - \mathbbm{1}\{y\neq f^*(\bx)\})^2]\\
=& \dbE_{\bx} [ {\omega^*}^2(\bx) \mathbbm{1}\{f^*(\bx)\neq f(\bx)\}]\\
=&\dbE_{\bx,y} [ {\omega^*}(\bx) (\mathbbm{1}\{y\neq f(\bx)\} - \mathbbm{1}\{y\neq f^*(\bx)\})\}]
\end{align*}
\subsection{Derivation of Equation~\ref{improve}}\label{sec:improve}
To see Equation~\ref{improve}, we first bound $ \dbE_{\bx} [ \mathbbm{1}\{ \omega^*(\bx) \geq c\}    \omega^*(\bx) \mathbbm{1} \{f \neq f^*\} ]$.  By leveraging Equation~\eqref{noise_label} and $ \dbE_{\bx,y} [     {\omega^*}(\bx) (\mathbbm{1} \{ f \neq y\} - \mathbbm{1} \{f^* \neq y\}) ]\leq \varepsilon $ the follow inequality holds: 
\begin{equation}
    \dbE_{\bx,y} [     {\omega^*}(\bx) (\mathbbm{1} \{ f \neq y\} - \mathbbm{1} \{f^* \neq y\}) ] = \dbE_{\bx} [     {\omega^*}^2(\bx) \mathbbm{1} \{f \neq f^*\} ] \leq \varepsilon
\end{equation}
which implies the following inequality:
\begin{align}\label{decomp_partial1}
    \dbE_{\bx} [ \mathbbm{1}\{ \omega^*(\bx) \geq c\} c   \omega^*(\bx) \mathbbm{1} \{f \neq f^*\} ] \leq \dbE_{\bx} [ \mathbbm{1}\{ \omega^*(\bx) \geq c\}    {\omega^*}^2(\bx) \mathbbm{1} \{f \neq f^*\} ] \leq \varepsilon
\end{align}
thus $ \dbE_{\bx} [ \mathbbm{1}\{ \omega^*(\bx) \geq c\}    \omega^*(\bx) \mathbbm{1} \{f \neq f^*\} ] \leq \frac{\varepsilon}{c}$. In addition, 
\begin{align}\label{decomp_partial2}
 & \dbE_{\bx} [ \mathbbm{1}\{ \omega^*(\bx) < c\}    \omega^*(\bx) \mathbbm{1} \{f \neq f^*\} ]  \\
 = & \dbE_{\bx} [ \mathbbm{1}\{ \omega^*(\bx) < c\}    \omega^*(\bx) \mathbbm{1} \{f \neq f^*\} |   \omega^*(\bx) < c] \plaw ( \omega^*(\bx) < c)  \nonumber\\
 +&  \dbE_{\bx} [ \mathbbm{1}\{ \omega^*(\bx) < c\}    \omega^*(\bx) \mathbbm{1} \{f \neq f^*\} |   \omega^*(\bx) \geq  c] \plaw ( \omega^*(\bx) \geq c)  \nonumber\\
= & \dbE_{\bx} [    \omega^*(\bx) \mathbbm{1} \{f \neq f^*\} |   \omega^*(\bx) < c] \plaw ( \omega^*(\bx) < c) \nonumber\\ 
\leq & c \cdot \plaw ( \omega^*(\bx) < c) \nonumber
\end{align}
Equation \eqref{decomp_partial1} and Equation \eqref{decomp_partial2} combined gives 
\begin{align*}
    & \dbE_{\bx,y} [    \mathbbm{1} \{f \neq y\} - \mathbbm{1} \{f^* \neq y\} ] =  \dbE_{\bx} [   \omega^*(\bx) \mathbbm{1} \{f \neq f^*\} ] \\
    =&  \dbE_{\bx} [ \mathbbm{1}\{ \omega^*(\bx) \geq c\}    \omega^*(\bx) \mathbbm{1} \{f \neq f^*\} ] +  \dbE_{\bx} [ \mathbbm{1}\{ \omega^*(\bx) < c\}    \omega^*(\bx) \mathbbm{1} \{f \neq f^*\} ] \\
    \leq & \plaw ( \omega^*(\bx) < c) c + \frac{\varepsilon}{c}.
\end{align*}

\subsection{Derivation of Equation~\ref{reg_bern}}\label{sec:regrBern}
Equation~\ref{reg_bern} is a standard result, we include the derivation here for the sake of completeness.
\begin{equation*}
\begin{split}
\textbf{Var}_{\bx,y} [ (y-f(\bx))^2 - (y-f^*(\bx))^2] &=\textbf{Var}_{\bx,y} [ (f^*(\bx)-f(\bx))(f^*(\bx) + f(\bx) -2f^*(\bx) - 2 \xi\sqrt{\tsig(\bx)})]   \\
& \leq\dbE_{\bx,\xi} [ (f^*(\bx)-f(\bx))^2( f(\bx) -f^*(\bx) - 2 \xi\sqrt{\tsig(\bx)})^2]   \\
&\leq\dbE_{\bx} [ (f^*(\bx)-f(\bx))^2( (f(\bx) -f^*(\bx))^2 + 4 \tsig(\bx))]\\
&=\dbE_{\bx} [ \dbE_{y} [ (f^*(\bx)-y)^2 - (y-f(\bx))^2]( (f(\bx) -f^*(\bx))^2 + 4 \tsig(\bx))]\\
&\leq \frac{8}{\gamma}\dbE_{\bx,y} [  ((f^*(\bx)-y)^2 - (y-f(\bx))^2)]
\end{split}
\end{equation*}

\subsection{Proof of Theorem~\ref{Risk_Bounds_Class}}
The proof readily follows from invoking Theorem~\ref{main_theorem} with  
$\ell(f;\bz) \triangleq \bm1\{ f(\bx \neq y)\} $, $\omega^*(\bx) = |2\eta^*(\bx)-1|$, $\MD(f_1,f_2,\bx) = \mathbbm{1} \{f_1\neq f_2\}$.
It can be easily verified that Assumption ~\ref{assumption1} is satisfied with $L=1, a = 1, b=1, \gamma = \inf\limits_{\bx} |2\eta^*(\bx)-1|$. 


\subsection{Proof of Theorem~\ref{Risk_Bounds_Weight}}

\begin{proof}Let $R(\eta,\boldsymbol{z}_i)\triangleq(\eta(\bx_i)-y_i)^2$. We define following function class:
\begin{equation*}
    \MH \equiv \Delta \circ R \circ \MG \equiv \bigg\{  \Delta R( \eta; \eta^*,\bz) = R( \eta ;\bz) - R( \eta^* ;\bz): \eta\in\MG\bigg\}.
\end{equation*}
which is a composite function of  hypothesis class $\MG$, loss function  $R$ and the difference operation $\Delta$. For simplicity we let $\Delta_{R,\eta} = \Delta R(\eta;\eta^*,\bz)$ when there is no ambiguity.


By definition of $\widehat \eta$, we have $\bplaw_n R(\widehat \eta;z) \leq\bplaw_n R(\eta^*;z) $, also by definition of $\Delta_{R, \widehat \eta}$ we have $\bplaw_n \Delta_{R, \widehat \eta} \leq 0$.

Next we bound $\bplaw\Delta_{R, \widehat \eta}-\bplaw_n \Delta_{R,\widehat \eta}$. Since $|y-f(\bx)| \in [0,1], \eta(\bx) \in [0,1]$, it can be easily verified that $\var_{\bx \sim \dbP}[ \Delta_{R,\eta}] \leq 2 \dbE_{\bx\sim \dbP}[\Delta_{R,\eta}]$. 
To invoke Theorem 3.3 in~\cite{bartlett2005local}, we need to find a subroot function $\psi(r)$ such that
$$\psi(r) \geq 2 \mathbb{E}{\bplaw_n\{ \Delta_{R,\widehat \eta}\in\mathcal{H}: \mathbb{E}[h^2 ]\leq r\}}.$$
To this end, we show some analysis on the Local Rademacher Average $\mathbb{E}{\MFR_n\{\Delta_{R,\widehat \eta}\in\mathcal{H}: \mathbb{E}[h^2 ]\leq r\}}$.
\begin{align*}
    \mathbb{E}\MFR_n(\Delta \circ R \circ \MG ,r) =& \mathbb{E}_{S_n \sigma_{1:n}}\bigg[\sup_{\eta\in\MG, \mathbb{E}_{\boldsymbol x,y}\big[\Delta^2_{R,\widehat{\eta}}  \big]\leq r} \frac{1}{n}\sum_{i=1}^{n} \sigma_i \Delta_{R,\widehat \eta}\bigg].
\end{align*}

The following analysis largely follows from the proof of Corollary 3.7 in~\cite{bartlett2005local}.  Since $\Delta_{R,\widehat g}$ is uniformly bounded by $2$, for any $r \geq \psi(r)$, Corollary 2.2 in~\cite{bartlett2005local} implies that with probability at least $1-\frac{1}{n}$, $\{h\in * \mathcal{H}: \bplaw h^2 \leq r\} \subseteq \{h\in * \MH: \bplaw_n h^2\leq 2r\}$. Let $\mathcal{E}\triangleq\{h\in * \MH: \plaw h^2 \leq r\} \subseteq \{h\in * \MH: \bplaw_n h^2\leq 2r\}$, then the following holds:
\begin{equation*}
    \begin{split}
    \mathbb{E} \MFR_n \{*\MH,  \bplaw h^2\leq r\} & \leq  \mathbb{P}[\mathcal{E}] \mathbb{E}[ \MFR_n \{*\MH,  \bplaw h^2\leq r\}| \mathcal{E} ]+  \mathbb{P}[\mathcal{E}^{c}] \mathbb{E}[ \MFR_n \{*\MH,   \bplaw h^2\leq r\}| \mathcal{E}^{c} ] \\
    &\leq \mathbb{E}[ \MFR_n \{*\MH, \bplaw_n h^2\leq 2r\} ] + \frac{2}{n}.
    \end{split}
\end{equation*}
Since $r^*=\psi(r^*)$, $r^*$ satisfies the following
\begin{equation}\label{eq_r_star_upper_1}
r^* \leq 20 B \mathbb{E}\MFR_n\{ *\MH, \bplaw_n h^2 \leq 2 r^*\} + \frac{22 \log n }{n}.    
\end{equation}

Next we leverage Dudley's entropy integral ~\citep{dudley2014uniform} to upper bound $ \mathbb{E}\MFR_n\{ *\MH, \bplaw_n h^2 \leq 2 r^*\}$, using the integral of covering number. Specifically, by applying the chaining bound, it follows from Theorem B.7 ~\citep{bartlett2005local} that
\begin{equation} \label{entropy_int_1}
\mathbb{E}[\MFR_n (*\MH, \bplaw_n h^2\leq 2 r^*)]
     \leq  \frac{\text{const}}{\sqrt{n}} \mathbb{E} \int_{0}^{\sqrt{2 r^*}} \sqrt{ \log \mathcal{N}_2 (\varepsilon, *\MH, \boldsymbol x_{1:n})} d\varepsilon, 
\end{equation}
where $\text{const}$ represents some universal constant.  
Next we bound the covering number  $\log \mathcal{N}_2 (\varepsilon, \MH, \boldsymbol x_{1:n})$ by $\log \mathcal{N}_2 (\varepsilon, \MG, \boldsymbol x_{1:n})$. We show that for all $\bx_{1:n}$, any $\varepsilon ^2$-cover of $\MG$ is a $\varepsilon$-cover of $\MH$. Specifically, let $\MV \subset [0,1] ^n$ be an $\varepsilon$-cover of $\MH$ on $\bx_{1:n}$ so that for all $\eta \in \MG$, $\exists \bv_{1:n} \in \MV$ so that $\sqrt{\frac{1}{n} \sum_{i\in[n]} (\eta(\bx_i)- v_i)^2    } \leq \varepsilon$. Now we show that for any $\bz_{1:n}$, the family of $ (\bv_i -y_i)^2 - (\eta_i^*-y_i)^2, i\in[n]$ is an $\varepsilon$-cover for $\MH$, where $\eta^*_i\triangleq\mathbb{P}(y_i=1|\boldsymbol{x}_i)$: 
\begin{equation*}
 \begin{split}
\sqrt{\frac{1}{n} \sum_{i\in[n]} ((\bv_i -y_i)^2  -(\eta^*-y_i)^2- \Delta_{R,\eta})^2} &=\sqrt{\frac{1}{n} \sum_{i\in[n]} (  (\bv_i - y_i)^2  - (\eta(\bx_i) -y_i)^2)^2   } \\
&=\sqrt{\frac{1}{n} \sum_{i\in[n]}  (\bv_i - \eta(\bx_i))^2(\bv_i+\eta(\bx_i) - 2y_i)^2   } \leq 4\varepsilon. 
\end{split}
 \end{equation*}
Combine the above inequality with Corollary 3.7 from~\cite{bartlett2005local}, we have the following upper bound on the entropy number of $*\MH$:
$$
\log \mathcal{N}_{2}(\varepsilon, *\MH,\boldsymbol x_{1:n})\nonumber  \leq \log \bigg\{ \mathcal{N}_{2} \bigg(\frac{\varepsilon}{2}, \MH, \boldsymbol x_{1:n} \bigg) \bigg( \ceil{\frac{2}{\varepsilon}} + 1\bigg) \bigg\} \leq \log \bigg\{ \mathcal{N}_{2} \bigg(\frac{\varepsilon}{2}, \MG, \boldsymbol x_{1:n} \bigg) \bigg( \ceil{\frac{2}{\varepsilon}} + 1\bigg) \bigg\}.
$$
Next we bound  $\frac{\text{const}}{\sqrt{n}} \mathbb{E} \int_{0}^{\sqrt{2 r^*}} \sqrt{ \log \mathcal{N}_2 (\varepsilon, *\MH, \boldsymbol x_{1:n})} d\varepsilon$  from ~\eqref{entropy_int_1}. Note that  by Haussler’s  covering
number bound ~\citep{haussler1995sphere}, the following inequality holds: $  \log \mathcal{N}_{2} \bigg(\frac{\varepsilon}{8}, \MG, \boldsymbol x _{1:n} \bigg) \leq cd_{P} \log\bigg(\frac{1}{\varepsilon}\bigg)$, we can therefore derive the following inequalities:
\begin{align*}
\frac{\text{const}}{\sqrt{n}} \mathbb{E} \int_{0}^{\sqrt{2 r^*}} & \sqrt{ \log \mathcal{N}_2 (\varepsilon, *\MH, \boldsymbol x_{1:n})} d\varepsilon 
 \leq  \frac{\text{const}}{\sqrt{n}} \mathbb{E} \int_{0}^{\sqrt{2 r^*}} \sqrt{ \log \mathcal{N}_{2} \bigg(\frac{\varepsilon}{2}, \MH, \boldsymbol x _{1:n} \bigg) \bigg( \ceil{\frac{2}{\varepsilon}} + 1\bigg)} d\varepsilon\\
& \leq \frac{\text{const}}{\sqrt{n}} \mathbb{E} \int_{0}^{\sqrt{2 r^*}} \sqrt{ \log \mathcal{N}_{2} \bigg(\frac{\varepsilon}{8}, \MG, \boldsymbol x _{1:n} \bigg) \bigg( \ceil{\frac{2}{\varepsilon}} + 1\bigg)} d\varepsilon\\
&\leq \text{const} \sqrt{\frac{d_{P}(\MG) }{n}}  \int_{0}^{\sqrt{2 r^*}} \sqrt{ \log \bigg(\frac{1}{\varepsilon} \bigg)} \leq \text{const} \sqrt{\frac{d_{P}(\MG) r^* \log (1/r^*)}{n}}\\
& \leq \text{const}  \sqrt{ \frac{d^2_{P}(\MG)}{n^2} + \frac{d_{P}(\MG) r^* \log (n/ed_{P}(\MG))}{n}},
\end{align*}
where $\text{const}$ represents some universal constants that may change from line to line. Together with~\eqref{eq_r_star_upper_1} one can solve for $$r^* \lesssim \frac{d_{P}(\MG) \log (\frac{n}{d_{P}(\MG)})}{n}.$$ 

Since $\bplaw\Delta_{R,\eta} \leq \varepsilon$, we have $\plaw |\widehat{\eta} - \eta^*|^2 \leq \varepsilon$, given that the following equality holds:
\begin{align*}
    \bplaw\Delta_{R,\eta} &= \dbE_{\bz}[R( \widehat \eta;f^*,\bz) - R(\eta^*;f^*,\bz)] \\
    &=\dbE_{\bx,y}[ (\widehat \eta(\bx) - y)^2 - (\eta^*(\bx) - y)^2] \nonumber \\
      &=\dbE_{\bx}[ |\widehat \eta(\bx) - \eta^*(\bx) |^2]. \qquad (\text{as $\eta^*(\bx) \triangleq \dbE[y]$}) 
\end{align*}
Note that since $\bigg| |\widehat \eta -\frac{1}{2}| - |\eta^* -\frac{1}{2}|\bigg|^2 \leq |\widehat \eta - \eta^*|^2$, the following readily follows: 
\begin{equation}
\dbE_{\bx}\Big[( | \widehat  \eta(\bx) - \frac{1}{2}| - |\eta^*(\bx)-\frac{1}{2}|)^2\Big] \leq \varepsilon.
\end{equation}
\end{proof}

\subsection{Proof of Theorem~\ref{optimal_lower_bound}}
\begin{proof}
The proof is by construction. We construct the full support of $\bx$ to be $\MX  =\MX_1 \cup \MX_2$. For all $\bx \in \MX_1$, let $\omega^*(\bx)=|\eta^*(\bx) - \frac{1}{2}| \geq c$ and $\bx \in \MX_2$, let $\omega^*(\bx) = |\eta^*(\bx) - \frac{1}{2}| =0$. 
By decomposing the excessive risk we have:
\begin{align*}
\dbE_{\bx,y} [   \mathbbm{1} \{\widehat f \neq y\} - \mathbbm{1} \{f^* \neq y\} ] 
     & =   \dbE_{\bx,y} [   \mathbbm{1} \{\widehat f \neq y\} - \mathbbm{1} \{f^* \neq y\} | \bx \in \MX_1] \cdot \plaw [\bx \in \MX_2] \\
     &\qquad + \underbrace{\dbE_{\bx,y} [   \mathbbm{1} \{\widehat f \neq y\} - \mathbbm{1} \{f^* \neq y\} | \bx \in \MX_2] \cdot \plaw [\bx \in \MX_2]}_{=0, \text{since for all $\bx \in \MX_2$, $\plaw(y=1|\bx) = \plaw(y=-1|\bx)$ }}.
\end{align*}
The lower bound of term $ \dbE_{\bx,y} [   \mathbbm{1} \{\widehat f \neq y\} - \mathbbm{1} \{f^* \neq y\} | \bx \in \MX_1] \geq \frac{1}{c n }$  could be established by Theorem 4 in~\cite{massart2006risk}.
\end{proof}

\subsection{Proof of Theorem~\ref{lowerbound}}

Throughout this proof, we use superscript $j$ to index the $j$th coordinate of a vector $\boldsymbol{x}$ (e.g., $\boldsymbol{x}^j$), and this is to be distinguished from the subscript $i$ that indexes the samples.
\begin{proof}
    The proof is by construction. Let $\gamma$ be the minimum margin.  Let $\MF = \{\mathbf{1}^\top \text{sign} (  \bx - \beta ) | \beta \in \mathbb R ^d\}, \MX \equiv \bigcup_{j=1}^{d} \mathcal{E}^j, \mathcal{E}^j  \equiv \alpha \cdot \be^j, \alpha \in  \{[-2,-1] \cup \{-0.1\} \cup [1,2] \cup \{0.1\} \}, j\in [d]$ where $\be^j$ is $j$-th standard basis. Let  
    \begin{align}
     \eta^*(x) = \frac{1}{2}\mathbbm{1} \{\mathbf{1}^\top \text{sign} (\bx ) \geq 0 \} \bigg\{1+ \bigg\{\mathbbm{1}\{ \|\bx \| = 0.1\} + \mathbbm{1}\{ \|\bx \|  \geq 1\} \gamma \bigg\} \bigg\}   
    \end{align}
    
    Note $\mathbf{1}^\top \text{sign} (  \bx - \beta )$ could be viewed as a composition class of the $d$-dimensional half-space and the interval function, and its VC-dimension could be bounded as $d\log(d)$~\citep{vidyasagar2013learning}. Consequently, the choice of $n$ implies that $\dbE_{\bx} [ \mathbbm{1}\{ \widehat f_{\text{wERM}} \neq f^*\} | \omega^*(\bx) >\gamma ] \lesssim \varepsilon$ for any given $\widehat \omega$ that satisfies  $\dbE_{\bx} [(\omega(\bx)  - \omega^*(\bx))^2] \leq \varepsilon$ .
    
Consider following data generative process:
\begin{align*}
& j\sim \text{Unif}\{1,2,...,d\} \\
& \alpha =   
\begin{cases}
0.1 , & \text{with prob $\frac{\gamma}{32}$} \\
-0.1, & \text{with prob $\frac{\gamma}{32}$} \\
\text{Unif}(1,2), & \text{with prob $1-\frac{3\gamma}{32}$}\\
\text{Unif}(-2,-1), & \text{with prob $\frac{\gamma}{32}$}
\end{cases}
\\
& \bx = \alpha \cdot \be^j\\ 
& y = 
\begin{cases}
1, & \text{with prob $\eta^*(\bx)$}\\
-1 & \text{with prob $1-\eta^*(\bx)$}
\end{cases}
\end{align*}
Note that the following version of the Chernoff inequality will be applied multiple times in the proof: 
\begin{align}\label{chernoff}
     \dbP [ |\sum_{i}^{m} X_i - m\dbE [X] |\geq \xi m\dbE [X] ] \leq 2e^{-\xi^2 m\dbE X /3}.
 \end{align}
By setting $\xi = 0.5$ in inequality~\eqref{chernoff} and taking a union bound over $\ME^{1:d}$ we have 
\begin{align}\label{supportsize}
        &\dbP \bigg[ \exists j, s.t., \bigg| | \mathcal{E}^j \bigcap S_n| - \frac{64 \log d \log(1/\delta)}{ \gamma \varepsilon} \bigg|
        \geq  \frac{32 \log d \log(1/\delta)}{ \gamma \varepsilon} \bigg] \nonumber\\
        &\leq 2d e^{-5 \log d \log(1/\delta)} \leq 2d e^{-5 \log(d/\delta)} \leq \delta.
\end{align}
The inequality above implies that with probability at least $1-\delta$, $\forall j \in [d]$, we have 
\begin{align}
    \frac{32 d \log(d)\log(1/\delta)}{\gamma \varepsilon}\leq |\mathcal{E}^j \bigcap S_n| \leq \frac{96 d \log(d)\log(1/\delta)}{\gamma\varepsilon}.
\end{align}
 
Next we show that for each $j \in [d]$, for sufficiently small $\gamma$ with constant probability, $\widehat \beta^j_{\text{ERM}} \neq \beta^*$. We first present our argument for the case where $d=1$. W.O.L.G, we focus on the case where $\MX = \MX^1$. Given $\bx^1_{1:n}$, let $\bx^1_{(-n_1):(-1)} \cup \bx^1_{(1):(n_2):(n_2+n_3)}$ be an ordering of $\bx_{n}$ with $\bx^1_{-i} \leq \bx^1_{-i+1} \leq \bx^1_{-1}< 0<  \bx^1_{1}\leq \bx^1_{n_2} < \bx^1_{n_2+i} < \bx^1_{n_2+n_3}<\bx^1_{n_3}$, where $\bx_{(1):(n_2)} = 0.1$ and $n_1 + n_2 + n_3 = n$.  Since $\frac{32 \log(1/\delta)}{\gamma \varepsilon} \leq n\leq \frac{96 \log(1/\delta)}{\gamma \varepsilon} $, by Chernoff inequality in~\eqref{chernoff} with $\xi =0.5$
and picking $\tau=\gamma$,  we have $ \frac{ \log(1/\delta)}{ \varepsilon} \leq n_2 \leq  \frac{3 \log(1/\delta)}{ \varepsilon}$ and $\frac{ \tau 32\log(1/\delta)}{\gamma \varepsilon} \leq \tau n_3  \leq \frac{ \tau 96\log(1/\delta)}{\gamma \varepsilon}$ that hold simultaneously with probability at least $1-2\delta$. 
Consider $\tau$ fraction of positive samples: $\bx_{(n_2+1):  (\floor*{\tau n_3})}$.  Given $n_2$, by Lemma~\ref{Chernoff}, we have , 
 \begin{align}\label{lower_chernoff}
     \dbP \bigg[\sum_{i=n_2+1}^{(n_2+\floor*{\tau n_3})}  \mathbbm{1}\{ y_{(i)} = f^*(\bx _{(i)})\}  \leq (\frac{1}{2}+\gamma)(1-\xi) \tau n_3\bigg] \geq 0.2 e^{-6\xi^2 \tau n_3 (\frac{1}{2}+\gamma)}.
 \end{align}
By inequality~\eqref{lower_chernoff} with $\xi = 3\gamma$, we have that with probability at least $0.2 e^{-\frac{54\gamma^2 \log(d) \log(1/\delta) (\frac{1}{2}+\gamma)}{\varepsilon}}$,
 \begin{align*}
     &\sum_{i=n_2+1}^{(n_2+\floor*{\tau n_3})}  \mathbbm{1}\{ y_{(i)} = f^*(\bx _{(i)})\}  \leq (\frac{1}{2}+\gamma)(1-3\gamma) n_3 \nonumber \\
    \implies & \sum_{i=n_2+1}^{(n_2+\floor*{\tau n_3})}  \mathbbm{1}\{ y_{(i)} = f^*(\bx _{(i)})\}  \leq (1+2\gamma)(1-3\gamma) \frac{n_3}{2} \\
     \text{(set $\gamma <\frac{1}{12}$)}\implies & \sum_{i=n_2+1}^{(n_2+\floor*{\tau n_3})}  \mathbbm{1}\{ y_{(i)} = f^*(\bx _{(i)})\}  \leq (1-\frac{\gamma}{2}) \frac{n_3}{2}.
 \end{align*}
Therefore,
\begin{equation} \label{mistake_ineq}
\begin{split}
& \sum_{i=n_2+1}^{n_2+(\floor*{\tau n_3})}  \mathbbm{1}\{ y_{(i)} = f^*(\bx _{(i)})\}  + \sum_{i=1}^{n_2}  \mathbbm{1}\{ y_{(i)} = f^*(\bx _{(i)})\} \\
 &\leq (1-\frac{\gamma}{2}) \frac{n_3}{2} + \frac{\gamma n_3}{4} \leq  \frac{n_3}{2}\\
 &\leq \sum_{i=n_2+1}^{( n_2+\floor*{\tau n_3})}  \mathbbm{1}\{ y_{(i)} \neq f^*(\bx _{(i)})\}.
 \end{split}
 \end{equation}
It can be easily verified that inequality~\eqref{mistake_ineq} implies that $\widehat{\beta ^j}_{\text{ERM}} \geq \bx_{(\tau n_3)}$.  To ensure that $$0.2 e^{-\frac{54\gamma^2 \log(d) \log(1/\delta) (\frac{1}{2}+\gamma)}{\varepsilon}} \geq  0.12,$$ it suffices to pick $\gamma = \sqrt{\frac{ \log(d)\log(1/\delta) }{55\varepsilon}}$. Since with probability $0.12$ we have $\widehat \beta^j _{\text{ERM}} \geq \bx_{(\tau n_3)} >0.1$, which implies that $\dbE_{\bx} [ \mathbbm{1}\{ \widehat f_{\text{ERM}} \neq f^*\} | \omega^*(\bx) >\gamma , \bx \in \mathcal{E}^1] \geq 0.5$. With markov inequality, we have with probability at least $0.1$, $0.03$ fraction of $\mathcal{E}^{1:d}$ has $\widehat \beta^j_{ERM} >0.1 $, which implies that $\dbE_{\bx} [ \mathbbm{1}\{ \widehat f_{\text{ERM}} \neq f^*\} | \omega^*(\bx) >\gamma , \bx \in \mathcal{E}^j] \geq 0.5$.   We have with probability at least $0.1$, $\dbE_{\bx} [ \mathbbm{1}\{ \widehat f_{\text{ERM}} \neq f^*\} | \omega^*(\bx) >\gamma ] \geq 0.015$.
\end{proof}

\subsection{ Proof of Theorem~\ref{Risk_Bounds_Reg}}
The proof readily follows from Theorem~\ref{main_theorem} with  $\ell(f;\bz) \triangleq (y-f(\bx))^2$, $\omega^*(\bx) = \frac{C}{{\sigma^2}^*(\bx)}$ and $ \MD(f^*(\bx),\widehat f(\bx))  = \frac{{\sigma^2}^*(\bx)}{C} (f^*-\widehat f)^2$.  It can be easily verified that Assumption ~\ref{assumption1} is satisfied with $a=8+\frac{8c_2^2}{\sqrt{\gamma}}, b=\frac{4}{C\gamma},L=\frac{c_2}{\sqrt{\gamma}}$.

\subsection{Proof of Theorem~\ref{Risk_Bounds_Weight_reg} and discussion}\label{sec:reg_variance_bound}

Our analysis of learning $\widehat{\sigma^2}(\bx)$ is based on the following negative log-likelihood loss: 
\begin{equation} \label{mleloss}
   \ell_{\text{NLL}}(\sigma^2,f,\bz) \triangleq  \log(\sigma^2(\bx)) + \frac{(y-f( \bx))^2}{\sigma^2(\bx)}. 
\end{equation}
In particular, we seek to minimize risk of the form in Equation~\eqref{mleloss} while restricting $f$ and $\sigma^2$ to be in hypothesis classes $\widetilde \MF \subseteq \MF, \widetilde \MG \subseteq \MG$ that satisfy $ \frac{ (y- f(\bx))^2}{ \sigma^2(\bx)} \leq 4c_2^2 $, uniformly for all $\bz=(\bx,y)$. For learning $\tsig(\bx)$. For simplicity, we assume $\dbE_{\xi} [(\xi^2-1)^2] $ being a constant. 

\paragraph{Verification of Bernstein-type condition} We show that the risk function in Equation~\eqref{mleloss} satisfies some Bernstein-type condition. Since $ \frac{ (y- f(\bx))^2}{ \sigma^2(\bx)} \leq 4c_2^2 $, it then follows that 
$|\ell_{\text{NLL}}(\sigma^2,f,\bz) - \ell_{\text{NLL}}(\tsig,f^*,\bz)| \leq 5c_2^2+2 \log (\frac{c_2}{\gamma})$. In following analysis, we let $C = 5c_2^2+2 \log (\frac{c_2}{\gamma})$. 
The expectation term then satisfies:
\begin{align}
    &\dbE_{\bz} [\ell_{\text{NLL}}(\sigma^2,f,\bz) - \ell_{\text{NLL}}(\tsig,f^*,\bz)]  \nonumber \\
=& \dbE_{\bx} \bigg[ \frac{(f(\bx)-f^*(\bx))^2}{\sigma^2(\bx)} -1 + \frac{\tsig (\bx)}{ \sigma^2(\bx)} - \log\big(\frac{\tsig (\bx)}{\sigma^2(\bx)}\big)\bigg] \label{eq_mid}\\
\geq & \dbE_{\bx,\xi} \bigg[ \frac{(f(\bx)-f^*(\bx))^2}{\sigma^2(\bx)} +  \frac{\big ( \frac{\tsig(\bx)}{\sigma^2(\bx)} - 1\big)^2}{2} \cdot \frac{\xi^4}{\max\{ \xi^4,\big(\xi^2\frac{\tsig(\bx)}{\sigma^2(\bx)}\big)^2\}}\bigg];
\end{align}
the last inequality leverages the fact that for all $t>0$
\begin{equation}\label{ineq_log_case}
    \log(\frac{1}{t})\geq 1-t+\frac{(t-1)^2}{2 \max(1,t)}.
\end{equation}
For the variance term, the following holds:
\begin{align*}
&\textbf{Var}_{\bz} [(\ell_{\text{NLL}}(\sigma^2,f,\bz) - \ell_{\text{NLL}}(\tsig,f^*,\bz))] \\
=&~~\dbE_{\bz} [(\ell_{\text{NLL}}(\sigma^2,f,\bz) - \ell_{\text{NLL}}(\tsig,f^*,\bz))^2] -\dbE_{\bz} [(\ell_{\text{NLL}}(\sigma^2,f,\bz) - \ell_{\text{NLL}}(\tsig,f^*,\bz))]^2 \\
\leq&~~\dbE_{\bx,\xi} \bigg[ \bigg(\frac{(f(\bx)-f^*(\bx))^2}{\sigma^2(\bx)} - \xi^2+ \frac{ \xi^2 \tsig (\bx)}{ \sigma^2(\bx)} - \frac{2\xi \sqrt{\tsig(\bx)}(f(\bx)-f^*(\bx))}{\sigma^2(\bx)} -  \log\big(\frac{\tsig (\bx)}{\sigma^2(\bx)}\big) \bigg)^2\bigg];
\end{align*}
this can be further decomposed as
\begin{align}\label{variance_bound}
\dbE_{\bx,\xi} \bigg[ &\bigg((\xi^2-1)\bigg(\frac{ \tsig (\bx)}{ \sigma^2(\bx)} -1\bigg) - \frac{2\xi \sqrt{\tsig(\bx)}(f(\bx)-f^*(\bx))}{\sigma^2(\bx)} \bigg)^2 \bigg]  \nonumber\\
 & + \dbE_{\bx} \bigg[ \bigg(\frac{(f(\bx)-f^*(\bx))^2}{\sigma^2(\bx)} -1 + \frac{\tsig (\bx)}{ \sigma^2(\bx)} - \log\big(\frac{\tsig (\bx)}{\sigma^2(\bx)}\big) \bigg)^2\bigg]       \nonumber \\
 & +  2 \dbE_{\bx,\xi} \bigg[ \bigg(\frac{(f(\bx)-f^*(\bx))^2}{\sigma^2(\bx)} -1 + \frac{\tsig (\bx)}{ \sigma^2(\bx)} - \log\big(\frac{\tsig (\bx)}{\sigma^2(\bx)}\big) \bigg)  \bullet \nonumber \\
&\hspace*{3cm}\bigg((\xi^2-1)\bigg(\frac{ \tsig (\bx)}{ \sigma^2(\bx)} -1\bigg)- \frac{2\xi \sqrt{\tsig(\bx)}(f(\bx)-f^*(\bx))}{\sigma^2(\bx)} \bigg)    \bigg] \nonumber \\
& \leq \underbrace{2\dbE_{\bx,\xi} \bigg[  \bigg((\xi^2-1) ^2\bigg(\frac{ \tsig (\bx)}{ \sigma^2(\bx)} -1\bigg) ^2\bigg]}_{\text{Term I}} + \underbrace{2 \dbE_{\bx,\xi}  \bigg[ \bigg(\frac{\xi \sqrt{\tsig(\bx)}(f(\bx)-f^*(\bx))}{\sigma^2(\bx)} \bigg)^2 \bigg]}_{\text{Term II}} \nonumber \\
&\hspace*{1cm}+ \underbrace{ C\dbE_{\bz} [\ell_{\text{NLL}}(\sigma^2,f,\bz) - \ell_{\text{NLL}}(\tsig,f^*,\bz)]}_{\text{Term III}}.
\end{align}
\textbf{Bounding Term I:} Due to the fact that for all $t>0$, $\max\{2,2t\}(-\log(t)-1+t) \geq  (t-1)^2$, the following holds:
\begin{align*}
&\dbE_{\bx} \bigg[ 2\max\bigg\{ 1, \frac{\tsig (\bx)}{\sigma^2(\bx)}\bigg\}\bigg(-1 + \frac{\tsig (\bx)}{ \sigma^2(\bx)} - \log\big(\frac{\tsig (\bx)}{\sigma^2(\bx)}\big) \bigg)\bigg] \nonumber \\
=&~~\dbE_{\bx,\xi} \bigg[ 2 \xi^2 \max\bigg\{ 1, \frac{\tsig (\bx)}{\sigma^2(\bx)}\bigg\}\bigg(-1 + \frac{\tsig (\bx)}{ \sigma^2(\bx)} - \log\big(\frac{\tsig (\bx)}{\sigma^2(\bx)}\big) \bigg)\bigg]
\geq \dbE_{\bx} \bigg[  \bigg(\frac{ \tsig (\bx)}{ \sigma^2(\bx)} -1\bigg) ^2\bigg].
\end{align*}
On the other hand, since for all $f \in \widetilde \MF, \sigma^2 \in \widetilde \MG$,$ \frac{ (y- f(\bx))^2}{ \sigma^2(\bx)} \leq 4c_2^2$, we further have $ \frac{\xi^2 \tsig(\bx)}{\sigma^2(\bx)} \leq 2c^2_2+4/c^2_3$.
Term I could be bounded as:
\begin{equation*}
    2\dbE_{\bx,\xi} \bigg[  \bigg((\xi^2-1) ^2\bigg(\frac{ \tsig (\bx)}{ \sigma^2(\bx)} -1\bigg) ^2\bigg] \leq  \bigg( 4c_2^2+\frac{4}{c_3^2}\bigg) \dbE_{\bx,\xi} \bigg[  -1 + \frac{\tsig (\bx)}{ \sigma^2(\bx)} - \log\big(\frac{\tsig (\bx)}{\sigma^2(\bx)}\big)\bigg].
\end{equation*}

\noindent\textbf{Bounding Term II: } Since for all $f \in \widetilde \MF, \sigma^2 \in \widetilde \MG$,$ \frac{ (y- f(\bx))^2}{ \sigma^2(\bx)} \leq 4c_2^2$, we further have $ \sqrt{\frac{\xi \tsig(\bx)}{\sigma^2(\bx)}} \leq 2c_2+2/c_3$. Consequently, we have $$2 \dbE_{\bx,\xi}  \bigg[ \bigg(\frac{\xi \sqrt{\tsig(\bx)}(f(\bx)-f^*(\bx))}{\sigma^2(\bx)} \bigg)^2 \bigg] \leq \bigg( 4c_2^2+\frac{4}{c_3^2}\bigg) \dbE_{\bx} \bigg[  \frac{(f(\bx)-f^*(\bx))^2}{\sigma^2(\bx)} \bigg].$$

As a result, the following holds:
\begin{align*}
\textbf{Var}_{\bz} &[(\ell_{\text{NLL}}(\sigma^2,f,\bz) - \ell_{\text{NLL}}(\tsig,f^*,\bz))] \\
\lesssim  &\bigg( C+ c_2^2+\frac{1}{c_3^2}   \bigg) \dbE_{\bz} [\ell_{\text{NLL}}(\sigma^2,f,\bz) - \ell_{\text{NLL}}(\tsig,f^*,\bz)].
\end{align*}
Next, we move on to the proof of the theorem statement. 
\begin{proof}
Let $B = \bigg( C+ c_2^2+\frac{1}{c_3^2}   \bigg) $, the Bernstein constant.
Define the following function class:
\begin{equation*}
    \MH \equiv \Delta \circ L \circ \widetilde\MF\times \widetilde\MG \equiv \bigg\{  \Delta \ell_{NLL}(\sigma^2,f;\tsig,f^*,\bz)  =  \ell_{\text{NLL}}(\sigma^2,f,\bz)  -  \ell_{\text{NLL}}(\tsig,f^*,\bz) :  f \in \widetilde \MF,\sigma^2 \in \widetilde \MG\bigg\}
\end{equation*}
which is a composite function of  hypothesis class $\widetilde\MF\times \widetilde\MG$, loss function  $L$ and the difference operation $\Delta$. 
For simplicity we let $\Delta_{\ell,\sigma^2,f} = \Delta \ell_{\text{NLL}}( \widehat \sigma^2, \widehat f;\tsig,f^*,\bz)$ when there is no ambiguity. By definition of $\big(\widehat f, \widehat{ \sigma^2}\big)$, we have $\bplaw_n \ell_{\text{NLL}}( \widehat\sigma^2, \widehat f,\bz)   \leq\bplaw_n  \ell_{\text{NLL}}(\tsig,f^*,\bz) $ and therefore $\bplaw_n \Delta_{\ell,\sigma^2,f} \leq 0$.

Next we bound $\bplaw \Delta_{\ell,\sigma^2,f}  - \bplaw_n \Delta_{\ell,\sigma^2,f} $. According to~\eqref{variance_bound}, we have $\var_{\bz}[   \Delta_{\ell,\sigma^2,f}] \leq B \dbE_{\bz}[  \Delta_{\ell,\sigma^2,f}]$. 
To invoke Theorem 3.3 in~\cite{bartlett2005local}, we need to find a sub-root function $\psi(r)$ such that
$$\psi(r) \geq 2B \mathbb{E}{\bplaw_n\{  \Delta_{\ell,\sigma^2,f}\in\mathcal{H}: \mathbb{E}[h^2 ]\leq r\}}.$$
To find $\psi(r)$, we show some analysis on the Local Rademacher Average $\mathbb{E}{\MFR_n\{ \Delta_{\ell,\sigma^2,f}\in\mathcal{H}: \mathbb{E}[h^2 ]\leq r\}}$. Note that
\begin{equation*}
    \mathbb{E}\MFR_n(\MH,r) =\mathbb{E}_{S_n \sigma_{1:n}}\Big[\sup_{f \in \widetilde \MF,\sigma^2 \in \widetilde \MG, \mathbb{E}_{\boldsymbol x,y}[ \Delta_{\ell,\sigma^2,f}  ]\leq r} \frac{1}{n}\sum_{i=1}^{n} \sigma_i  \Delta_{\ell,\sigma^2,f}\Big],
\end{equation*}
and we have $h(\bx) \in [-C,C]$. By leveraging some analysis from the proof in Corollary 3.7 in~\cite{bartlett2005local}, we bound $\{h\in * \MH: \bplaw h^2 \leq r\}$ using $\{h\in * \MH: \bplaw_n h^2 \leq r\}$ where the latter could be applied in the entropy integral. Since $ \Delta_{\ell,\sigma^2,f}$ is uniformly bounded by $2C$, for any $r \geq \psi(r)$, Corollary 2.2 in~\cite{bartlett2005local} implies that with probability at least $1-\frac{1}{n}$, $\{h\in * \MH: \bplaw h^2 \leq r\} \subseteq \{h\in * \MH: \bplaw_n h^2\leq 2r\}$. Let $\mathcal{E}$ be event that $\{h\in * \MH: \bplaw h^2 \leq r\} \subseteq \{h\in * \MH: \bplaw_n h^2\leq 2r\}$ holds, then we have
\begin{equation*}
\begin{split}
\mathbb{E} \MFR_n \{*\MH,  \bplaw h^2\leq r\}
    &\leq  \mathbb{P}[\mathcal{E}] \mathbb{E}[ \MFR_n \{*\MH,  \bplaw h^2\leq r\}| \mathcal{E} ]+  \mathbb{P}[\mathcal{E}^{c}] \mathbb{E}[ \MFR_n \{*\MH,   \bplaw h^2\leq r\}| \mathcal{E}^{c} ] \\
    &\leq  \mathbb{E}[ \MFR_n \{*\MH, \bplaw_n h^2\leq 2r\} ] + \frac{2C^2}{n}.
\end{split}
\end{equation*}
Since $r^*=\psi(r^*)$, $r^*$ satisfies 
\begin{equation}\label{eq_r_star_upper_2}
r^* \leq 20 BC \mathbb{E}\MFR_n\{ *\MH, \bplaw_n h^2 \leq 2 r^*\} + \frac{22C^2 \log n }{n}.    
\end{equation}
Next we leverage the entropy integral ~\citep{dudley2014uniform} to upper bound $ \mathbb{E}\MFR_n\{ *\MH, \bplaw_n h^2 \leq 2 r^*\} $ using the integral of covering number. Apply  the chaining bound, it follows from Theorem B.7 ~\citep{bartlett2005local} that
 \begin{equation}\label{entropy_int_2}
\mathbb{E}[\MFR_n (*\MH, \bplaw_n h^2\leq 2 r^*)]
     \leq \frac{\text{const}}{\sqrt{n}} \mathbb{E} \int_{0}^{\sqrt{2 r^*}} \sqrt{ \log \mathcal{N}_2 (\varepsilon, *\MH, \boldsymbol x_{1:n})} d\varepsilon,
\end{equation}
where $\text{const}$ is some universal constant.  

Next we bound the covering number  $\log \mathcal{N}_2 (\varepsilon, *\MH, \boldsymbol x_{1:n})$ by $\log \mathcal{N}_2 (\varepsilon, \widetilde\MG, \boldsymbol x_{1:n})+\log \mathcal{N}_2 (\varepsilon, \widetilde\MF, \boldsymbol x_{1:n})$. We show that for all $\bx_{1:n}$, the composition of an $\varepsilon$-cover of $\widetilde \MF$ and an $\varepsilon$-cover of $ \widetilde \MG$ gives rise to a an $\varepsilon$-cover of $\MH$. Specifically, let $\MV \subset [c_3,\frac{1}{\gamma}] ^n$ be an $\varepsilon$-cover of $\widetilde \MG$ on $\bx_{1:n}$ so that for all $\sigma^2 \in \widetilde \MG$, $\exists v_{1:n} \in \MV$ so that $\sqrt{\frac{1}{n} \sum_{i\in[n]} ( \sigma^2(\bx_i)- v_i)^2    } \leq    \frac{\varepsilon}{4c_2^2/c_3+1/\gamma c_3}$, $\MU \subset [-1,1] ^n$ be an $\varepsilon$-cover of $\widetilde \MF$ on $\bx_{1:n}$ so that for all $f \in \widetilde \MF$, $\exists u_{1:n} \in \MU$ so that $\sqrt{\frac{1}{n} \sum_{i\in[n]} (f(\bx_i)- u_i)^2    } \leq \frac{\varepsilon}{ \sqrt{  1/c_3^2(c_2^2/\gamma +16)}} $. Next, we show that for any $\bz_{1:n}$, the family of $ \frac{ (u_i -y_i)^2}{v_i} - \frac{(f^*(\bx_i)-y_i)^2}{\tsig (\bx_i)} + \log\big( \frac{v_i}{\tsig (\bx_i)}\big), i \in[n]$ is an $\varepsilon$-cover for $\MH$

 \begin{align}
      &\sqrt{\frac{1}{n} \sum_{i\in[n]} \bigg(\frac{ (u_i -y_i)^2}{v_i} - \frac{(f^*(\bx_i)-y_i)^2}{\tsig (\bx_i)} + \log\big( \frac{v_i}{\tsig (\bx_i)}\big) - \Delta_{\ell,\sigma^2,f} \bigg) ^2 } \nonumber \\
    =& \sqrt{\frac{1}{n} \sum_{i\in[n]} \bigg(  \frac{ (u_i -y_i)^2}{v_i} - \frac{(f(\bx_i)-y_i)^2}{\sigma^2(\bx_i)} + \log\big( \frac{v_i}{\sigma^2 (\bx_i)}\big) \bigg) ^2 } \nonumber \\
     =& \sqrt{\frac{1}{n} \sum_{i\in[n]} \bigg(  \frac{ (u_i -y_i)^2}{v_i} -\frac{ (f(\bx_i) -y_i)^2}{v_i} + \frac{ (f(\bx_i) -y_i)^2}{v_i} - \frac{(f(\bx_i)-y_i)^2}{\sigma^2(\bx_i)} +\log\big( \frac{v_i}{\sigma^2 (\bx_i)}\big)   \bigg ) ^2} \nonumber \\
          \leq & \sqrt{\frac{1}{n} \sum_{i\in[n]} \bigg(  \frac{ (u_i -f(\bx_i))(u_i +f(\bx_i) -2y_i)}{v_i} + \frac{ (f(\bx_i) -y_i)^2}{ \sigma^2(\bx_i)} \cdot \frac{(\sigma^2(\bx_i) - v_i)}{v_i} + \log\big( \frac{v_i}{\sigma^2 (\bx_i)}\big)  \bigg) ^2 }\nonumber \\
         =& \sqrt{\frac{8}{n} \sum_{i\in[n]}  1/c_3^2(c_2^2/\gamma +16) (u_i - f(\bx_i))^2 + (4c_2^2/c_3+1/\gamma c_3)^2 (v_i - \sigma^2(\bx_i))^2   } \nonumber \\
         \leq&
        8\varepsilon \nonumber.
 \end{align}
Clearly, above inequality implies that $ \mathcal{N}_{2}(\varepsilon, \MH,\boldsymbol x_{1:n})
    \leq   \mathcal{N}_{2} \bigg(\frac{\varepsilon}{4c_2^2/c_3+1/\gamma c_3}, \widetilde \MG, \boldsymbol x_{1:n} \bigg)  \mathcal{N}_{2} \bigg(\frac{\varepsilon}{ \sqrt{ 1/c_3^2(c_2^2/\gamma +16)}} , \widetilde \MF, \boldsymbol x_{1:n} \bigg) $. Combining the above inequality with Corollary 3.7 from~\cite{bartlett2005local}  we can bound the entropy number of $*\MH$:
\begin{align*}
    & \log \mathcal{N}_{2}(\varepsilon, *\MH,\boldsymbol x_{1:n})
    \leq \log \bigg\{ \mathcal{N}_{2} \bigg(\frac{\varepsilon}{2}, \MH, \boldsymbol x_{1:n} \bigg) \bigg( \ceil{\frac{2}{\varepsilon}} + 1\bigg) \bigg\} \\
    \leq& \log \bigg\{ \mathcal{N}_{2} \bigg(\frac{\varepsilon}{4c_2^2/c_3+1/\gamma c_3}, \widetilde \MG, \boldsymbol x_{1:n} \bigg)  \mathcal{N}_{2} \bigg(\frac{\varepsilon}{ \sqrt{ 1/c_3^2(c_2^2/\gamma +16)}} , \widetilde \MF, \boldsymbol x_{1:n} \bigg) \bigg( \ceil{\frac{2}{\varepsilon}} + 1\bigg) \bigg\}.
\end{align*}
Next we bound $\frac{\text{const}}{\sqrt{n}} \mathbb{E} \int_{0}^{\sqrt{2 r^*}} \sqrt{ \log \mathcal{N}_2 (\varepsilon, *\MH, \boldsymbol x_{1:n})} d\varepsilon$  from ~\eqref{entropy_int_2}. Note that 
\begin{align*}
 &\log \bigg\{ \mathcal{N}_{2} \bigg(\frac{\varepsilon}{4c_2^2/c_3+1/\gamma c_3}, \widetilde \MG, \boldsymbol x_{1:n} \bigg)  \mathcal{N}_{2} \bigg(\frac{\varepsilon}{ \sqrt{ 1/c_3^2(c_2^2/\gamma +16)}} , \widetilde \MF, \boldsymbol x_{1:n} \bigg) \bigg\} \\
 \leq & c(d_{P}( \widetilde \MG) + d_{P}( \widetilde \MF)) \log\bigg(\frac{c_2^2 +1+ c_2^2/\gamma}{\varepsilon}  \cdot \frac{1}{c_3^2}\bigg).
\end{align*}
Consequently, 
\begin{align*}
&\frac{\text{const}}{\sqrt{n}} \mathbb{E} \int_{0}^{\sqrt{2 r^*}} \sqrt{ \log \mathcal{N}_2 (\varepsilon, *\MH, \boldsymbol x_{1:n})} d\varepsilon \nonumber \\
\leq &~~\frac{\text{const}}{\sqrt{n}} \mathbb{E} \int_{0}^{\sqrt{2 r^*}} \sqrt{ \log \mathcal{N}_{2} \bigg(\frac{\varepsilon}{2}, \MH, \boldsymbol x _{1:n} \bigg) \bigg( \ceil{\frac{2}{\varepsilon}} + 1\bigg)} d\varepsilon\\
\leq&~~\text{const} \sqrt{\frac{ (d_{P}( \widetilde \MG) + d_{P}( \widetilde \MF)) }{n}}  \int_{0}^{\sqrt{2 r^*}} \sqrt{ \log \bigg(\frac{1}{\varepsilon} \bigg)} \\ 
\leq&~~\text{const} \sqrt{\frac{ (d_{P}( \widetilde \MG) + d_{P}( \widetilde \MF)) r^* \log ( 1/c_3^2 + c^2_2/c_3^2+c_2^2/c_3^2\gamma)/r^*)}{n}}\\
\leq&~~\text{const}  \sqrt{ \frac{ (d_{P}( \widetilde \MG) + d_{P}( \widetilde \MF)) ^2}{n^2} + \frac{  (d_{P}( \widetilde \MG) + d_{P}( \widetilde \MF)) r^* \log ( (1/c_3^2 + c^2_2/c_3^2+c_2^2/c_3^2\gamma) n/e(d_{P}( \widetilde\MF)+d_{P}(  \widetilde \MG)))}{n}},
\end{align*}
where $\text{const}$ corresponds to some universal constant. Together with \eqref{eq_r_star_upper_2} one can solve for $$r^* \lesssim \frac{ B^2C^2(d_{P}( \MG) + d_{P}(  \MF)) \log ( (1/c_3^2 + c^2_2/c_3^2+c_2^2/c_3^2\gamma) n/ (d_{P}( \MG) + d_{P}(  \MF)))}{n}.$$ 
Since $\bplaw  \Delta \ell_{NLL}( \widehat \sigma^2, \widehat f;\tsig,f^*,\bz) = \dbE_{\bx} \bigg[ \frac{(f(\bx)-f^*(\bx))^2}{\sigma^2(\bx)} -1 + \frac{\tsig (\bx)}{ \sigma^2(\bx)} -\log\big(\frac{\tsig (\bx)}{\sigma^2(\bx)}\big)\bigg]  \leq \frac{\varepsilon}{ 1/c^2_3}$, one can leverage the inequality $  \log(\frac{1}{t})\geq 1-t+\frac{(t-1)^2}{2 \max(1,t^2)}.$ to conclude that 
\begin{equation*}
\dbE_{\bx}\bigg[ \bigg( \frac{1}{ \widehat{\sigma^2}(\bx)} - \frac{1}{ \tsig(\bx)} \bigg)^2\bigg] \leq \varepsilon.
\end{equation*}
This completes the proof. 
\end{proof}

\paragraph{Discussion} Note that the risk bound of learning the variance function using NLL loss is also studied in \cite{zhang2023risk}, wherein Theorem 1 suggests a rate of order $\tilde O\big( \frac{1}{\sqrt{n} }\big)$. On the other hand,  the bound in Theorem~\ref{Risk_Bounds_Weight_reg} of this work is of the order $\tilde O\big( \frac{1}{ n}\big)$. Such improvement of learning $\tsig(\bx)$ might be of independent interest. Compared to risk bounds in \cite{zhang2023risk}, the major improvement comes from an application of the Local Rademacher Complexity \citep{bartlett2005local} analysis 
under a Bernstein-type condition.


\subsection{Proof of Theorem~\ref{main_theorem} and discussion}
\begin{proposition}
Suppose Assumption~\ref{assumption1} holds. Let $f^* \in \MF$ and $\omega^*\in \MW$, and suppose we have $\widehat \omega \in \MW$ s.t. $\dbE_{\bx}[(\widehat \omega(\bx) - \omega^*(\bx))^2] \leq \frac{\varepsilon}{b}$. Given i.i.d. samples $S_n = \{(\boldsymbol x_i,y_i)\}_{i=1}^{n}$ drawn from the data generative process, let $
\widehat{f} \triangleq \argmin\nolimits_{f\in \MF} \frac{1}{n} \sum_{\bz_i \in S_n} \widehat \omega(\bx) \ell(f;\bz_i)
$. Then for any $\varepsilon>0, \delta>0$, the following holds simultaneously with probability at least $1-\delta$:
\begin{equation*}
\dbE_{\bx} [ {\widehat \omega}^2 (\bx) \MD(f^*(\bx),\widehat f(\bx)) ] \leq \varepsilon, \quad \dbE_{\bx} [  {\omega^*}^2 (\bx) \MD(f^*(\bx),\widehat f(\bx)) ] \leq \varepsilon, \quad \dbE_{\bx} [ {\widehat \omega}(\bx) {\omega^*} (\bx) \MD(f^*(\bx),\widehat f(\bx)) ] \leq \varepsilon, 
\end{equation*}
as long as the sample size requirement is satisfied:
$$n \gtrsim \frac{ c_1^2a^2(d(\MF) \log(\frac{1}{\varepsilon}) + \log (L)+ \log (c_1)+ \log(\frac{1}{\delta}))}{   \varepsilon }+ \frac{ c_1a\log(\frac{1}{\delta})}{\varepsilon}.$$


\end{proposition}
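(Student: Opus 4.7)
The plan is to run a local Rademacher complexity argument on the weighted excess-loss class
\[
\MH \triangleq \{h_f(\bz) \triangleq \widehat{\omega}(\bx)(\ell(f,\bz)-\ell(f^*,\bz)) : f \in \MF\},
\]
exploiting $\bplaw_n h_{\widehat{f}} \leq 0$ by optimality of the weighted ERM estimator, and then to unwrap the resulting bound on $\bplaw h_{\widehat{f}}$ into the three target inequalities via the semi-random noise identity~\eqref{semi_random}.

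\emph{Step 1 (relaxed Bernstein condition).} I first derive an additive-error Bernstein bound for $\MH$. From the second line of~\eqref{semi_random},
\[
\dbE_{\bz}[h_f^2] = \dbE_{\bx}\bigl[\widehat{\omega}^2(\bx)\,\dbE_y[(\ell(f,\bz)-\ell(f^*,\bz))^2]\bigr] \leq \dbE_{\bx}[\widehat{\omega}^2\,\MD(f^*,f,\bx)].
\]
Splitting $\widehat{\omega}^2 = \widehat{\omega}\omega^* + \widehat{\omega}(\widehat{\omega}-\omega^*)$, applying~\eqref{assume_1} (together with the first line of~\eqref{semi_random}) to the first piece, and bounding the second by $\MD\leq b$ and $\dbE[(\widehat{\omega}-\omega^*)^2]\leq \varepsilon/b$ through Cauchy--Schwarz, I obtain
\[
\textbf{Var}[h_f] \leq \dbE[h_f] + C\varepsilon
\]
for a problem-dependent constant $C$ depending on $c_1$, $a$ and $b$.

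\emph{Step 2 (local Rademacher with additive slack).} This is the main obstacle. I would redo the peeling/Talagrand proof of Theorem~3.3 in~\cite{bartlett2005local}, carrying the $C\varepsilon$ slack through every invocation of the variance inequality. The output is an alternative theorem asserting that, with probability at least $1-\delta$, every $f\in\MF$ satisfies
\[
\bplaw h_f - K\bplaw_n h_f \leq K r^* + \tfrac{K c_1 a\log(1/\delta)}{n} + K\varepsilon,
\]
where $r^*$ is the fixed point of a sub-root majorant of $\dbE\MFR_n\{h\in\ast\MH : \bplaw_n h^2\leq r\}$. The Lipschitz-contraction bound $|h_{f_1}-h_{f_2}|\leq c_1 L|f_1-f_2|$ from~\eqref{assume0} reduces covering $\MH$ to covering $\MF$, and a Dudley-entropy-integral computation identical in spirit to the one used in the proof of Theorem~\ref{Risk_Bounds_Weight} yields $r^* \lesssim \tfrac{c_1^2 a^2 d(\MF)\log(c_1 L n)}{n}$. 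Specializing at $f=\widehat f$ and using $\bplaw_n h_{\widehat f}\leq 0$ gives $\bplaw h_{\widehat f}\lesssim \varepsilon$ under the stated sample-size requirement.

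\emph{Step 3 (unwrapping).} The first line of~\eqref{semi_random}, applied conditionally on $\bx$ and multiplied by $\widehat{\omega}(\bx)$, gives the identity $\bplaw h_{\widehat f} = \dbE_{\bx}[\widehat{\omega}\omega^*\MD(f^*,\widehat f,\bx)]$, which furnishes the middle target bound. To pass to $\dbE[\widehat{\omega}^2\MD]$ I expand $\widehat{\omega}^2\MD = \widehat{\omega}\omega^*\MD + \widehat{\omega}(\widehat{\omega}-\omega^*)\MD$ and estimate the cross term via weighted Cauchy--Schwarz,
\[
\dbE[\widehat{\omega}|\widehat{\omega}-\omega^*|\MD] \leq c_1\sqrt{\dbE[(\widehat{\omega}-\omega^*)^2\MD]\cdot \dbE[\MD]},
\]
then solve the resulting quadratic inequality in $\sqrt{\dbE[\widehat{\omega}^2\MD]}$ (since $\dbE[(\widehat{\omega}-\omega^*)^2\MD]\leq \varepsilon$); the symmetric expansion handles $\dbE[{\omega^*}^2\MD]$.

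\emph{Main obstacle.} The technical heart of the argument is Step~2: the sub-root fixed-point theorem of~\cite{bartlett2005local} assumes an exact Bernstein condition $\textbf{Var}[h]\leq B\dbE[h]$, whereas here the variance is controlled only up to an additive $C\varepsilon$. One must re-run the peeling argument and Talagrand concentration carefully so that this slack contributes at most an $O(\varepsilon)$ additive term to the high-probability inequality without degrading the $O(1/n)$ rate; this is precisely the ``alternative version of Theorem~3.3'' referenced in the remark following the theorem statement.
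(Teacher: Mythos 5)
Your overall strategy is the same as the paper's: define the weighted excess-loss class $\MH$, exploit $\bplaw_n h_{\widehat f}\leq 0$, establish a Bernstein condition with an additive $\varepsilon$ slack, re-prove a version of Bartlett--Bousquet--Mendelson Theorem~3.3 that tolerates that slack (this is exactly the paper's Lemma~\ref{Bartlett_Lemma_variant}), bound the fixed point via a Dudley entropy integral with the $c_1L$ Lipschitz contraction, and then unwrap $\bplaw h_{\widehat f}=\dbE_{\bx}[\widehat\omega\,\omega^*\MD]\lesssim\varepsilon$ into the three stated bounds. Steps~1--2 of your plan coincide with the paper's proof.

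The place where your plan departs --- and where it has a genuine gap --- is the handling of the cross term, both in deriving the relaxed Bernstein condition (Step~1) and in unwrapping (Step~3). In Step~1 you bound $\dbE_{\bx}[\widehat\omega(\widehat\omega-\omega^*)\MD]$ by ``$\MD\leq b$ and $\dbE[(\widehat\omega-\omega^*)^2]\leq\varepsilon/b$ through Cauchy--Schwarz'', which yields at best $c_1 b\sqrt{\dbE[(\widehat\omega-\omega^*)^2]}\leq c_1\sqrt{b\varepsilon}$, i.e.\ an $O(\sqrt{\varepsilon})$ additive slack, not the claimed $C\varepsilon$. Feeding $\textbf{Var}[h]\leq B\dbE[h]+C\sqrt{\varepsilon}$ into the alternative fixed-point lemma produces $\bplaw h_{\widehat f}=O(\sqrt{\varepsilon})$, which destroys the $O(1/n)$ fast rate the theorem asserts. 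Similarly, in Step~3 your bound
$\dbE[\widehat\omega|\widehat\omega-\omega^*|\MD]\leq c_1\sqrt{\dbE[(\widehat\omega-\omega^*)^2\MD]\cdot\dbE[\MD]}$
leaves $\dbE[\MD]$ uncontrolled; closing it via $\widehat\omega\geq\gamma$ (so $\dbE[\MD]\leq\gamma^{-2}\dbE[\widehat\omega^2\MD]$ or $\leq\gamma^{-2}\dbE[\widehat\omega\omega^*\MD]$) picks up a $c_1/\gamma$ factor that does not appear in the theorem's sample-size requirement. The paper avoids both issues by never invoking Cauchy--Schwarz here: it uses the pointwise identity $(\widehat\omega-\omega^*)^2\geq 0$, i.e.\ $\widehat\omega^2\leq(\widehat\omega-\omega^*)^2+2\widehat\omega\omega^*$ and symmetrically ${\omega^*}^2\leq(\widehat\omega-\omega^*)^2+2\widehat\omega\omega^*$, multiplies through by $\MD$, and uses $\MD\leq b$ together with $\dbE[(\widehat\omega-\omega^*)^2]\leq\varepsilon/b$ to get $\dbE[(\widehat\omega-\omega^*)^2\MD]\leq\varepsilon$, hence $\dbE[\widehat\omega^2\MD]\leq 2\dbE[\widehat\omega\omega^*\MD]+\varepsilon$ and likewise for ${\omega^*}^2$. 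This gives both the $O(\varepsilon)$ additive Bernstein slack in Step~1 and the clean $3\varepsilon$ bounds in Step~3 with no $\gamma$ degradation and no quadratic to solve. If you replace your Cauchy--Schwarz estimates with this elementary algebraic inequality (equivalently, Young's $ab\leq\frac{1}{2}a^2+\frac{1}{2}b^2$ applied pointwise before integrating), your proposal becomes correct and matches the paper.
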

\begin{proof} 


We start by defining the following function class:
\begin{equation*}
    \MH \equiv \Delta \circ \omega\cdot \ell \circ \MF \equiv \bigg\{  \Delta \widehat\omega(\bx) \ell( f;f^* \bz) = \widehat\omega(\bx)\ell( f ;\bz) - \widehat\omega(\bx)\ell( f^* ;\bz): f\in\MF\bigg\}
\end{equation*}
which is a composite function of  hypothesis class $\MF$, loss function  $\ell$ and the difference operation $\Delta$. 

To  simplify the notation,  we denote $\Delta_{\ell,f} = \Delta \widehat\omega (\bx) \ell (f;f^*,\bz)$ when there is no ambiguity. Due to the fact that $\widehat \omega(\bx) \ell(\widehat f;\bz)$ is the empirical minimizer, we have:
\begin{align*}
\bplaw_n\widehat \omega(\bx) \ell(\widehat f;\bz) \leq \bplaw_n\widehat \omega (\bx)\ell(f^*;\bz),
\end{align*}
and thus $\bplaw_n\Delta_{\ell,\widehat f} \leq 0$.
Leveraging such fact, next we show that $\bplaw\Delta_{\ell,\widehat f} $ is small  via an empirical process argument. Note by assumption we have
\begin{align*}
    \bplaw\Delta_{\ell,\widehat f}  = \dbE_{\bx,y} [\widehat\omega(\bx) (\ell( \widehat f,\bz) -\ell( f^*,\bz))] = \dbE_{\bx} [\widehat\omega(\bx) \omega^*(\bx)  \MD(f^*(\bx),\widehat f(\bx)) ],
\end{align*}
and 
\begin{align*}
&\bplaw\Delta^2_{\ell,\widehat f} - (\bplaw\Delta_{\ell,\widehat f})^2 \\
= & \textbf{Var}_{\bz} [\widehat\omega(\bx) (\ell( \widehat f,\bz) -\ell( f^*,\bz))]\\ \leq &  \dbE_{\bz} [\widehat\omega^2(\bx) (\ell( \widehat f,\bz) -\ell( f^*,\bz)) ^2 ]\\
\leq & \dbE_{\bx} [\widehat\omega^2(\bx)  \MD(f^*(\bx),\widehat f(\bx)) ].
\end{align*}
Since $\dbE_{\bx} [(\widehat\omega (\bx) - \omega^*(\bx))^2] \leq \frac{\varepsilon}{b}$, and $ \MD(f^*(\bx),\widehat f(\bx))  \leq b$ we have $\dbE_{\bx} [(\widehat\omega (\bx) - \omega^*(\bx))^2 \MD(f^*(\bx),\widehat f(\bx))  ] \leq \varepsilon$, which implies that $\dbE_{\bx} [\widehat\omega^2 (\bx) \MD(f^*(\bx),\widehat f(\bx)) ] \leq 2 \dbE_{\bx} [\widehat\omega (\bx) \omega^* (\bx)  \MD(f^*(\bx),\widehat f(\bx)) ] + \varepsilon$.
To apply Lemma~\ref{Bartlett_Lemma_variant} next we find a subroot function $\psi(r)$ that
$$\psi(r) \geq 2 \mathbb{E}{\bplaw_n\{ \Delta_{\ell,\widehat f}\in\mathcal{H}: \mathbb{E}[h^2 ]\leq r\}}.$$
Note, we have $h(\bx) \in [-c_1a,c_1a]$.

To find $\psi(r)$, we first analyze on the Local Rademacher Average $\mathbb{E}{\MFR_n\{\Delta_{\ell,\widehat f}\in\mathcal{H}: \mathbb{E}[h^2 ]\leq r\}}$.
\begin{equation*}
    \begin{split}
    \mathbb{E}\MFR_n(\Delta \circ \omega\cdot\ell \circ \MF ,r) =& \mathbb{E}_{S_n \sigma_{1:n}}\bigg[\sup_{f\in \MF,\mathbb{E}_{\boldsymbol x,y}[\Delta^2_{\ell, f}  ]\leq r} \frac{1}{n}\sum_{i=1}^{n} \sigma_i \Delta_{\ell, f}\bigg].
    \end{split}
\end{equation*}
By Lemma 3.4 from ~\cite{bartlett2005local}, it suffices to choose $\psi(r)\triangleq 10c_1 a\mathbb{E} \MFR_n \{*\MH,  \plaw h^2\leq r\} + \frac{11c^2_1 a^2\log n }{n} .$
The following analysis largely follows from the proof in Corollary 3.7 in~\cite{bartlett2005local} which aims to bound $\mathbb{E} \MFR_n \{*\MH,  \bplaw h^2\leq r\}$ using $\mathbb{E} \MFR_n \{*\MH,  \bplaw_n h^2\leq r\}$ .  Since $\Delta_{\ell,\widehat f}$ is uniformly bounded by $2c_1a$, for any $r \geq \psi(r)$, Corollary 2.2 in~\cite{bartlett2005local} implies that with probability at least $1-\frac{1}{n}$, $\{h\in * \MH: \bplaw h^2 \leq r\} \subseteq \{h\in * \MH: \bplaw_n h^2\leq 2r\}$. Let $\mathcal{E}$ be event that $\{h\in * \MH: \bplaw h^2 \leq r\} \subseteq \{h\in * \MH: \bplaw_n h^2\leq 2r\}$ holds, above implies 
\begin{equation*}
\begin{split}
\mathbb{E} \MFR_n \{*\MH,  \bplaw h^2\leq r\} & \leq  \mathbb{P}[\mathcal{E}] \mathbb{E}[ \MFR_n \{*\MH,  \bplaw h^2\leq r\}| \mathcal{E} ]+  \mathbb{P}[\mathcal{E}^{c}] \mathbb{E}[ \MFR_n \{*\MH, \bplaw h^2\leq r\}| \mathcal{E}^{c} ] \\
    & \leq  \mathbb{E}[ \MFR_n \{*\MH, \bplaw_n h^2\leq 2r\} ] + \frac{2 c_1^2a^2}{n}.
    \end{split}
\end{equation*}
Since $r^*=\psi(r^*)$, $r^*$ satisfies 
\begin{equation}\label{eq_r_star_upper_3}
r^* \leq 100 c_1a \mathbb{E}\MFR_n\{ *\MH, \bplaw_n h^2 \leq 2 r^*\} + \frac{50 c_1^2a^2\log n }{n}.    
\end{equation}

Next we leverage Dudley's chaining bound~\citep{dudley2014uniform} to upper bound $ \mathbb{E}\MFR_n\{ *\MH, \bplaw_n h^2 \leq 2 r^*\} $ using the integral of covering number. 

Apply the chaining bound, it follows from~\cite{bartlett2005local} Theorem B.7 that
 \begin{equation} \label{entropy_int_3}
\mathbb{E}[\MFR_n (*\MH, \bplaw_n h^2\leq 2 r^*)]
     \leq \frac{\text{const}}{\sqrt{n}} \mathbb{E} \int_{0}^{\sqrt{2 r^*}} \sqrt{ \log \mathcal{N}_2 (\varepsilon, *\MH, \boldsymbol x_{1:n})} d\varepsilon,
 \end{equation}
 where $const$ is some universal constant.  
 Next we bound the covering number  $\log \mathcal{N}_2 (\varepsilon, \MH, \boldsymbol x_{1:n})$ by $\log \mathcal{N}_2 (\varepsilon, \MF, \boldsymbol x_{1:n})$. We show that for all $\bx_{1:n}$, any $\varepsilon /c_1L$-cover of $\MF$ is a $\varepsilon$-cover of $\MH$. Specifically, let $\MV \subset [-1,1]^n$ be an $\varepsilon/Lc_1$-cover of $\MF$ on $\bx_{1:n}$ so that for all $f \in \MF$, $\exists \bv_{1:n} \in \MV$ so that $\sqrt{\frac{1}{n} \sum_{i\in[n]} (f(\bx_i)- v_i)^2    } \leq \frac{\varepsilon}{c_1L}$. Now we show that for any $\bz_{1:n}$, the family of $ \widehat \omega(\bx_i) (\ell( \bv_i ,\bz_i)  - \ell( f^*(\bx_i) ,\bz_i)), i \in[n]$ is an $\varepsilon$-cover for $\MH$:
\begin{align*}
    \sqrt{\frac{1}{n} \sum_{i\in[n]} \bigg(\widehat \omega(\bx_i) (\ell( \bv_i ,\bz_i)  - \ell( f^*(\bx_i) ,\bz_i)
      )- \Delta_{\ell,f} \bigg)^2    } 
      & = \sqrt{\frac{1}{n} \sum_{i\in[n]} \bigg(\widehat \omega(\bx_i) (\ell( \bv_i ,\bz_i)  - \ell( \widehat f(\bx_i) ,\bz_i)
      ) \bigg)^2    } \\
    & \leq  \sqrt{\frac{L^2}{n} \sum_{i\in[n]} \widehat \omega^2(\bx_i)( \bv_i -f^*(\bx_i) )^2   } 
    \leq 4\varepsilon.
 \end{align*}

Next, combining the above inequality with Corollary 3.7 from~\cite{bartlett2005local}, we have the following inequality on the entropy number:
$$
\log \mathcal{N}_{2}(\varepsilon, *\MH,\boldsymbol x_{1:n})\nonumber  \leq \log \bigg\{ \mathcal{N}_{2} \bigg(\frac{\varepsilon}{2}, \MH, \boldsymbol x_{1:n} \bigg) \bigg( \ceil{\frac{2}{\varepsilon}} + 1\bigg) \bigg\} \leq \log \bigg\{ \mathcal{N}_{2} \bigg(\frac{\varepsilon}{8c_1L}, \MF, \boldsymbol x_{1:n} \bigg) \bigg( \ceil{\frac{2}{\varepsilon}} + 1\bigg) \bigg\}.
$$
Next we bound  $\frac{\text{const}}{\sqrt{n}} \mathbb{E} \int_{0}^{\sqrt{2 r^*}} \sqrt{ \log \mathcal{N}_2 (\varepsilon, *\MH, \boldsymbol x_{1:n})} d\varepsilon$  from ~\eqref{entropy_int_3}. Note that by Haussler's covering number bound~\citep{haussler1995sphere},  we have: $ \log \mathcal{N}_{2} \bigg(\frac{\varepsilon}{8c_1L}, \MF, \boldsymbol x _{1:n} \bigg) \leq cd\log\bigg(\frac{c_1L}{\varepsilon}\bigg)$. Plugging such bound into the entropy integral yields: 
\begin{align*}
\frac{\text{const}}{\sqrt{n}} \mathbb{E} \int_{0}^{\sqrt{2 r^*}} \sqrt{ \log \mathcal{N}_2 (\varepsilon, *\MH, \boldsymbol x_{1:n})} d\varepsilon 
& \leq  \frac{\text{const}}{\sqrt{n}} \mathbb{E} \int_{0}^{\sqrt{2 r^*}} \sqrt{ \log \mathcal{N}_{2} \bigg(\frac{\varepsilon}{2}, \MH, \boldsymbol x _{1:n} \bigg) \bigg( \ceil{\frac{2}{\varepsilon}} + 1\bigg)} d\varepsilon\\
& \leq  \frac{\text{const}}{\sqrt{n}} \mathbb{E} \int_{0}^{\sqrt{2 r^*}} \sqrt{ \log \mathcal{N}_{2} \bigg(\frac{\varepsilon}{8}, \MF, \boldsymbol x _{1:n} \bigg) \bigg( \ceil{\frac{2}{\varepsilon}} + 1\bigg)} d\varepsilon\\
& \leq \text{const} \sqrt{\frac{d(\mathcal{F}) }{n}}  \int_{0}^{\sqrt{2 r^*}} \sqrt{ \log \bigg(\frac{c_1L}{\varepsilon} \bigg)} \\ 
& \leq \text{const} \sqrt{\frac{d(\mathcal{F}) r^* \log (c_1L/r^*)}{n}}\\
& \leq \text{const}  \sqrt{ \frac{d^2(\mathcal{F})}{n^2} + \frac{d(\mathcal{F}) r^* \log (c_1Ln/ed(\mathcal{F}))}{n}},
\end{align*}
where $\text{const}$ represents some universal constant. Together with~\eqref{eq_r_star_upper_3}, one can solve for 
\begin{equation*}
r^* \lesssim \frac{c_1^2a^2d(\MF) \log ({c_1Ln}/d(\MF))}{n}.    
\end{equation*}

Since $\bplaw_n\Delta_{\ell,\widehat f} \leq 0$, by Lemma~\ref{Bartlett_Lemma_variant} we have $\bplaw \Delta_{\ell,\widehat f} \leq 2\varepsilon$ with probability at least $1-\delta$ where the randomness comes from the training data $S_n$.

By assumption we have $\dbE_{\bx} [(\widehat\omega (\bx) - \omega^*(\bx))^2] \leq \frac{\varepsilon}{b}$, and $ \MD(f^*(\bx),\widehat f(\bx))  \leq b$ , with probability at least $1-\delta$,$$\dbE_{\bx} [(\widehat\omega (\bx) - \omega^*(\bx))^2 \MD(f^*(\bx),\widehat f(\bx))   ] \leq \varepsilon,$$ which implies that 
\begin{align*}
&\dbE_{\bx} [ {\omega^*}^2 (\bx) \MD(f^*(\bx),\widehat f(\bx))  ] \leq 2 \dbE_{\bx} [\widehat\omega (\bx) \omega^* (\bx) \MD(f^*(\bx),\widehat f(\bx))  ] + \varepsilon \leq 3\varepsilon \\
&\dbE_{\bx} [ {\widehat \omega}^2 (\bx) \MD(f^*(\bx),\widehat f(\bx)) ] \leq 2 \dbE_{\bx} [\widehat\omega (\bx) \omega^* (\bx) \MD(f^*(\bx),\widehat f(\bx))  ] + \varepsilon \leq 3\varepsilon .   
\end{align*}

It can be easily verified that $$\dbE_{\bx} [ {\widehat \omega}^2 (\bx) \MD(f^*(\bx),\widehat f(\bx))  ] \leq 3\varepsilon  \implies\dbE_{\bx} [  \MD(f^*(\bx),\widehat f(\bx))  | \widehat\omega (\bx) \geq c] \leq \frac{3\varepsilon}{ c^2\plaw(\widehat \omega (\bx) \geq c)} .$$
\end{proof}

\section{Technical Lemmas}

\begin{lemma}\label{Bartlett_Lemma_variant}
Let $\mathcal{F}$ be a class of functions ranging in $[a,b]$ and assume that there are some functional $T:\mathcal{H} \rightarrow \mathbb{R}^+$ and some constant $B$ such that for every $h\in \mathcal{H}$, $\textbf{Var}(h) \leq T(h) \leq B \plaw[h] + \varepsilon$. Let $\psi$ be a subroot function and $r^*$ be the fixed point of $\psi$. Assume the $\psi$ satisfies, for any $r\geq r^*$,
    $$\psi(r) \geq B \mathbb{E}{\MFR_n\{ h\in\mathcal{H}: T(h) \leq r\}}.$$
    Then with $c_1=704$ and $c_2=26$, for any $K> 1$ and every $t>1$ with probability at least $1-e^{-t}$,
    \begin{equation*}
        \forall h \in \mathcal{H}, \bplaw[h] \leq \frac{K}{K-1}\bplaw_n h + \frac{c_1K}{B} r^* + \frac{t(11 (b-a) + c_2BK)}{n} + const\cdot \varepsilon
    \end{equation*}
    Also with probability at least $1-e^{-t}$,
    \begin{equation*}
        \forall h \in \mathcal{H}, \bplaw_n[h] \leq \frac{K+1}{K}P h + \frac{c_1K}{B} r^* + \frac{t(11 (b-a) + c_2BK)}{n}+ const\cdot \varepsilon
    \end{equation*}
where $Pf =\mathbb{E}_{\boldsymbol x}[h(\boldsymbol x)]$  and $\bplaw_n = \frac{1}{n} \sum_{i=1}^n h(\boldsymbol x_i).$
\end{lemma}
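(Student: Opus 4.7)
The plan is to adapt the proof of Theorem 3.3 in \cite{bartlett2005local}, whose statement and constants $c_1=704,c_2=26$ this lemma reproduces verbatim apart from the additional $\text{const}\cdot\varepsilon$ term. The only substantive difference in the hypothesis is that the Bernstein-type condition is relaxed from $T(h)\leq B\plaw h$ to $T(h)\leq B\plaw h+\varepsilon$, so the strategy is to retrace the BBM argument and track how the extra $\varepsilon$ propagates through each step.

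First I would establish a uniform fluctuation bound via Talagrand's concentration inequality. Fix $r\geq r^*$ and set $Z_r \triangleq \sup_{h\in\MH,\,T(h)\leq r}(\plaw h-\plaw_n h)$. Because $T(h)$ dominates $\textbf{Var}(h)$, Talagrand's inequality yields, with probability at least $1-e^{-t}$,
\[ Z_r \;\leq\; 2\,\mathbb{E} Z_r + \sqrt{\tfrac{2rt}{n}} + \tfrac{(b-a)t}{n}. \]
Symmetrization gives $\mathbb{E} Z_r\leq 2\,\mathbb{E}\MFR_n\{h:T(h)\leq r\}\leq 2\psi(r)/B$ by hypothesis, and the subroot property of $\psi$ implies $\psi(r)\leq\sqrt{r\cdot r^*}$ for $r\geq r^*$. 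Combining these and applying AM--GM to split the $\sqrt{r\,r^*}/B$ and $\sqrt{rt/n}$ contributions into an affine form produces a bound $Z_r\leq r/K+c(K)\bigl[r^*/B+(b-a)t/n\bigr]$ for any $K>1$, where $c(K)$ can be tracked explicitly to recover the BBM constants.

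Second I would peel over the level sets of $T(h)$. For $k\geq 0$ define $\MH_k\triangleq\{h:T(h)\leq 2^kr^*\}$ and apply the fluctuation bound on each shell with $t$ replaced by $t+k\log 2$; a union bound then yields the same $1-e^{-t}$ guarantee uniformly in $k$. For any $h\in\MH$, choosing the index with $2^{k-1}r^*<T(h)\vee r^*\leq 2^kr^*$ and then invoking the relaxed Bernstein condition $T(h)\leq B\plaw h+\varepsilon$ turns the affine fluctuation bound into an inequality
\[ \plaw h-\plaw_n h \;\leq\; \tfrac{1}{K}\bigl(B\plaw h+\varepsilon+r^*\bigr)+c(K)\bigl[r^*/B+(b-a)t/n\bigr]. \]
Rearranging to isolate $\plaw h$ on the left and using $K/(K-1)$ as the multiplier on $\plaw_n h$ yields exactly
\[ \plaw h \;\leq\; \tfrac{K}{K-1}\plaw_n h+\tfrac{c_1K}{B}r^*+\tfrac{t\,(11(b-a)+c_2BK)}{n}+\text{const}\cdot\varepsilon, \]
in which the $\text{const}\cdot\varepsilon$ summand is precisely the carry-over of the additive slack from the Bernstein condition. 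The dual bound on $\plaw_n h$ in terms of $\plaw h$ then follows from an identical argument with the roles of $\plaw$ and $\plaw_n$ swapped, using the matching one-sided Talagrand inequality.

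The main obstacle is that the elegant scaling trick of BBM, $\tilde h=rh/(r\vee T(h))$, relies on $T$ being quadratically homogeneous so that $T(\tilde h)\leq r$ automatically; this device is no longer available once the Bernstein condition carries an additive $\varepsilon$, because such an $\varepsilon$ is not preserved under rescaling of $h$. The workaround is the direct peeling argument above, whose only cost is converting the localized complexity into a shell-wise concentration through a union bound in $k$. Verifying that the absolute constants produced by this peeling remain $c_1=704$ and $c_2=26$ requires careful bookkeeping of the AM--GM splitting, but the additive $\varepsilon$ enters only through a single $\mathcal{O}(\varepsilon/K)$ term that is absorbed at the last step of the affine inversion, accounting for its appearance as $\text{const}\cdot\varepsilon$ in the final conclusion.
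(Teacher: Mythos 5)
Your proposed route abandons the Berman--Bartlett--Mendelson (BBM) scaling device in favor of a shell-wise peeling with a per-shell Talagrand bound and a union bound, on the grounds that the additive $\varepsilon$ in $T(h)\leq B\plaw h+\varepsilon$ breaks the scaling argument. That premise is incorrect, and the paper's proof shows exactly why: the BBM rescaling $\tilde h = \tfrac{r}{w(h)}h$ never invokes the Bernstein relation to control $\tilde h$. It only needs $\textbf{Var}(\tilde h)\leq r$, and that follows from $\textbf{Var}(h)\leq T(h)$ alone since $\textbf{Var}(\tilde h)=(r/w(h))^2\textbf{Var}(h)\leq r^2/T(h)\leq r$. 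The Bernstein condition is used only once, at the final inversion, applied to the \emph{unscaled} $h$: from $V_r^+\leq \tfrac{r}{\lambda BK}$ and $T(h)>r\lambda^{k-1}$ one gets $\plaw h\leq\plaw_n h+\lambda^k V_r^+ <\plaw_n h+\tfrac{\lambda T(h)V_r^+}{r}\leq\plaw_n h+\tfrac{T(h)}{BK}\leq\plaw_n h+\tfrac{\plaw h}{K}+\tfrac{\varepsilon}{BK}$, and rearranging produces precisely the additive $\text{const}\cdot\varepsilon$ term. So the paper's Lemma 3.8-style claims go through verbatim with a single extra $\varepsilon/(K-1)$ carried along, and nothing else in the BBM machinery (the $V_r^\pm$ concentration via Theorem 2.1, the geometric shell decomposition of $\dbE\MFR_n\MG_r$, the sub-root fixed point) needs modification.

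Beyond the misconception, your alternative is not innocuous on constants. A union bound over geometric shells with $t\mapsto t+k\log 2$ changes the $t$-linear term in the concentration bound (each shell picks up an extra $O(k\log 2)$), and there is no reason to expect the bookkeeping to reproduce $c_1=704$, $c_2=26$, which are specific artifacts of the single-Talagrand-application-plus-scaling route. Your write-up asserts these constants survive "careful bookkeeping" but does not carry it out, and as stated this is a gap: the lemma fixes $c_1$ and $c_2$ numerically. The direct peeling strategy can certainly deliver a result of the same \emph{form}, but to match the statement you either need to redo BBM's constant accounting under the union-bound accounting (which will generically give different numbers) or, more simply, recognize that the scaling trick works and the extra $\varepsilon$ enters only in the inversion, which is what the paper does.
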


\begin{proof}
    The proof is similar to the proof of Theorem 3.3 from ~\cite{bartlett2005local} except that here we are modifying some step so as to apply the argument under the condition $T(h) \leq B \plaw h + \varepsilon$, instead of the original condition $T(h) \leq B \plaw f$.  We introduce notations and concepts: given class $\MH$, $\lambda>1$ and $r>0$, we let $w(h)=\min\{r\lambda ^k, k\in \mathbb{N}, r\lambda^k\geq T(h)\}$ and set $$\MG_r= \bigg\{ \frac{r}{w(h)}h, h\in \MH \bigg\}.$$
    And similar to~\cite{bartlett2005local} we define 
    $$V_r^+ = \sup\limits_{g\in\MG_r} \{\plaw g-\plaw_n g\}, V_r^- = \sup\limits_{g\in\MG_r} \{\plaw_n g-\plaw g\}.$$
    Next we modify the proof step of Lemma 3.8 from ~\cite{bartlett2005local}. Suppose $K>1,\lambda>0$ and $r>0$. We aim to prove the following two claims:
    \begin{align}
    &\text{if $V^+_r\leq \frac{r}{\lambda B K}$ then } \forall f \in \MF, \plaw f \leq \frac{K}{K-1} \plaw_n f + \frac{r}{\lambda BK} +  \frac{\varepsilon}{K-1}; \label{claim1} \\
    & \text{if $V^-_r\leq \frac{r}{\lambda B K}$ then } \forall f \in \MF, \plaw_n f \leq \frac{K+1}{K} \plaw f + \frac{r}{\lambda BK}   + \frac{\varepsilon}{K}. \label{claim2}
    \end{align}
    When $T(h)<r$, we use the same conclusion as the one in Lemma 3.8 in ~\cite{bartlett2005local}:
    $$\bplaw h \leq \bplaw_n h + V^+_r \leq \bplaw_n h + \frac{r}{\lambda BK}.$$
    In the case $T(h)>r$, we have $w(h)=r\lambda ^k$ with $k>0$ and $T(h) \in (r\lambda ^{k-1},r\lambda^k]$. Moreover, $g=\frac{h}{\lambda^k}$, $\plaw g \leq \plaw_n g + V^+_r$ thus $\frac{\plaw h}{\lambda^k} \leq \frac{\plaw_n h}{\lambda ^k} + V_r^+$. Since $T(h) >r\lambda^{k-1}$, we have:
    \begin{align*}
        &\plaw h \leq \plaw_n h + \lambda^kV^+_r < \plaw_n h +  \frac{\lambda T(h)V_r^+}{r} \leq \plaw_n h + \frac{\plaw h}{K} + \frac{\varepsilon}{K}\\
        & \implies \plaw h\leq \frac{K}{K-1} \plaw_n h  + \frac{\varepsilon}{K-1} + \frac{r}{\lambda B K}
    \end{align*}
    Let $r\geq r^*$, applying Theorem 2.1 from ~\cite{bartlett2005local}, we have for all $0<\delta \leq 1$, with probability at least $1-\delta$:
    \begin{align*}
        V_r^+ \leq 2(1+\alpha) \dbE \MFR_n \MG_r + \sqrt{\frac{2r \log (1/\delta)}{n}} + (b-a) (\frac{1}{3} + \frac{1}{\alpha}) \frac{\log(1/\delta)}{n}.
    \end{align*}
    Let $\MH(u,v)\triangleq \{h\in MH: u\leq T(h)\leq v\}$ and define $k$ to be the smallest integer that $r \lambda^{k+1} \geq Bb + \varepsilon$. By assumption we have $T(h) \leq  B\dbE [h] + \varepsilon $, and $\psi(r)$ be a sub-root function that $\phi(r) \geq B \dbE \MFR_n\{h \in \MH:T(h) \leq r\}$ we have:
    \begin{align*}
        \dbE \MFR_n\MG_r \leq &~\dbE \MFR_n\MH(0,r) + \dbE \sup_{h \in \MH(0,Bb + \varepsilon)} \frac{r}{w(h)} \MFR_n h \\
        \leq &~\dbE \MFR \MH(0,r) + \sum_{j= 0}^{k}  \dbE \sup_{h \in \MH(r \lambda^j,r \lambda^{j+1})} \frac{r}{w(h)} \MFR_n h = \dbE \MFR \MH(0,r) + \sum_{j= 0}^{k}   \lambda ^{-j} \dbE \sup_{h \in \MH(r \lambda^j,r \lambda^{j+1})}\MFR_n h \\
        \leq &~\frac{\psi(r)}{B} + \frac{1}{B} \sum_{j=0}^{k} \lambda^{-j} \psi(r \lambda^{j+1}).
    \end{align*}
Since $\psi$ is a sub-root function, we have for all $\beta \geq 1$, $\psi(\beta r ) \leq \sqrt{\beta}\psi (r)$. Hence,
\begin{equation*}
    \dbE \MFR_n \MG_r \leq \frac{1}{B} (1+\sqrt{\lambda} \sum_{j=0}^{\infty} \lambda ^{-j/2}).
\end{equation*}
Similar to ~\cite{bartlett2005local} we can setting $\lambda=4$ to bound RHS by $\frac{5\psi(r)}{B}$. Since $r\geq r^*\implies \psi(r)\leq \sqrt{r/r^*}\psi(r^*) = \sqrt{rr^*}$, we have:
\begin{equation*}
    V_r+\leq \frac{10(1+\alpha)}{B} \sqrt{rr^*} + \sqrt{\frac{2rx}{n}}+(b-a)(\frac{1}{3}+\frac{1}{\alpha})\frac{\log(1/\delta)}{n}.
\end{equation*}
Next we set $A = 10(1+\alpha)\frac{\sqrt{r^*} }{B} + \frac{2\log(1/\delta)}{n}$ and $C=(b-a)(1/3+1/\alpha) \log(1/\delta)/n$ so that $V+r^+ \leq A\sqrt{r} + C.$
It can be verified that $r$ can be chosen such that $V^+_r \leq \frac{r}{\lambda BK}$. The largest solution of $A\sqrt{r} +C=\frac{r}{\lambda BK}$, denoted as $r_0$. One can verify that $r_0$ is no less than $\lambda^2A^2B^2K^2$ which is no less than $r^*$. Meanwhile $r_0 \leq (\lambda BK )^2 A^2 + 2\lambda BK C$, by the claims in~\eqref{claim1} and~\eqref{claim2}, one can show that for all $h\in \MH$,
\begin{align*}
    &\plaw h \leq \frac{K}{K-1} \plaw_n h + \lambda BKA^2 + 2C + \frac{\varepsilon}{K} \\
     =  & \frac{K}{K-1} \plaw h + \lambda BK (100(1+\alpha^2) \frac{r^*}{B^2} + \frac{20(1+\alpha)}{ B} \sqrt{ \frac{2 \log(1/\delta)r^*}{n}} + \frac{2
     \log(1/\delta)}{n})\\\
     & + (b-a)(\frac{1}{3}+\frac{1}{\alpha}) \frac{\log(1/\delta)}{n} +\frac{\varepsilon}{K} 
\end{align*}
Setting $\alpha = 1/10$ use the fact that $2\sqrt{uv} \leq \frac{u}{\alpha} + \alpha v$ completes the proof.
\end{proof}

Next lemma is largely from  Lemma 5.2 in ~\cite{klein2015number} and ~\cite{zhang2023learning}. We include the proof here for completeness.
\begin{lemma}[Chernoff type lower bound] \label{Chernoff}
Let $X$ be the average of $k$ independent, 0/1 random variables (r.v.).
For any $\epsilon\in(0,1/2]$ and $p\in(0,1/2]$,  assuming $\epsilon p k \geq 6, p k \geq 6, \varepsilon \leq \frac{1}{3}$, we have:  
\begin{itemize}
    \item 
        If each r.v. is 1 with probability  $p$, then    
        $$
        \mathbb{P}[X \leq (1-\epsilon)p]
        ~\geq~ 
        0.2e^{{-6\epsilon^2 pk}}.$$
    \item
        If each r.v. is 1 with probability  $p$, then
        
    $$
    \mathbb{P}[X\ge (1+\epsilon)p]
    ~\ge~ 
    0.2e^{{-6\epsilon^2 pk}}.$$        
\end{itemize}
\end{lemma}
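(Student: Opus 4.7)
Write $S = kX = \sum_{i=1}^{k} X_i$, so $S \sim \mathrm{Binomial}(k,p)$, and treat the two tails symmetrically; I will describe the upper tail in detail since the lower tail is analogous. The core idea is a standard ``single-term'' lower bound: pick an integer $m$ lying just above (resp.\ below) the threshold $(1\pm\epsilon)pk$, note that $\mathbb{P}[S\ge (1+\epsilon)pk]\ge \mathbb{P}[S=m]$, and lower bound the single binomial mass $\binom{k}{m}p^m(1-p)^{k-m}$. Concretely, I would take $m=\lceil (1+\epsilon)pk\rceil$, so that $(1+\epsilon)pk\le m\le (1+\epsilon)pk+1$; the hypotheses $\epsilon pk\ge 6$ and $pk\ge 6$ guarantee $m/k$ stays well inside $(0,1)$, and in particular $m,\,k-m\ge 3$, which lets Stirling's approximation apply cleanly.

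Next I would invoke Robbins' sharp form of Stirling, $n!=\sqrt{2\pi n}\,(n/e)^n e^{r_n}$ with $1/(12n+1)<r_n<1/(12n)$, to obtain a two-sided estimate
\[
\binom{k}{m} p^m (1-p)^{k-m} \;\ge\; \frac{C}{\sqrt{2\pi k\hat q(1-\hat q)}}\,\exp\!\big(-k\,D(\hat q\,\|\,p)\big),
\]
where $\hat q=m/k$, $D(\cdot\|\cdot)$ is the binary KL divergence, and $C$ is an explicit constant arising from the Robbins remainders. Using $\hat q\le (1+\epsilon)p+1/k\le 2p\le 1$ and $k\hat q(1-\hat q)\le k$, together with $\epsilon pk\ge 6$ and $pk\ge 6$, I would verify that the prefactor $C/\sqrt{2\pi k\hat q(1-\hat q)}$ is at least $0.2$ (this is where the ``$0.2$'' in the statement comes from).

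The remaining task is to upper bound the exponent $k D(\hat q\|p)$ by $6\epsilon^2 pk$. I would expand
\[
D(\hat q\|p)=\hat q\log\tfrac{\hat q}{p}+(1-\hat q)\log\tfrac{1-\hat q}{1-p},
\]
substitute $\hat q=(1+\tilde\epsilon)p$ with $\tilde\epsilon\in[\epsilon,\epsilon+1/(pk)]$, and use the elementary inequalities $\log(1+t)\le t-t^2/2+t^3/3$ and $-\log(1-t)\le t+t^2$ valid on $[0,1/2]$. Under the constraints $\epsilon\le 1/3$ and $p\le 1/2$ one gets $D(\hat q\|p)\le 3\tilde\epsilon^2 p$ (with room to spare), and then $\tilde\epsilon\le \epsilon(1+1/(\epsilon pk))\le \epsilon\cdot(1+1/6)$ from the hypothesis $\epsilon pk\ge 6$, so $k D(\hat q\|p)\le 6\epsilon^2 pk$ as required. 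The lower-tail case is identical with $m=\lfloor (1-\epsilon)pk\rfloor$ and Taylor expansion of $D((1-\tilde\epsilon)p\|p)$ on the other side.

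The main obstacle is purely bookkeeping: tracking the constants through Stirling and the KL Taylor expansion so that the explicit numbers $0.2$ and $6$ come out, and handling the rounding error from replacing $(1\pm\epsilon)pk$ by an integer $m$. Both are absorbed by the slack provided by the hypotheses $\epsilon pk\ge 6$, $pk\ge 6$, and $\epsilon\le 1/3$; there is no conceptual difficulty beyond careful case analysis, and no probabilistic tool beyond a one-term lower bound on the binomial mass.
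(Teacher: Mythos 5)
Your single-term strategy has a genuine gap. You propose to lower bound $\mathbb{P}[S\ge(1+\epsilon)pk]$ by the \emph{single} binomial mass $\mathbb{P}[S=m]$ with $m=\lceil(1+\epsilon)pk\rceil$, and you claim that the Stirling prefactor $C/\sqrt{2\pi k\hat q(1-\hat q)}$ is $\ge 0.2$. That claim is false in general: with $\hat q\approx(1+\epsilon)p$ this prefactor is of order $1/\sqrt{pk}$, and the hypotheses of the lemma only bound $pk$ \emph{from below} ($pk\ge 6$), not from above, so the prefactor can be made arbitrarily small. One cannot rescue this through the exponent either: with $kD(\hat q\|p)\approx\epsilon^2 pk/(2(1-p))\le \epsilon^2 pk$, the inequality you would need is $e^{5\epsilon^2 pk}/\sqrt{pk}\gtrsim 0.2$, and taking $\epsilon$ near the allowed minimum $6/(pk)$ gives $\epsilon^2 pk=36/(pk)\to 0$ as $pk\to\infty$, so the left side behaves like $1/\sqrt{pk}\to 0$. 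Concretely, $\epsilon=6/(pk)$ with $pk$ of order a few hundred already violates the target bound; a one-term estimate simply is not enough.

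The paper's proof addresses exactly this issue, and the mechanism is not bookkeeping but the heart of the argument: it lower bounds the tail by the \emph{sum} of about $\epsilon pk$ consecutive binomial masses,
\[
\mathbb{P}[X\le(1-\epsilon)p]\ \ge\ \sum_{i\ \in\ [(1-2\epsilon)pk,\ (1-\epsilon)pk]} \mathbb{P}\!\left[X=\tfrac{i}{k}\right]\ \ge\ \epsilon pk\cdot \mathbb{P}\!\left[X=\tfrac{\ell}{k}\right],\quad \ell=\lfloor(1-2\epsilon)pk\rfloor,
\]
where the last step uses monotonicity of the binomial mass on the near side of the mode. Multiplying by the number of terms $\epsilon pk$ upgrades the Stirling prefactor from $\sim 1/\sqrt{pk}$ to $\sim \epsilon\sqrt{pk}$, and it is this rescaled quantity, combined with the hypotheses $\epsilon pk\ge 6$, $pk\ge 6$, and the slack in the exponent, that produces the absolute constant $0.2$. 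Your KL/Taylor treatment of the exponent is fine in spirit and parallel to the paper's algebraic manipulation of $(k/\ell)^\ell(k/(k-\ell))^{k-\ell}p^\ell(1-p)^{k-\ell}$, but without the range-of-indices summation the constant prefactor cannot be obtained. To repair your argument, replace the one-term bound with the sum over indices between $(1+\epsilon)pk$ and $(1+2\epsilon)pk$ (with the smallest mass at the far endpoint), then carry the extra factor $\epsilon pk$ through the prefactor analysis.
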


\begin{proof}

With Stirling's approximation, with $i!$ approximated by $ \sqrt{2\pi i}(i/e)^ie^\lambda$ with $\lambda\in[1/(12i+1),1/12i]$ one can show:
\begin{equation}\label{stir-approx}
    \displaystyle {k \choose \ell} ~\ge~ \frac{1}{e\sqrt{2\pi\ell}} \Big(\frac{k}{\ell}\Big)^{\ell} \Big(\frac{k}{k-\ell}\Big)^{k-\ell}.
\end{equation}
Since 
$\mathbb{P}[X \leq (1-\epsilon)p]= \sum_{i=0}^{(1-\varepsilon)p}\mathbb{P}[X = \frac{i}{k}]$, it suffices to provide a lower bound for $\sum_{i=(1-2\varepsilon)p}^{(1-\varepsilon)p}\mathbb{P}[X = \frac{i}{k}] $ , where  $\mathbb{P}[X = \frac{i}{k}] = \displaystyle {k \choose i} p^i(1-p)^{k-i}$ \citep{klein2015number}.

To this end, let $\ell = \lfloor(1-2\epsilon)pk\rfloor$. Given the fact that $\varepsilon p k \geq 6$, we have $(1-2\epsilon)pk-1 \leq \ell \leq  (1-2\epsilon)pk.$ We have $\sum_{i=(1-2\varepsilon)p}^{(1-\varepsilon)p}\mathbb{P}[X = \frac{i}{k}]$ is at least $$\varepsilon p k \mathbb{P}[X = \frac{\ell}{k}] = \frac{\varepsilon p k}{e\sqrt{2\pi\ell}} \Big(\frac{k}{\ell}\Big)^{\ell} \Big(\frac{k}{k-\ell}\Big)^{k-\ell} p^\ell(1-p)^{k-\ell}.$$  
From Equation~\eqref{stir-approx} we know that we need to bound $A =  \frac{1}{e}\epsilon p k/ \sqrt{2\pi \ell}$ and $B=
\big(\frac{k}{\ell}\big)^\ell 
\big(\frac{k}{k-\ell}\big)^{k-\ell}
p^\ell (1-p)^{k-\ell}.$
For term $A$, since $\varepsilon pk \geq 6$, $l \leq (1-2\varepsilon )pk$ thus we need $pk \geq \frac{9 e^{-2\varepsilon}}{\varepsilon}$ to get $\frac{2 \varepsilon \sqrt{pk}}{e \sqrt{2\pi (1-2\varepsilon )}} \geq  e^{-\varepsilon^2 pk}$. Since $\varepsilon \leq \frac{1}{3}$, it suffices to have $pk \geq 16$.
To bound $B$ we need to show: 
$$\left( \frac{k}{\ell}\right)^ \ell \left(\frac{k}{k-l} \right)^{k-l} p ^l (1-p)^{k-l} \geq e^{-4\varepsilon^2 pk }.$$

Since $\left( \frac{k}{\ell}\right)^ \ell  p ^ \ell \geq \left(\frac{1}{(1-2 \varepsilon )   }\right)^{l }  $ and $(1-p)^{k-l} \left(\frac{k}{k-l} \right)^{k-l} =  \left(\frac{(1-p)k}{k(1-p) +1+2\varepsilon pk} \right)^{k-l} $ we have:
$$(1-2\varepsilon)^\ell \left ( 1+ \frac{1+2\varepsilon p k }{k(1-p)}\right)^{k-\ell } \leq e ^{-\frac{4\varepsilon^2 p^2 k}{1-p}+2\varepsilon pk -2\varepsilon pk+ 4 \varepsilon^2 pk+2} \leq 7e^{4 \varepsilon^2 pk}.$$
\end{proof}

\end{document}